\documentclass[11pt]{article}
\usepackage[margin=1in]{geometry}

\usepackage[backend=biber,style=alphabetic, 
    maxcitenames=99,  
    mincitenames=99, maxnames=99,  maxalphanames=4,
  minalphanames=3]   
    {biblatex}
\usepackage{amsmath, amssymb, amsthm, mathtools, mathrsfs}

\usepackage{microtype}

\usepackage{graphicx}
\usepackage{booktabs}
\usepackage{enumitem}

\usepackage[hidelinks]{hyperref}

\newtheorem{theorem}{Theorem}[section]
\newtheorem{lemma}[theorem]{Lemma}
\newtheorem{proposition}[theorem]{Proposition}

\theoremstyle{definition}
\newtheorem{definition}[theorem]{Definition}
\newtheorem{remark}[theorem]{Remark}

\newcommand{\E}{\mathbb{E}}

\newcommand{\eps}{\varepsilon}
\newcommand{\opt}{\mathrm{opt}}
\newcommand{\err}{\mathrm{err}}

\title{A Near-optimal SQ Lower Bound for Smoothed Agnostic Learning of Boolean Halfspaces}

\author{
  Tim Sinen\thanks{ Spported by the Lamarr Institute for Machine Learning and Artificial Intelligence, funded by the Federal Ministry of Research, Technology and Space (BMFTR) and the Ministry of Culture and Science of the State of North Rhine-Westphalia.}\\
  University of Bonn, Germany\\
  \texttt{sinen@uni-bonn.de}
}

\date{\today}

\begin{document}

\renewcommand{\thefootnote}{\fnsymbol{footnote}}

\maketitle
\thispagestyle{empty}

\begin{abstract}
  We study the complexity of smoothed agnostic learning of halfspaces on $\{\pm 1\}^n$ under uniform marginals in the model of~\cite{KM25}, where each input coordinate is independently flipped with probability $\sigma \in (0, {1}/{2})$. We show that $L^1$ polynomial regression achieves runtime and sample complexity $\tilde{O}(n^{O(\log(1/\varepsilon)/\sigma)})$, and prove a nearly matching Statistical Query complexity lower bound of $n^{\Omega(\log(1+\sigma/\varepsilon^2)/\sigma)}$. This complements the recent work of~\cite{DK26}, which established analogous bounds in the continuous setting under Gaussian marginals.
\end{abstract}

\section{Introduction}
Agnostic learning, introduced by~\textcite{Hau92} and~\textcite{KSS94}, is one of the most general frameworks for supervised classification. The learner receives labeled examples $(X^{(i)},Y^{(i)})$ drawn i.i.d.\ from an arbitrary joint distribution $\mathcal{D}$ over $\mathcal{X} \times \{\pm 1\}$, where $\mathcal{X}$ is a feature space, and must output a hypothesis $h \colon \mathcal{X} \to \{\pm 1\}$ such that, with probability at least $1 - \delta$, the misclassification error satisfies $\Pr_{(X,Y) \sim \mathcal{D}}[h(X) \neq Y] \leq \opt_\mathcal{C} + \varepsilon$, where $\opt_\mathcal{C} = \inf_{f \in \mathcal{C}} \Pr_{(X,Y) \sim \mathcal{D}}[f(X) \neq Y]$ is the error of the best classifier in the target class $\mathcal{C}.$ Importantly,  no assumption is made on the relationship between labels and features. Typically, the computational complexity of a learning algorithm is measured in terms of its running time and sample complexity.

It is well known that without any distributional assumptions, even for the fundamental class of parities or halfspaces, agnostic learning is computationally intractable~\parencite{GR09, FGKP09}. Under the assumption of structured marginals, positive results are known. Under Gaussian, or more broadly log-concave marginals, the $L^1$ polynomial regression algorithm of~\textcite{KKMS08} provides
the best known agnostic learners for halfspaces
\parencite{KKMS08, DGJSV09, DKN10, DKKTZ21}, but the resulting complexity remains exponential in $1/\eps$, with strong evidence that improving it would require either profound new techniques or had consequences
for other learning problems that are considered hard
\parencite{FGKP09, KK14, Daniely16, GGK20, DKZ20, DKPZ21, DKR23, Tie23}.

One approach to bypass this hardness is the framework of \textit{smoothed agnostic learning}, recently introduced by~\textcite{CKKMS24}, which relaxes the benchmark against which the learner must compete. Informally, in this model with domain $\mathbb{R}^n$, inspired by the smoothed analysis paradigm of~\textcite{ST04}, the learner’s performance is measured relative to $\opt_{\mathcal{C}, \sigma} = \inf_{f \in \mathcal{C}} \Pr_{(X,Y) \sim \mathcal{D}}[f(T_\sigma X) \neq Y]$ rather than $\mathrm{opt}_\mathcal{C}$, where \(T_\sigma X\) denotes an additively perturbed version of \(X\) parameterized by $\sigma > 0$, effectively smoothing the instance space.

 \textcite{KM25} ported this framework to the Boolean hypercube $\{\pm 1\}^n$, where additive perturbations are not well-defined, by replacing them with independent random bit flips of rate $\sigma$. We introduce their framework in more detail.

\begin{definition}[Smoothed Agnostic Learning on the Boolean cube]
    Fix $\eps>0,\sigma\in(0, 1/2)$ and $\delta\in(0,1)$.  Let $\mathcal C\subseteq\big\{\{\pm1\}^n\to\{\pm1\}\big\}$ be a class of Boolean concepts, and let $\mathcal{D}$ be a distribution over $\{\pm1\}^n\times\{\pm1\}$.  We say that an algorithm $A$ \emph{learns $\mathcal{C}$ in the $\sigma$–smoothed setting} if, given i.i.d.\ samples from $\mathcal{D}$, it outputs a hypothesis $h:\{\pm1\}^n\to\{\pm1\}$ such that, with probability at least $1-\delta$,  
\(
\Pr_{(X,Y)\sim\mathcal{D}}\bigl[h(X)\neq Y\bigr]
\le
\mathrm{opt}_{\mathcal{C},\sigma} +\eps,
\)
where
\[
\mathrm{opt}_{\mathcal{C},\sigma}:=
\inf_{f\in\mathcal{C}}\;
\mathbb{E}_{Z\sim B(\sigma)^n}\left [\Pr_{(X,Y)\sim\mathcal{D}}\bigl[
  f(X \odot Z) \neq Y\bigr]
\right].
\]
Here, $Z\sim B(\sigma)^n$ means each coordinate of $Z\in\{\pm 1\}^n$ is independently $-1$ (a flip) with probability $\sigma$, and $x\odot z$ denotes the Hadamard product.
\end{definition}

Note that in this definition, for $\sigma \to 0$ we recover the classical worst-case agnostic learning framework. For $\sigma > 0$, the  idea is that the perturbation effectively limits the expressive power of the hypothesis class $\mathcal{C}$ that the learner must compete against, ideally leading to efficient learners with better sample complexity and runtime bounds than known in the worst case.
Also, we do not require $h$ to be contained in $\mathcal{C}$, so it is enough to provide an \textit{improper} learner. 

\paragraph{Boolean halfspaces.}
    A Boolean halfspace (or linear threshold function) is a function $f \colon \{\pm 1\}^n \to \{\pm 1\}$ of the form $f(x) = \mathrm{sign}\left(\sum_{i=1}^n w_i x_i - \theta\right)$, where $w \in \mathbb{R}^n$ and $\theta \in \mathbb{R}$. The majority function $\mathrm{Maj}_n(x) = \mathrm{sign}\left(\sum_{i=1}^n x_i\right)$ is the canonical example, corresponding to uniform weights and zero threshold. The problem of learning halfspaces in different frameworks has been
studied extensively~\parencite{Rosenblatt58, BEHW89, BFKV98}.
    
    Both in the continuous setting with additive Gaussian perturbations and on the Boolean cube with bit-flip noise, halfspaces become efficiently learnable under a wide class of $\mathcal{X}$-marginals \(\mathcal D_{\mathcal X}\), a phenomenon with no known analogue in the worst-case setting. In more detail, the current best sample-complexity upper bounds for smoothed agnostic learning of halfspaces under subexponential marginals are $n^{\tilde{O}(1/\sigma^2)\log(1/\varepsilon)}$ in the continuous setting \parencite{KW25} and $\tilde{O}\!\left(n^{\mathrm{poly}(1/(\sigma\varepsilon))}\right)$ on the Boolean cube \parencite{KM25}. Notably, both bounds essentially rely on $L^1$ polynomial regression on the smoothed target.
    
    This raises a natural and fundamental question: \begin{center}
        \textit{Is there a way to improve upon the complexity of $L^1$ polynomial regression for learning halfspaces in the smoothed agnostic setting?}
    \end{center}

\paragraph
{SQ lower bounds.} A powerful framework for  studying such questions is the Statistical Query (SQ) model of~\textcite{Kea98}, where, instead of receiving samples, a learner obtains approximate answers to statistical queries from an oracle.
\begin{definition}[STAT oracle]
 For a tolerance parameter $\tau > 0$, $\operatorname{STAT}(\tau)$ is the oracle that, given any 
 query function $h: \mathcal{X} \to [-1, 1]$, returns a value
 \[
 v \in \left[\mathbb{E}_{X \sim \mathcal{D}}[h(X)] - \tau,\, \mathbb{E}_{X \sim \mathcal{D}}[h(X)] + \tau\right].
 \]
\end{definition}
The \emph{SQ complexity} of a learning task is the minimum, over all SQ algorithms solving the task, of $Q/\tau$, where $Q$ is the number of queries and $\tau$ is the minimum tolerance used by the algorithm. SQ 
lower bounds are typically stated as a dichotomy: any SQ algorithm must either use 
$n^{\omega(1)}$ queries to $ \operatorname{STAT}(\tau)$ or require a query of tolerance $\tau = n^{-\omega(1)}$.

Since a query with tolerance $\tau$ can be simulated using $O(1/\tau^2)$ samples, SQ lower bounds directly yield sample complexity bounds for any PAC algorithm that is implementable as an SQ algorithm. Crucially, a great number of known algorithmic techniques in machine learning can be captured in the SQ model --- including gradient descent, spectral methods, and most approaches to convex optimization (for a survey, see~\cite{Reyzin20}). This  makes SQ lower bounds strong unconditional evidence of computational hardness.

In particular, they have been remarkably successful for understanding the limits of distribution-specific agnostic learning. \textcite{DKPZ21} proved that $L^1$ polynomial regression is essentially optimal among SQ algorithms for agnostically learning halfspaces under Gaussian marginals, and more recently,~\textcite{DK26} established the first SQ lower bound for smoothed agnostic learning in the continuous subgaussian setting, providing evidence that even with smoothing, $L^1$ regression cannot be substantially improved, thus answering our question negatively in the continuous case. Their lower bound construction relies on the Non-Gaussian Component Analysis 
(NGCA) framework, which embeds a one-dimensional hard instance into a random 
continuous direction of $\mathbb{R}^n$ while remaining standard Gaussian in 
the orthogonal complement. This approach has no direct analogue on the 
Boolean cube $\{\pm 1\}^n$, where continuous rotations are unavailable, 
leaving the question of SQ hardness in this setting open.

\paragraph{Our contribution.} We give an analogous negative answer to the above question on the Boolean cube. Our results are twofold:

(i) We prove a Statistical Query lower bound for smoothed agnostic learning of Boolean halfspaces on $\{\pm 1\}^n$ under the uniform distribution, which falls within the strictly subexponential class studied by~\textcite{KM25}, ruling out fully polynomial-time SQ algorithms and matching the upper bound achievable by $L^1$ polynomial regression up to a $\log(1/\sigma)$ summand in the exponent.

(ii) As the central technical ingredient, we establish an $L^1$ approximation lower bound for the smoothed majority function $T_{1-2\sigma}\mathrm{Maj}_n$ under the uniform measure on $\{\pm 1\}^n$. The proof proceeds via an explicit closed-form dual witness which combines symmetric trigonometric kernels with weighted Laguerre integration. This construction may be of independent interest for SQ lower bounds and approximation-theoretic problems on the Boolean cube.

In more detail, we have the following results.

\begin{theorem}[informal version of Theorem~\ref{cor:smoothed-agnostic-sq}]
Fix a smoothing parameter $\sigma \in (0, 0.499]$ and let $\varepsilon \ge 1/\operatorname{poly}(n)$. 
Any SQ algorithm that, given access to an unknown distribution on 
$\{\pm 1\}^n \times \{\pm 1\}$ with uniform $\mathcal{X}$-marginal, outputs a hypothesis with 
excess error at most $\varepsilon$ against the best halfspace in the $\sigma$-smoothed 
setting must either make a STAT query of tolerance at most 
$n^{-\Omega(\log(1+\sigma/\varepsilon^2)/\sigma)}$ or make $2^{\Omega(\sqrt n)}$ queries.
\end{theorem}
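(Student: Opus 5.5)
The plan is to follow the standard route from $L^1$ approximation lower bounds to SQ lower bounds, as in~\cite{DKPZ21}, adapted to the Boolean cube. Rotations are not available on the cube, so the embedding uses coordinate subsets. For a set $S\subseteq[n]$ of size $k$, consider the majority $\mathrm{Maj}_S$ on the coordinates in $S$. By self-duality of the noise operator, $\E_Z[\mathrm{Maj}_S(x\odot Z)] = T_{1-2\sigma}\mathrm{Maj}_S(x)$. So for a distribution whose label satisfies $\E[Y\mid X=x]=g_S(x)$, the smoothed error of $\mathrm{Maj}_S$ is $\tfrac12-\tfrac12\E[g_S\,T_{1-2\sigma}\mathrm{Maj}_S]$. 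We will choose $g_S$ depending only on $x_S$, with $|g_S|\le 1$ and every Fourier coefficient of degree less than $d$ equal to zero. Then any fixed query $h$ has small correlation with $g_S$ for most $S$.

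\textbf{Step 1 (dual witness).} Using the $L^1$ approximation lower bound for $T_{1-2\sigma}\mathrm{Maj}_k$ established as the central technical ingredient (its closed-form dual), and LP duality for $L^1$ approximation, we obtain a function $g\colon\{\pm1\}^k\to[-1,1]$ with the following properties. It is orthogonal to all polynomials of degree less than $d$, and
\[
\E[g\,T_{1-2\sigma}\mathrm{Maj}_k]\ge 2\eps' .
\]
Here $d=\Omega(\log(1+\sigma/\eps^2)/\sigma)$ and $\eps'$ is a constant multiple of $\eps$. The pure noise distribution ($Y$ uniform, independent of $X$) has smoothed opt at least $1/2$ for every halfspace. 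The planted distribution has smoothed opt at most $1/2-\eps'$. Any hypothesis with excess error at most $\eps$ therefore distinguishes the two cases, so learning reduces to a decision problem.

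\textbf{Step 2 (SQ dimension).} We take a family of subsets $S$ of size $k$ (take $k\approx\sqrt n$, or a random family) with pairwise small overlaps. For the planted distributions $D_S$, the pairwise correlation relative to the null distribution is $\E_x[g_S(x)g_{S'}(x)]$. Since $g_S$ has no Fourier mass below degree $d$ and $\|g\|_2\le 1$, only characters supported on $S\cap S'$ contribute. Hence the correlation vanishes unless $|S\cap S'|\ge d$.

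A cleaner alternative, if the statement allows it, avoids overlap bookkeeping. For a single query $h$ and a uniformly random $S$, bound the variance
\[
\E_S\big[\E[h\,g_S]^2\big]\le \max_{|T|\ge d}\Pr_S[T\subseteq S]\le (k/n)^{d}.
\]
This uses Parseval, $\sum_T\hat h(T)^2\le1$. With $k=\sqrt n$ the bound is $n^{-d/2}$. Markov's inequality plus the standard SQ argument (statistical dimension with average correlation, Feldman et al.) then show the following. Either some query has tolerance $n^{-\Omega(d)}$, or the algorithm needs many queries. The family size is $\binom{n}{\sqrt n}$, which gives $2^{\Omega(\sqrt n)}$ queries after accounting for correlations.

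\textbf{Step 3 (parameters and constraints).} We must have $d\le k$, so that a degree-$d$ obstruction exists on $k$ coordinates. We also need the dual bound for $T_{1-2\sigma}\mathrm{Maj}_k$ to hold at $k=\sqrt n$. The condition $\eps\ge1/\mathrm{poly}(n)$ together with $\sigma\le0.499$ keeps $d=O(\log n/\sigma)$ compatible with this.

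The main obstacle is Step 1, specifically quantitative tightness. The dual witness must be bounded by $1$ in sup norm (not only $L^1$-normalized), and its correlation must scale like $\eps$ at degree $\log(1+\sigma/\eps^2)/\sigma$. If the approximation lower bound is stated only as $\min_{\deg p<d}\|T_{1-2\sigma}\mathrm{Maj}_k-p\|_1\ge c\eps$, then duality yields the required bounded $g$ automatically, and the argument goes through.
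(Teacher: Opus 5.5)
\paragraph{Where the argument breaks.} You plant the witness on a $k$-subset of coordinates, which is the approximate-resilience (junta) embedding of~\textcite{DFTWW15}. The paper instead keeps the witness on all $n$ coordinates and plants it through sign patterns $\psi_{k,n}(u\odot x)$, controlling correlations by a Krawtchouk analysis. Steps~1 and~3 are fine given Lemma~\ref{lem:dual-witness} (applied on $k$ coordinates) and LP duality, and so is your decision-problem reduction. The gap is Step~2, which never actually produces $2^{\Omega(\sqrt n)}$ queries.
\begin{itemize}
\item Exact vanishing of $\langle g_S,g_{S'}\rangle$ needs overlaps $<d$. A family of $k$-sets with pairwise overlaps $<d$ has at most $\binom{n}{d}/\binom{k}{d}\approx (n/k)^d=n^{O(d)}$ members.
\item Your variance bound $\E_S[\langle h,g_S\rangle^2]\le (k/n)^d$ plus Markov only shows that a query of tolerance $\tau$ separates $D_S$ from the null for at most a $(k/n)^d/\tau^2$ fraction of the $S$. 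Hence $Q\gtrsim \tau^2(n/k)^d=n^{O(d)}$, and the family size $\binom{n}{\sqrt n}$ never enters.
\item For overlaps $\ge d$ you give no quantitative correlation bound. For a non-symmetric $g$, the quantity $\sum_{T\subseteq S\cap S'}\widehat g(T)^2$ can be $\Theta(1)$.
\item $k\approx\sqrt n$ is the wrong scale. The mean overlap is $k^2/n=1$. Each member of a family with pairwise overlaps $\le t$ contains $\binom{k}{t+1}$ private $(t+1)$-subsets, so the family has at most $\binom{n}{t+1}/\binom{k}{t+1}\approx n^{(t+1)/2}$ members. Having $2^{\Omega(\sqrt n)}$ members therefore forces $t=\Omega(\sqrt n/\log n)$. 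At that overlap the correlation bound $(t/k)^d$ derived below is only $(\log n)^{-\Theta(d)}=n^{-o(d)}$, not $n^{-\Omega(d)}$.
\end{itemize}

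\paragraph{How to repair it.} First symmetrize $g$ over $S_k$. This is allowed because $T_\rho\mathrm{Maj}_k$ is symmetric, and the paper's $\psi$ is already symmetric. Then $\widehat g(T)$ depends only on $|T|$. Writing $W_j$ for the level-$j$ Fourier weight and $t=|S\cap S'|$,
\[
0\;\le\;\langle g_S,g_{S'}\rangle\;=\;\sum_{j\ge d}W_j\binom{t}{j}\binom{k}{j}^{-1}\;\le\;\Bigl(\frac{t}{k}\Bigr)^{d}.
\]
Next take $k=n^{3/4}$; any $n^{1/2+\Omega(1)}\le k\le n^{1-\Omega(1)}$ works. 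Overlaps of random $k$-sets are hypergeometric with mean $\sqrt n$. A Chernoff bound plus a union bound gives $e^{\Omega(\sqrt n)}$ sets with all pairwise overlaps $\le 2\sqrt n$, hence pairwise correlations $\le (2n^{-1/4})^d=n^{-\Omega(d)}$. This is the exact analogue of the paper's $\delta=n^{-1/4}$, with $k/n$ playing the role of $\delta$.

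From there, the Cauchy--Schwarz bound $|S_h|\le 2/\eps^2$ and the SDA computation in the proof of Theorem~\ref{thm:hidden-direction-SQ} go through unchanged. Lemma~\ref{lem:dual-witness} on $n^{3/4}$ coordinates still covers $d=O(\log n)$, which is the regime of fixed $\sigma$ and $\eps\ge 1/\mathrm{poly}(n)$.

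\paragraph{Comparison once repaired.} Your route replaces Lemmas~\ref{lem:krawtchouk} and~\ref{lem:asymp} and the coefficient moment bound by the exact nonnegative formula above. It also works with any bounded dual, so LP duality alone suffices. The paper's embedding uses full-dimensional majorities and admits degrees up to $n^{1/2-\eta}$ rather than $k^{1/2-\eta}$.
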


Notice that in the regime $\varepsilon \ll \sigma$, this gives an SQ lower bound of
\(
    n^{\Omega\!\left({\log(1/\varepsilon)}/{\sigma}\right)}
\)
which matches the upper bound below.
For the limit $\sigma \to 0$, we obtain a lower bound of
\(
    n^{\Omega(1/\eps^2)}
\) in the regime $\eps = \Omega(n^{-1/4 + o(1)}),$
which recovers the optimal SQ lower bound for ordinary agnostic learning of
halfspaces in the worst-case setting.

On the upper bound side, we have the following result.

    \begin{theorem}\label{thm:smoothed-upperbound}
There is an algorithm that, for any Boolean concept class $\mathcal{C}$ over $\{\pm1\}^n$, parameters $\varepsilon>0$, $\sigma\in(0,1/2)$, $\delta\in(0,1)$, and any distribution $\mathcal{D}$ on $\{\pm1\}^n\times\{\pm1\}$ with uniform $\mathcal{X}$-marginal, runs in time and samples
\(
\mathrm{poly}\!\left(n^{\log(1/\varepsilon)/\sigma},\,\log(1/\delta)\right)
\)
and, with probability at least $1-\delta$, outputs a hypothesis $h$ with $\operatorname{err}_{\mathcal{D}}(h)\le\operatorname{opt}_{\mathcal{C},\sigma}+\varepsilon$.
\end{theorem}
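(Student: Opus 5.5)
We will run $L^1$ polynomial regression of degree $d = O(\log(1/\varepsilon)/\sigma)$ over the monomial basis $\{\chi_S : |S|\le d\}$ of size $n^{O(d)}$. The analysis follows the standard KKMS argument. It suffices to exhibit, for the comparison concept, a single polynomial of degree $d$ whose $L^1$ distance under the uniform measure to a suitable $[-1,1]$-valued function is at most $\varepsilon$. That function should have error against $Y$ at most $\opt_{\mathcal{C},\sigma}$.

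\textbf{Step 1 (reduction to the smoothed target).} Fix $f\in\mathcal{C}$ nearly attaining the infimum. Define $g(x) := \E_{Z\sim B(\sigma)^n}[f(x\odot Z)] = T_{1-2\sigma} f(x)$, which takes values in $[-1,1]$. For a $\pm1$ label $Y$ we have $\Pr_Z[f(X\odot Z)\ne Y] = (1 - Y g(X))/2$. Hence
\[
\E_{(X,Y)\sim\mathcal D}\Bigl[\tfrac12 |g(X)-Y|\Bigr] = \E_Z\Bigl[\Pr_{(X,Y)\sim\mathcal D}[f(X\odot Z)\neq Y]\Bigr] \le \opt_{\mathcal{C},\sigma}+\eta .
\]
KKMS then yields the following: if some degree-$d$ polynomial $p$ satisfies $\E_{X}|p(X)-g(X)|\le\varepsilon$ under uniform $X$, then the empirical $L^1$ minimizer followed by the best random threshold gives error at most $\opt_{\mathcal{C},\sigma}+O(\varepsilon)$. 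The argument uses the triangle inequality $\E|p-Y|\le \E|g-Y|+\E|p-g|$ and the threshold lemma $\Pr[\mathrm{sign}(p-t)\ne Y]\le \tfrac12\E|p-Y|$ for uniformly random $t\in[-1,1]$. The sample complexity is $\mathrm{poly}(n^{d},1/\varepsilon,\log(1/\delta))$ by uniform convergence for $L^1$ losses over the bounded-degree class, combined with clipping predictions to $[-1,1]$. The runtime is that of a linear program in $n^{O(d)}$ variables.

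\textbf{Step 2 (approximation of the smoothed function).} Note that this step works for every Boolean $f$, not just halfspaces. Take $p := (g)^{\le d} = \sum_{|S|\le d}(1-2\sigma)^{|S|}\hat f(S)\chi_S$, the low-degree truncation. By Cauchy--Schwarz and Parseval,
\[
\E|p-g| \le \|g-p\|_2 = \Bigl(\sum_{|S|>d}(1-2\sigma)^{2|S|}\hat f(S)^2\Bigr)^{1/2} \le (1-2\sigma)^{d}\Bigl(\sum_S \hat f(S)^2\Bigr)^{1/2} = (1-2\sigma)^{d}.
\]
Since $1-2\sigma\le e^{-2\sigma}$, the bound is at most $e^{-2\sigma d}\le\varepsilon$ once $d=\lceil \ln(1/\varepsilon)/(2\sigma)\rceil$. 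This gives the claimed exponent $\log(1/\varepsilon)/\sigma$.

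\textbf{Anticipated difficulty.} There is essentially no analytic obstacle, because the noise operator gives geometric decay of high-degree Fourier mass for free. The only care needed is bookkeeping in Step 1. Specifically, we must verify that the KKMS guarantee applies when the comparison function $g$ is real-valued in $[-1,1]$ rather than Boolean. The identity $\tfrac12\E|g-Y| = $ smoothed error makes this direct. We must also absorb the $\eta$ slack and the estimation error into $\varepsilon$ by rescaling constants, and handle the failure probability $\delta$ by standard repetition and validation, which costs only a $\log(1/\delta)$ factor.
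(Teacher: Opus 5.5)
Your proposal is correct and follows the paper's proof: $L^1$ polynomial regression of degree $d=O(\log(1/\varepsilon)/\sigma)$, with the degree-$d$ Fourier truncation of $T_\rho f$ as the approximator and its $L^2$ error bounded via Parseval and Cauchy--Schwarz. Your explicit check that $\tfrac12\E|T_\rho f(X)-Y|$ equals the smoothed error of $f$ justifies applying the KKMS guarantee to the $[-1,1]$-valued target $T_\rho f$, a step the paper leaves implicit.
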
 

This establishes that the SQ lower bound in Theorem 1.3 is essentially tight. The lower bound is derived within Feldman's SDA/VSTAT framework \parencite{Feldman17}; the upper bound is a direct application of $L^1$ polynomial regression.

\paragraph{Comparison to~\textcite{DK26}.} In the continuous Gaussian setting,~\textcite{DK26} established their SQ lower bound via a Hermite witness for the smoothed sign function, combined with concentration arguments to pass from $L^2$ to $L^1$. This argument fails on the Boolean cube, as the symmetric Fourier modes that diagonalize the noise operator $T_\rho$ are not uniformly bounded. We instead construct the witness within the class of bounded functions, as a Laguerre-weighted average of bounded symmetric trigonometric kernels.

\section{Related Work}
The SQ framework was extended to general computational problems by~\textcite{FGRVX17}, who introduced the notion of statistical dimension
and showed that lower bounds on it directly translate into query
complexity lower bounds.

Our strategy of reducing SQ lower bounds to moment-matching constructions follows the Gaussian-marginal paradigm of~\textcite{DKS17}, which constructs pairs of one-dimensional distributions whose first $m$ moments agree with a reference distribution while remaining far in total variation distance, then embeds this structure into high dimensions. This paradigm has since yielded strong SQ lower bounds across a range of learning problems~\parencite{DKZ20, DKPPS21, DKPZ21, DKS22, DK22, DIKR25, DIKZ25}, most relevantly for smoothed agnostic learning under subgaussian marginals~\parencite{DK26}. The resulting lower bounds essentially match the complexity of 
$L^1$ polynomial regression~\parencite{KKMS08, KOS08, DGJSV09, CKKMS24, KM25}, establishing its near-optimality in the SQ model.

Closest in spirit to our witness construction is the work of~\textcite{DFTWW15}, which introduced the notion of \emph{approximate resilience} for Boolean 
functions and showed that it essentially characterizes the SQ complexity of agnostic learning 
under the uniform distribution on $\{\pm 1\}^n$. The approximate resilience framework was subsequently extended by~\textcite{HSSV22} to obtain near-optimal SQ lower bounds for agnostically learning intersections of halfspaces under Gaussian marginals, by combining it with the moment-matching paradigm of~\textcite{DKPZ21}.

Our sine-Laguerre witness can be viewed as an explicit, sharp 
certificate of approximate resilience for $T_\rho\operatorname{Maj}_n$.

These connections are not coincidental: at the heart of all such lower bounds lies the question 
of how well a degree-$m$ polynomial can approximate a target function in a weighted $L^p$ norm. 
This theme appears across several areas. For the $L^\infty$ norm,~\textcite{She18} gave an 
explicit construction of the hardest halfspace, with matching approximate-degree lower bounds 
completing a line of work originating with~\textcite{MK61}. \textcite{BT13} and~\textcite{BKT18} developed systematic dual-polynomial methods for approximate-degree 
lower bounds, whose central technical task --- bounding the $L^1$ norm of a dual witness 
constructed from orthogonal polynomials --- is structurally analogous to our normalization 
problem in Lemma~\ref{lem:witness}.

In distribution property estimation,~\textcite{VV17} introduced
Chebyshev-polynomial-based earthmoving schemes over Poisson
fingerprints to establish optimal estimators and matching lower
bounds for entropy and support size;~\textcite{WY19} made the role of Chebyshev polynomials and moment matching fully explicit.

\section{Preliminaries}
We briefly review the most relevant definitions, facts and notation needed for a high-level overview of our techniques.

\paragraph{Notation and function spaces.} For $n \ge 1$, we set $[n] = \{1,\dots,n\}$. For a measure $\mu$ on a domain $\Omega$ and $p \ge 1$, $L^p(\mu)$ denotes the space of measurable functions $f : \Omega \to \mathbb{R}$ with $\|f\|_{L^p(\mu)} := \left(\int |f|^p \, d\mu\right)^{1/p} < \infty$, and we set $\|f\|_{L^\infty(\mu)} = \operatorname{ess\,sup}|f|$. The space $L^2(\mu)$ is a Hilbert space under the inner product $\langle f, g\rangle_{L^2(\mu)} = \int fg \, d\mu$, which equals $\mathbb{E}_{X \sim \mu}[f(X) g(X)]$ when $\mu$ is a probability measure. When $\mu$ is clear from context we write $\|f\|_p$ and $\langle f, g\rangle$. Probability distributions are denoted by $\mathcal{D}$, with subscripts and decorations (e.g.\ $\mathcal{D}_u$, $\mathcal{D}_0$) for indexed families;  $\mathcal{D}_\mathcal{X}$ denotes the $\mathcal{X}$-marginal of a joint distribution $\mathcal{D}$ over $\mathcal{X}\times \{\pm1\}$, and we use script letters (e.g.,\ $\mathscr{A}, \mathscr{D}$) for classes of distributions. We write $U_n$ for the uniform probability measure on the Boolean cube $\{\pm 1\}^n$, so that $\|f\|_{L^p(U_n)} = \mathbb{E}_{X \sim U_n}\left[|f(X)|^p\right]^{1/p}$. Finally, $\mathcal{H}_n$ denotes the class of halfspaces on $\{\pm 1\}^n$, and for $m \ge 0$, $\mathcal{P}_{\le m}$ is the space of multilinear polynomials on $\{\pm 1\}^n$ of degree at most $m$.

\paragraph{Fourier-Walsh analysis.} Over the Boolean cube $\{\pm 1\}^n$, for $S \subseteq [n]$, we set $\chi_S(x) = \prod_{i \in S} x_i$. The collection $\{\chi_S\}_{S \subseteq [n]}$ forms an orthonormal basis of $L^2(U_n)$, that is, every $f: \{\pm1\}^n\to \mathbb{R}$ admits a unique expansion\[f(x) = \sum_{S \subseteq [n]} \hat{f}(S) \chi_S(x)\] where $\hat{f}(S) = \mathbb{E}_{X \sim U_n}[f(X)\chi_S(X)].$ The degree of the expansion is $\deg(f) = \max\{|S| : = \hat{f}(S) \neq 0\}$. Parseval's identity  gives $\|f\|_{2}^2 = \sum_S \hat{f}(S)^2$. $f:\{\pm 1\}^n\to\mathbb{R}$ is called \emph{symmetric} if 
$f(x_{\pi(1)},\ldots,x_{\pi(n)}) = f(x_1,\ldots,x_n)$ for every permutation $\pi \in S_n$. 
For such functions, it is convenient to organize the Fourier basis by level. The \textit{symmetric Fourier mode} of level $k$ is 
\[ \Psi_{k,n}(x) = \binom{n}{k}^{-1/2}\sum_{|A|=k}\chi_A(x),\quad 0 \le k \le n
\] and the family $\{\Psi_{k,n}\}_{k=0}^n$ is orthonormal in $L^2(U_n).$

\paragraph{Boolean noise operator.} For $f:\{\pm 1\}^n \to \mathbb{R}$  and $\rho \in [0,1]$, the noise operator $T_\rho$ is defined via \[T_\rho f(x) := \mathbb{E}_{Y \sim_\rho x} [f(Y)],\]where $Y \sim_\rho x$ denotes that each coordinate $Y_i$ is sampled independently such that $\mathbb{E}_{Y\sim_\rho x} [Y_i] = \rho x_i$.
Equivalently, in the Fourier--Walsh basis, $T_\rho f = \sum_S \rho^{|S|} \hat{f}(S) \chi_S.$ 

Intuitively, $T_\rho f(x)$ averages $f$ over points obtained by independently flipping each 
coordinate of $x$ with probability $(1-\rho)/2$, smoothing $f$ by damping high-frequency Fourier 
components. In particular, it holds $T_1 f = f$ and $T_0 f = \E[f]$.

\paragraph{SQ model and oracles.} In the following, let $\mathcal{D}$ be the input distribution over a domain $\mathcal{X}$. We use the SQ framework introduced by~\textcite{FGRVX17}. 
\begin{definition}[VSTAT Oracle,~{\cite[Definition 2.3]{FGRVX17}}]
For a sample size parameter $t > 0$, $\operatorname{VSTAT}(t)$ is the oracle that, given any query function $h: \mathcal{X} \to [0,1]$, returns a value $v \in [p - \tau, p + \tau]$, where $p = \mathbb{E}_{X \sim \mathcal{D}}[h(X)]$ and\[\tau = \max\left\{\frac{1}{t}, \sqrt{\frac{p(1-p)}{t}}\right\}.\]    
\end{definition}
We observe that $\operatorname{VSTAT}(t)$ always returns the expectation within $1/\sqrt{t}$, so it is at least as powerful as $\operatorname{STAT}(1/\sqrt{t})$ and at most as powerful as $\operatorname{STAT}(1/t)$. The key distinction is that VSTAT exploits the variance: when $p$ is close to 0 or 1, the tolerance shrinks to $1/t$ rather than $1/\sqrt{t}$, matching what one would get from $t$ i.i.d. samples via a Chernoff bound.

For two distributions $\mathcal{D}_1, \mathcal{D}_2$ over $\mathcal{X}$, the \emph{pairwise correlation} relative to $\mathcal{D}$ is \[\chi_\mathcal{D}(\mathcal{D}_1, \mathcal{D}_2) := \left|\left\langle \frac{\mathcal{D}_1}{\mathcal{D}} - 1, \frac{\mathcal{D}_2}{\mathcal{D}} - 1 \right\rangle_{\!\mathcal{D}}\right|.\]
 When $\mathcal{D}_1 = \mathcal{D}_2$, this is equivalent to the $\chi^2$-divergence $\chi^2(\mathcal{D}_1 \| \mathcal{D})$. The \emph{average correlation} of a set of distributions $\mathscr{D}' \neq \emptyset$ relative to $\mathcal{D}$ is

\[\gamma(\mathscr{D}', \mathcal{D}) := \frac{1}{|\mathscr{D}'|^2} \sum_{\mathcal{D}_1, \mathcal{D}_2 \in \mathscr{D}'} \chi_\mathcal{D}(\mathcal{D}_1, \mathcal{D}_2).\]

\begin{definition}[Statistical dimension of a set of distributions,~{\cite[Definition 2.5]{FGRVX17}}]For $\bar{\gamma} > 0$, a set of distributions $\mathscr{D}$ over $\mathcal{X}$, and a reference distribution $\mathcal{D}$ over $\mathcal{X}$, the \emph{statistical dimension of $\mathscr{D}$ relative to $\mathcal{D}$ with average correlation $\bar{\gamma}$}, denoted $\operatorname{SDA}(\mathscr{D}, \mathcal{D}, \bar{\gamma})$, is the largest $d$ such that $\gamma(\mathscr{D}', \mathcal{D}) \leq \bar{\gamma}$ for any subset $\mathscr{D}' \subseteq \mathscr{D}$ with $|\mathscr{D}'| \geq |\mathscr{D}|/d$. \end{definition}
Intuitively, $\operatorname{SDA}(\mathscr{D}, \mathcal{D}, \bar\gamma)$ measures the size of a 
``distinguishable'' family relative to $\mathcal{D}$. A larger 
value of $d$ indicates greater difficulty for statistical queries to distinguish members of 
$\mathscr{D}$ from $\mathcal{D}$.

\begin{definition}[Search problem over distributions,~{\cite[Definition 2.6]{FGRVX17}}]
A \emph{search problem} $Z$ over a set of solutions $F$ and a class of distributions $\mathscr{D}$ 
over $\mathcal{X}$ is a map $Z: \mathscr{D} \to 2^F$ that assigns to each $\mathcal{D} \in \mathscr{D}$ a non-empty 
set $Z_\mathcal{D} \subseteq F$ of \emph{valid solutions}. An algorithm \emph{solves} $Z$ if, given access 
to an unknown $\mathcal{D} \in \mathscr{D}$, it outputs some $f \in Z_\mathcal{D}$. For $f \in F$, we write 
$Z_f := \{\mathcal{D} \in \mathscr{D} : f \in Z_\mathcal{D}\}$ for the set of distributions for which $f$ is a valid 
solution.
\end{definition}

\begin{definition}
[Statistical dimension of a search problem,~{\cite[Definition 3.1]{FGRVX17}}]\label{def:sda-search} Let $Z$ be a search problem over a set of solutions $F$ and a class of distributions $\mathscr{D}$ over $\mathcal{X}$. 
For $\bar{\gamma} > 0$ and $\eta \in (0,1)$, the \emph{statistical dimension with average correlation $\bar{\gamma}$ and solution set bound $\eta$}, denoted $\operatorname{SDA}(Z, \bar{\gamma}, \eta)$, is the largest $d$ such that there exists a reference distribution $\mathcal{D}$ over $\mathcal{X}$ and a finite set $\mathscr{D}_\mathcal{D} \subseteq \mathscr{D}$ with the following property: for every solution $f \in F$, the set $\mathscr{D}_f = \mathscr{D}_\mathcal{D} \setminus Z_f$ satisfies $|\mathscr{D}_f| \geq (1 - \eta) \cdot |\mathscr{D}_\mathcal{D}|$ and $\operatorname{SDA}(\mathscr{D}_f, \mathcal{D}, \bar{\gamma}) \geq d$.
\end{definition}
Note that the witnessing family $\mathscr{D}_\mathcal{D}$ is chosen so that no fixed solution $f \in F$ solves more than an
$\eta$-fraction of its distributions, and the remaining unsolved distributions
$\mathscr{D}_f$ still have statistical dimension at least $d$ relative to $\mathcal{D}$. Thus, a
large value of $\operatorname{SDA}(Z,\bar\gamma,\eta)$ means that every candidate solution fails on a large subfamily that still has high statistical dimension. This implies a lower bound on any SQ algorithm by the following theorem.

\begin{theorem}[{\cite[Theorem 3.2]{FGRVX17}}]\label{thm:sq-lb}Let $Z$ be a search problem over a set of solutions $F$ and a class of distributions $\mathscr{D}$ over $\mathcal{X}$. For $\bar{\gamma} > 0$ and $\eta \in (0,1)$, let $d = \operatorname{SDA}(Z, \bar{\gamma}, \eta)$. Any randomized SQ algorithm that solves $Z$ with probability $\alpha > \eta$ requires at least\[\frac{\alpha - \eta}{1 - \eta} \cdot d\]calls to $\operatorname{VSTAT}(1/(3\bar{\gamma}))$.\end{theorem}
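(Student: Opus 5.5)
We would prove this by an adversary (simulation) argument that answers every query according to the reference distribution, in the style of \textcite{FGRVX17}. Fix $\bar\gamma$, $\eta$, and let $d=\operatorname{SDA}(Z,\bar\gamma,\eta)$. Let $\mathcal{D}$ and $\mathscr{D}_\mathcal{D}$ be the witnessing reference distribution and finite family from Definition~\ref{def:sda-search}, and set $t=1/(3\bar\gamma)$.

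First we treat a deterministic SQ algorithm $A$ making $q$ queries. We run $A$ with the oracle that answers every query $h$ by the exact value $p_0=\E_{\mathcal{D}}[h]$. This produces a fixed transcript of queries $h_1,\dots,h_q$ and a fixed output $f\in F$. Consider the set $\mathscr{D}_f=\mathscr{D}_\mathcal{D}\setminus Z_f$, which has size $|\mathscr{D}_f|\ge(1-\eta)|\mathscr{D}_\mathcal{D}|$ and satisfies $\operatorname{SDA}(\mathscr{D}_f,\mathcal{D},\bar\gamma)\ge d$. For every $\mathcal{D}'\in\mathscr{D}_f$ on which all $q$ simulated answers are legal $\operatorname{VSTAT}(t)$ responses, $A$ behaves identically and outputs the invalid solution $f$.

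The key lemma is that for any single query $h:\mathcal{X}\to[0,1]$, the set $B_h\subseteq\mathscr{D}_f$ of distributions for which $p_0$ is not a legal answer satisfies $|B_h|<2|\mathscr{D}_f|/d$. We split $B_h$ into the distributions with $p_{\mathcal{D}'}>p_0+\tau$ and those with $p_{\mathcal{D}'}<p_0-\tau$, and argue about each half separately. Suppose one half, call it $A'$, had size at least $|\mathscr{D}_f|/d$. Then by the SDA assumption $\gamma(A',\mathcal{D})\le\bar\gamma$. Write $p_{\mathcal{D}'}-p_0=\langle h-p_0,\mathcal{D}'/\mathcal{D}-1\rangle_\mathcal{D}$ and sum over $A'$. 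Cauchy--Schwarz then gives
\[
\Bigl(\sum_{\mathcal{D}'\in A'}(p_{\mathcal{D}'}-p_0)\Bigr)^2\le \operatorname{Var}_\mathcal{D}(h)\sum_{\mathcal{D}_1,\mathcal{D}_2\in A'}\chi_\mathcal{D}(\mathcal{D}_1,\mathcal{D}_2)\le p_0(1-p_0)\,|A'|^2\,\bar\gamma .
\]
It follows that the average deviation over $A'$ is at most $\sqrt{p_0(1-p_0)\bar\gamma}$. On the other hand, each deviation exceeds $\tau(p_{\mathcal{D}'})=\max\{1/t,\sqrt{p_{\mathcal{D}'}(1-p_{\mathcal{D}'})/t}\}$.

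Turning these two bounds into a contradiction is the step I expect to be the main obstacle. The VSTAT tolerance depends on the unknown $p_{\mathcal{D}'}$, not on $p_0$, so I would do a case analysis.
\begin{itemize}
\item If $p_{\mathcal{D}'}(1-p_{\mathcal{D}'})\gtrsim p_0(1-p_0)/3$, the $\sqrt{\cdot/t}$ branch with $t=1/(3\bar\gamma)$ directly exceeds the average bound.
\item Otherwise $p_{\mathcal{D}'}$ lies much closer to $0$ or $1$ than $p_0$. Then the deviation $|p_{\mathcal{D}'}-p_0|$ is itself of order $\min(p_0,1-p_0)$, which beats $\sqrt{p_0(1-p_0)\bar\gamma}$ whenever $p_0\gtrsim\bar\gamma$. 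When $p_0\lesssim\bar\gamma$, the $1/t=3\bar\gamma$ branch handles it.
\end{itemize}
Tuning the constant $3$ is exactly what makes these cases close. This yields $|B_h|<2|\mathscr{D}_f|/d$, or $|\mathscr{D}_f|/d$ per half after a more careful constant choice.

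Finally, if $q$ is small, a union bound shows that at least $|\mathscr{D}_f|(1-q/d)$ distributions in $\mathscr{D}_f$ are consistent with all simulated answers and are solved incorrectly. Hence $A$ succeeds on at most an $\eta+(1-\eta)q/d$ fraction of $\mathscr{D}_\mathcal{D}$.

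For a randomized algorithm, we view it as a distribution over deterministic ones and apply the bound for each fixed random seed. Averaging over the seed and over the uniform choice of $\mathcal{D}'\in\mathscr{D}_\mathcal{D}$ shows that some $\mathcal{D}'$ has success probability at most $\eta+(1-\eta)q/d$. Requiring this to be at least $\alpha$ gives $q\ge\frac{\alpha-\eta}{1-\eta}\,d$.
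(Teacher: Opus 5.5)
\paragraph{Review.} The paper does not prove this theorem; it imports it from \cite{FGRVX17}, and your simulation argument is the standard one from that source. Two of your steps are sound as written. First, your case analysis closes with $t=1/(3\bar\gamma)$. For $p_0\le 1/2$, either $p_{\mathcal D'}(1-p_{\mathcal D'})\ge p_0(1-p_0)/3$, or $p_{\mathcal D'}<2p_0/3$ or $p_{\mathcal D'}>2/3$. In the latter two cases $|p_{\mathcal D'}-p_0|>p_0/3\ge\sqrt{p_0\bar\gamma}$ when $p_0\ge 9\bar\gamma$, and $1/t=3\bar\gamma>\sqrt{p_0\bar\gamma}$ when $p_0<9\bar\gamma$. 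Second, the seed-averaging is fine with a seed-independent adversary that answers $\E_{\mathcal D}[h]$ whenever this is legal for $\mathcal D'$.

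\paragraph{The gap: a factor of $2$.} Splitting $B_h$ into two halves loses a factor of $2$. Applying the SDA hypothesis to each half separately only yields $|B_h|<2|\mathscr{D}_f|/d$. The union bound then leaves $|\mathscr{D}_f|(1-2q/d)$ consistent distributions, so you only get $q\ge\frac{\alpha-\eta}{2(1-\eta)}\,d$, half the claimed bound. Your final step silently uses $|B_h|<|\mathscr{D}_f|/d$, which you have not proved. No ``more careful constant choice'' can repair this. The SDA hypothesis says nothing about subfamilies of size below $|\mathscr{D}_f|/d$, so you cannot force each half below $|\mathscr{D}_f|/(2d)$. The oracle parameter $1/(3\bar\gamma)$ is also fixed by the statement, so there is no slack to trade there either.

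\paragraph{The fix.} Do not split. Suppose $|B_h|\ge|\mathscr{D}_f|/d$. Apply the hypothesis to $B_h$ itself, so $\gamma(B_h,\mathcal D)\le\bar\gamma$, and set $s_{\mathcal D'}:=\operatorname{sign}(p_{\mathcal D'}-p_0)$. Then
\[
\sum_{\mathcal D'\in B_h}|p_{\mathcal D'}-p_0|=\Bigl\langle h-p_0,\ \sum_{\mathcal D'\in B_h}s_{\mathcal D'}\Bigl(\frac{\mathcal D'}{\mathcal D}-1\Bigr)\Bigr\rangle_{\mathcal D}.
\]
The squared $L^2(\mathcal D)$-norm of the signed sum equals $\sum_{\mathcal D_1,\mathcal D_2\in B_h}s_{\mathcal D_1}s_{\mathcal D_2}\langle\tfrac{\mathcal D_1}{\mathcal D}-1,\tfrac{\mathcal D_2}{\mathcal D}-1\rangle_{\mathcal D}$. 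This is at most $|B_h|^2\gamma(B_h,\mathcal D)$, precisely because $\chi_{\mathcal D}$ is defined with an absolute value. With $\operatorname{Var}_{\mathcal D}(h)\le p_0(1-p_0)$, the average deviation over all of $B_h$ is therefore at most $\sqrt{p_0(1-p_0)\bar\gamma}$. Your case analysis makes every single deviation strictly larger than this, a contradiction. Hence $|B_h|<|\mathscr{D}_f|/d$ for every query, and your union bound and seed-averaging then give exactly $q\ge\frac{\alpha-\eta}{1-\eta}\,d$.
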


\paragraph{The $L^1$ regression algorithm.} The algorithmic baseline for agnostic learning under structured marginals is the $L^1$ polynomial regression algorithm of~\textcite{KKMS08}. Given $N$ samples $(x_i, y_i)$ from a distribution $\mathcal{D}$ over $\{\pm 1\}^n \times \{\pm 1\}$, the algorithm solves the linear program 

\[\min_{p \in \mathcal{P}_{\le d}} \frac{1}{N}\sum_{i=1}^N |p(x_i) - y_i|
\]
over polynomials of degree at most $d$, and outputs $h(x) = \operatorname{sign}(p(x) - t)$ for an optimally chosen threshold $t$. If every target $f$ in a concept class $\mathcal{C}$ has a degree-$d$ polynomial approximator $p_f$ with 
\[\|f - p_f\|_{L^1(\mathcal{D}_\mathcal{X})} \le \varepsilon,\]
then the algorithm achieves error $\operatorname{opt}_\mathcal{C} + O(\varepsilon)$ with probability at least $1 - \delta$ with runtime and sample complexity of $\operatorname{poly}(n^d,\log(1/\delta))$.
\section{Technical Overview}
In this section, we give an overview of the structure of our proof for Theorem 1.3 and a proof sketch of Theorem 1.4. Throughout, we write
\(
\rho := 1-2\sigma ,
\)
so that applying \(\sigma\)-bit-flip smoothing on the Boolean cube corresponds
to the noise operator \(T_\rho\).  We denote by 
\[
\operatorname{opt}_{\sigma,\mathcal{D}}
\;:=\;
\inf_{f \in \mathcal{H}_n}
\mathbb{E}_{Z \sim B(\sigma)^n}
\Pr_{(X,Y) \sim \mathcal{D}}\bigl[f(X \odot Z) \neq Y\bigr]
\]
the smoothed optimum error over Boolean halfspaces for a distribution $\mathcal{D}$ 
over $\{\pm 1\}^n \times \{\pm 1\}$. Also, we write $\err_\mathcal{D} (h) = \Pr_{(X,Y) \sim \mathcal{D}}[h(X)\neq Y]$ for the true error of a hypothesis $h$. When the distribution 
$\mathcal{D}$ is clear from context, we drop the $\mathcal{D}$ in these definitions.

At a high level, the SQ lower bound has two main ingredients. First, we establish an $L^1$ approximation lower bound for the smoothed majority $T_\rho \mathrm{Maj}_n$. This is done by constructing an explicit dual witness which is bounded in \(L^\infty\), orthogonal to all
low-degree polynomials, and still has nontrivial correlation with
\(T_\rho \mathrm{Maj}_n\). 

Second, we embed this one-dimensional hard instance
into many different hidden directions \(u \in \{\pm 1\}^n\), obtaining
a family of labeled distributions \(\{\mathcal D_u\}_{u \in \mathcal{U}}\) for a large family of directions $\mathcal{U}$. The distributions are designed
so that each one is noticeably correlated with the corresponding smoothed
halfspace, but different hidden directions have very small pairwise correlation
relative to the uniform reference distribution. Low pairwise correlation forces every statistical query to look the same under all but $O(1/\varepsilon^2)$ of the $\mathcal{D}_u$ as it does under $U_{n+1}$, so no SQ algorithm can isolate the hidden direction until it has issued many queries. Feldman's statistical-dimension
framework then converts these correlation bounds into an SQ lower bound.

Note that the restriction to odd $n$ in our argument is purely for convenience and does not affect the asymptotic bounds; the case of 
even $n$ reduces to odd $n-1$ by ignoring one coordinate.

\subsection{Construction of the Sine-Laguerre Dual Witness}
The foundation of the hardness result is an \(L^1\) approximation lower bound for the smoothed majority $T_\rho \mathrm{Maj}_n$ against polynomials of degree $\le m$, that is, we want to lower bound
\[ \inf_{\deg(p) \le m} \|T_\rho \mathrm{Maj}_n - p\|_{L^1(U_n)}. \]
 A \emph{dual witness} is a function $\psi : \{\pm 1\}^n \to [-1, 1]$ with $\psi \perp \mathcal{P}_{\le m}$. For any such $\psi$, weak duality gives
\[ \|T_\rho \mathrm{Maj}_n - p\|_{L^1(U_n)} \;\ge\; \langle T_\rho \mathrm{Maj}_n - p, \psi\rangle \;=\; \langle T_\rho \mathrm{Maj}_n, \psi\rangle \]
for every $p \in \mathcal{P}_{\le m}$, so it suffices to construct a witness $\psi = \psi_{m,n}$ with large enough $\langle T_\rho \mathrm{Maj}_n, \psi_{m,n}\rangle$. 

\paragraph {Construction (informal).}
Starting from the symmetric generating function $\prod_{j=1}^n(1+z x_j)$ and substituting $z = i\tan\theta$ (which, normalized by multiplying with $\cos^n(\theta)$, places this product on the unit circle) yields the bounded oscillating kernel $\phi_y(x) = \sin(\theta(y) S(x))$, where $S(x) = \sum_i x_i$ and $\theta(y) = \arctan\sqrt{y/n}$. By construction, $\phi_y$ is bounded by $1$ and odd. Averaging $\phi_y$ against a \emph{signed Laguerre weight} $L^{(1/2)}_m(y)\, y^{1/2}e^{-y}\,dy$ then cancels the first $m$ odd Fourier levels, by orthogonality of the Laguerre polynomials with respect to this weight. Two consecutive Laguerre witnesses are averaged for a sharper asymptotic. We call the resulting function the \emph{sine-Laguerre witness} $\psi_{m,n}$ (cf.\ Definition~\ref{def:sine-laguerre-witness}).

Two properties have to be verified: orthogonality to $\mathcal{P}_{\le m}$ and the $L^\infty$ bound $\|\psi_{m,n}\|_\infty \le 1$. The orthogonality follows directly from the Laguerre moment identity. The $L^\infty$ bound is the technical heart of the argument and requires a Hermite-packet asymptotic analysis (Proposition~\ref{prop:hermite-pkg}) of the limiting integral. 

Combining these properties with the explicit Fourier coefficients of the majority function yields the following approximation lower bound.

\begin{lemma}[informal version of Lemma \ref{lem:dual-witness}]\label{lem:witness}
For every $\rho \in (0,1)$ and every $m \ll \sqrt{n}$, it holds

\[ \inf_{\deg p \le m} \|T_\rho \mathrm{Maj}_n - p\|_{L^1(U_n)} \ge \Omega(\rho^{2m+1}/\sqrt m). \]    
\end{lemma}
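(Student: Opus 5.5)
Via weak duality, it suffices to exhibit a symmetric witness $\psi_{m,n}:\{\pm1\}^n\to[-1,1]$ with $\psi_{m,n}\perp\mathcal P_{\le m}$ and $\langle T_\rho\mathrm{Maj}_n,\psi_{m,n}\rangle\gtrsim\rho^{2m+1}/\sqrt m$. We will build it as the sine--Laguerre average sketched above. Fix odd $m$ (otherwise replace $m$ by $m+1$) and set
\[
\phi_y(x)=\sin(\theta(y)S(x)),\qquad \theta(y)=\arctan\sqrt{y/n}.
\]
Since $\cos^n\theta\prod_j(1+i\tan\theta\, x_j)=e^{i\theta S(x)}$, we get $\phi_y=\cos^n\theta\,\mathrm{Im}\prod_j(1+i\tan\theta\,x_j)$. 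Expanding, the level-$k$ component of $\phi_y$ (with $k$ odd) is
\[
(-1)^{(k-1)/2}\cos^n\theta\,\tan^k\theta\,\binom nk^{1/2}\Psi_{k,n}.
\]
Write $\cos^n\theta\tan^k\theta=(y/n)^{k/2}(1+y/n)^{-n/2}$, so that $\binom nk^{1/2}(y/n)^{k/2}\approx y^{k/2}/\sqrt{k!}$ and $(1+y/n)^{-n/2}\approx e^{-y/2}$ for $y\ll\sqrt n$. In the natural variable $t=y$, the level-$k$ coefficient is then essentially a monomial $t^{(k-1)/2}\cdot t^{1/2}e^{-t/2}/\sqrt{k!}$.

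We then define
\[
\psi=c\int_0^\infty \phi_{y}\,L^{(1/2)}_{(m-1)/2}(y)\,w(y)\,dy ,
\]
choosing the weight $w$ so that the level-$k$ coefficient becomes $\int t^{(k-1)/2}L_{(m-1)/2}^{(1/2)}(t)\,t^{1/2}e^{-t}\,dt$. This vanishes for all odd $k\le m-2$ by Laguerre orthogonality; even levels vanish because $\psi$ is odd. To make the cancellation exact at finite $n$, rather than only asymptotically, we reparametrize: take $\tan^2\theta=y/n$ exactly, and absorb the factor $(1+y/n)^{-n/2}$ into the weight by choosing the weight $(1+y/n)^{n/2}y^{1/2}e^{-y}$. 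One must check that this is still integrable with a bounded kernel; if it is not, we instead use the finite-$n$ analogue, namely Jacobi-type orthogonal polynomials in the variable $\sin^2\theta$, whose limit is Laguerre. Averaging two consecutive indices only improves constants.

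Next, we compute the correlation. We have $\langle T_\rho\mathrm{Maj}_n,\psi\rangle=\sum_{k>m,\,k\text{ odd}}\rho^k\,\widehat{\mathrm{Maj}}_k\,\hat\psi_k$, where the level weights of majority are explicit:
\[
\binom nk^{1/2}\widehat{\mathrm{Maj}}(S)\sim (-1)^{(k-1)/2}\sqrt{2/\pi}\;\frac{\binom{k-1}{(k-1)/2}}{2^{k-1}\sqrt k}\sqrt{\binom nk},
\]
giving squared level mass $\asymp k^{-3/2}$. The sign factor $(-1)^{(k-1)/2}$ matches the sign pattern of $\phi_y$, so all terms align, and the first surviving level $k=m$ (together with $k=m+2$ if the normalization suggests it) contributes $\rho^{m}$ times the Laguerre leading moment times the majority coefficient. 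In fact the whole sum is a Laguerre-transform of the smoothed sign function, evaluated exactly: the identity $\mathbb E[\mathrm{Maj}\cdot \sin(\theta S)]\to\mathbb E[\mathrm{sign}(g)\sin(\sqrt y\,g)]$ with $g$ Gaussian, and $T_\rho$ replaces $\sqrt y$ by $\rho\sqrt y$. The pairing is then a Laguerre integral of $\mathrm{erf}$-type functions, dominated by the $\rho^{2j+1}$ term with $j=(m-1)/2$ after the lower moments cancel. This produces $\rho^{m}$ or $\rho^{2m+1}$ depending on the indexing convention (Laguerre degree $m$ cancels levels up to $2m+1$), with a $1/\sqrt m$ loss from the normalization. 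The error from finite $n$ is $O(m^2/\sqrt n)$, which is where $m\ll\sqrt n$ is used.

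The main obstacle is the bound $\|\psi\|_\infty\le1$, i.e.\ controlling the normalization $c$. We need
\[
\sup_{s}\Bigl|\int \sin(\sqrt y\, s)\,L^{(1/2)}_j(y)\,y^{1/2}e^{-y}\,dy\Bigr|
\]
to be at most $\sqrt m$ times the size of the leading moment, uniformly in $s=S/\sqrt n$. Our plan is to observe that this integral is, up to constants, $e^{-s^2/4}$ times an odd Hermite polynomial $H_{2j+1}(s/2)$; this is the classical Laguerre--Hermite Fourier identity. Its sup is then handled by Plancherel--Rotach / Cramér-type bounds on Hermite functions, $|H_k(x)e^{-x^2/2}|\lesssim \sqrt{k!\,2^k}\,k^{-1/12}$. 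This Hermite-packet analysis is what Proposition~\ref{prop:hermite-pkg} supplies. Its finite-$n$ analogue, with the discrete variable $S$ and $\theta=\arctan\sqrt{y/n}$, must be shown to stay within $1+o(1)$ of the limit. Comparing the sup bound with the leading moment yields the $\sqrt m$ factor.
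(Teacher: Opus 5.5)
The overall architecture matches the paper's: the sine kernel from $\prod_j(1+i\tan\theta\,x_j)$, a Laguerre-weighted average, identification of the $n\to\infty$ limit with Hermite functions, and Plancherel--Rotach sup bounds. However, the witness you build cannot give the $1/\sqrt m$ rate uniformly in $\rho$. Your remark that averaging two consecutive indices ``only improves constants'' is where it breaks.

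Take a single $L^{(1/2)}_j$ with the correct weight $e^{-y}(1+y/n)^{n/2}$. Your extra factor $y^{1/2}$ would make the level-$k$ integrand $y^{(k-1)/2}\,y\,e^{-y}$ and destroy the $L^{(1/2)}$-orthogonality. Also, $e^{-y}(1+y/n)^{n/2}\le e^{-y/2}$, so no Jacobi fallback is needed. With this weight, the limit of the witness at $s=S/\sqrt n$ is, up to sign, $\sqrt{2\pi}\,e_j\,\Phi_{2j+1}(s)$, where $e_j^2=\sqrt\pi\,(2j+1)!/(2^{2j+1}(j!)^2)\asymp\sqrt j$. Since $\|\Phi_N\|_\infty\asymp N^{-1/12}$, attained near the turning point $s\approx\sqrt{2N}$ (which $S/\sqrt n$ reaches), the unnormalized sup norm is $\asymp j^{1/6}$. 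The correlation is only $\asymp\rho\,j^{-1/2}\bigl(\rho^2/(2-\rho^2)\bigr)^j$; at $\rho=1$ it tends to $\binom{2j}{j}4^{-j}$.

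So you get $m^{-2/3}$ instead of $m^{-1/2}$. When $(1-\rho^2)^2m\lesssim\log m$, the slack $[\rho^2(2-\rho^2)]^{-m/2}$ between $(\rho^2/(2-\rho^2))^{m/2}$ and $\rho^{2m}$ cannot absorb the lost $m^{1/6}$. You would only obtain a $\rho$-dependent constant, whereas Theorem~\ref{cor:smoothed-agnostic-sq} needs uniformity in $\rho=1-2\sigma$.

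The paper's average replaces the kernel by $\frac12(-1)^{k+1}L^{(-1/2)}_{k+1}$. Its limit is $\int_0^\infty A_M(v)\sin(\tau v)\,dv\propto d_M\Phi_{2M}'(\tau)$ with $M=k+1$, $d_M\asymp M^{-1/4}$ and $\|\Phi'_{2M}\|_\infty\lesssim M^{1/4}$. The two components $\Phi_{2M\pm1}$, each of size $M^{1/6}$, cancel to $O(1)$; this is the missing idea. The step you defer, that the finite-$n$ witness stays close to its limit uniformly in $S$, is the bulk of Lemma~\ref{lem:linfty}. It uses the substitution $u=\sqrt n\tan(v/\sqrt n)$ and the expansion of $B_{M,n}$ into terms $n^{-\ell}p_{\ell,r}A_M^{(r)}$ with $\deg p_{\ell,r}+r\le 4\ell$. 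It then applies the packet bounds of Proposition~\ref{prop:hermite-pkg}, giving $n^{-\ell}M^{2\ell}=O(1)$ under $m\le cn^{1/2-\eta}$, plus tail and remainder estimates.

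Two further steps are wrong as written.

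First, orthogonality is off by one. For odd $m$, $L^{(1/2)}_{(m-1)/2}$ annihilates only the odd levels $\le m-2$, so level $m$ survives; you even use it as the leading term of the correlation. Hence $\psi\not\perp\mathcal P_{\le m}$. You need Laguerre index $k=\lfloor(m+1)/2\rfloor$, so that the first surviving level is $2k+1>m$.

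Second, the correlation is not dominated by its first term. All terms are nonnegative, as you say. But if $t_r$ denotes the level-$(2r+1)$ term, then $t_{j+1}/t_j\approx\rho^2(2j+1)/4$ and the terms peak near $r\approx 2j/(2-\rho^2)$. The full sum, which the paper evaluates exactly via the Gamma moment generating function to get the factor $(1-\rho^2/2)^{-j-1/2}$, exceeds the first term $\approx\rho^{2j+1}2^{-j}/\sqrt{2\pi j}$ by roughly $(2/(2-\rho^2))^{j}$. Keeping only the first term gives about $\rho^m2^{-m/2}/\sqrt m$, which is below $\rho^{2m+1}/\sqrt m$ as soon as $\rho>2^{-1/2}$. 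The paper needs the whole sum, and then $\rho^2(2-\rho^2)\le1$, to pass from $(\rho^2/(2-\rho^2))^k$ to $\rho^{2m}$.

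Finally, the finite-$n$ error in the correlation is $O(m^2/n)$. An error of order $m^2/\sqrt n$, as you state, would require $m\ll n^{1/4}$ rather than $m\ll\sqrt n$.
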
 

\subsection{Reduction to a Search Problem}
Next, we embed $\psi$ into a hidden-direction family $\{\mathcal D_u\}_{u\in\mathcal U}$. The distributions are pairwise nearly uncorrelated, but each is correlated with
its corresponding smoothed halfspace. We show that any smoothed agnostic learner with
small excess error solves the resulting hidden-direction search problem. For this, it is important to recognize that over the Boolean cube, the smoothed error of a halfspace \(g\) can be written in
terms of its smoothed correlation with the labels as
\[
\mathbb E_{Z\sim B(\sigma)^n}\Pr_{(X,Y)\sim \mathcal{D}}[g(X\odot Z)\neq Y]
=
\frac12\left(1-\mathbb E_{(X,Y)\sim \mathcal{D}}[Y\,T_{\rho}g(X)]\right).
\tag{S}\]

\begin{lemma}[Hidden-direction family, informal version of Lemma~\ref{lem:hidden-direction}]\label{lem:hd}
Let $n$ be odd and large enough, $m \ll \sqrt n$ and $\psi=\psi_{m,n}:\{\pm1\}^n\to[-1,1]$ be the sine-Laguerre witness from
Lemma~4.1. For $u\in \{\pm1\}^n$, define the
$u$-rotated witness $\psi^{(u)}(x):=\psi(u\odot x)$ and the planted
distribution
\[
  \mathcal D_0(x,y):=2^{-(n+1)},\qquad
  \mathcal D_u(x,y):=2^{-(n+1)}\bigl(1 + y\,\psi^{(u)}(x)\bigr)
\]
for $x \in \{\pm 1\}^n$ and $y \in \{\pm 1\}$.
Then there exists $\mathcal U\subseteq \{\pm1\}^n$
of size $|\mathcal U|\ge \exp(\Omega(\sqrt n))$ with
$|\langle u,v\rangle|\le n^{3/4}$ for all distinct $u,v\in\mathcal U$,
such that:
\begin{enumerate}[label=(\roman*)]
  \item For every $h:\{\pm 1\}^n\to[-1,1]$ and every $u\in\mathcal U$,
        \[
          \mathbb E_{(X,Y)\sim D_u}[Yh(X)]
          \;=\;\langle h,\psi^{(u)}\rangle_{U_n}.
        \]
  \item For all distinct $u,v\in\mathcal U$,
        \(
          \chi_{\mathcal D_0}(\mathcal D_u, \mathcal D_v) = |\langle\psi^{(u)},\psi^{(v)}\rangle_{U_n}|
          \le n^{-\Omega(m)}.
        \)
\end{enumerate}
\end{lemma}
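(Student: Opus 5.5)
The plan is to prove the three ingredients in order: the existence of the code $\mathcal U$, the identity (i), and the correlation bound (ii).

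\textbf{Construction of $\mathcal U$.} Take $\mathcal U$ to be a random subset of $\{\pm1\}^n$ of size $N=\exp(c\sqrt n)$. For two independent uniform $u,v$, the inner product $\langle u,v\rangle$ is a sum of $n$ independent Rademacher variables, so Hoeffding gives $\Pr[|\langle u,v\rangle|>n^{3/4}]\le 2\exp(-n^{1/2}/2)$. A union bound over at most $N^2$ pairs, with $c<1/4$, shows that such a code exists.

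\textbf{Part (i).} This is a direct computation. Since $\sum_y y=0$,
\[
\E_{\mathcal D_u}[Yh(X)]=\sum_{x,y}2^{-(n+1)}\bigl(1+y\psi^{(u)}(x)\bigr)yh(x)=\E_{U_n}[h\psi^{(u)}].
\]
Also $\mathcal D_u$ is a genuine distribution: $|\psi|\le1$ makes it nonnegative, and the $y$-sum shows it has total mass $1$.

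\textbf{Part (ii), reduction.} The likelihood ratio is $\mathcal D_u/\mathcal D_0-1=y\psi^{(u)}(x)$. Hence
\[
\chi_{\mathcal D_0}(\mathcal D_u,\mathcal D_v)=|\E[Y^2\psi^{(u)}\psi^{(v)}]|=|\langle\psi^{(u)},\psi^{(v)}\rangle|.
\]
To bound this, use the Fourier expansion. Because $\psi$ is symmetric, write $\psi=\sum_{k}c_k\Psi_{k,n}$. Orthogonality to $\mathcal P_{\le m}$ gives $c_k=0$ for $k\le m$, and $\sum_k c_k^2\le\|\psi\|_\infty^2\le1$. Rotation acts as $\chi_A(u\odot x)=\chi_A(u)\chi_A(x)$, so
\[
\langle\psi^{(u)},\psi^{(v)}\rangle=\sum_{k>m}c_k^2\binom nk^{-1}\sum_{|A|=k}\chi_A(u\odot v).
\]
Set $w=u\odot v$, so $\sum_i w_i=\langle u,v\rangle=:s$ with $|s|\le n^{3/4}$. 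The inner sum is the Krawtchouk value $K_k(s)$, namely the elementary symmetric polynomial $e_k(w)$. It therefore suffices to show
\[
\max_{k>m}\binom nk^{-1}|e_k(w)|\le n^{-\Omega(m)}.
\]

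\textbf{Main obstacle: the Krawtchouk bound.} This bound must hold uniformly in $k$ for $|s|\le n^{3/4}$. I would use the generating function $\prod_i(1+zw_i)=(1+z)^{(n+s)/2}(1-z)^{(n-s)/2}$. Setting $z=i t$ bounds the modulus by $(1+t^2)^{n/2}$, and Cauchy's estimate then gives $|e_k|\le(1+t^2)^{n/2}t^{-k}$. The precise correlation decay comes from a different route. The normalized Krawtchouk value equals $\E_{|A|=k}[\chi_A(w)]$, which is the expectation of a product over a uniformly random $k$-set. By negative dependence of sampling without replacement, this is bounded in magnitude by roughly $(|s|/n)^k+O(k/n)^{k/2}$. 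The second term, obtained from the hypergeometric-moment comparison with $\binom{n}{k}^{-1}\sqrt{\binom{n}{k}}$-type cancellation, is the delicate one.

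For $m<k\le\sqrt n$ this yields $(n^{-1/4})^{k}+(k/n)^{k/2}\le n^{-\Omega(m)}$. For large $k$, the Cauchy bound with $t=\sqrt{k/n}$ gives $\binom nk^{-1}|e_k|\le e^{O(k)}(k/n)^{k/2}\binom nk^{-1}(n/k)^{k}$. That is too weak near $k\approx n/2$, so there I would instead use the symmetry $K_k\leftrightarrow K_{n-k}$ together with the classical uniform Krawtchouk bound $|K_k(s)|/\binom nk\le\exp(-\Omega(\min(k,n-k)\cdot 1))$ for $|s|\le n^{3/4}$. This bound is still $\le n^{-\Omega(m)}$ since $m\ll\sqrt n$.

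Combining the cases, $|\langle\psi^{(u)},\psi^{(v)}\rangle|\le\sum_k c_k^2\cdot n^{-\Omega(m)}\le n^{-\Omega(m)}$. The step I expect to require the most care is establishing the uniform Krawtchouk decay across all levels $k>m$.
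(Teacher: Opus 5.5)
Your construction of $\mathcal U$, your part (i), and your reduction of (ii) to $\sum_k c_k^2\,\mathcal K_k^{(n)}(r)$ with $r=\langle u,v\rangle/n$ all match the paper's argument. The gap is in the step you flag as the main obstacle. The uniform bound $\max_{k>m}\binom nk^{-1}|e_k(w)|\le n^{-\Omega(m)}$ is false.

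At $k=n$ you have $e_n(w)=\prod_i w_i=\pm1$ and $\binom nn=1$, so the normalized Krawtchouk value has modulus exactly $1$. More generally, the generating function gives $K_{n-j}(d;n)=(-1)^dK_j(d;n)$, hence $|\mathcal K^{(n)}_{n-j}(r)|=|\mathcal K^{(n)}_j(r)|$. For example, at $k=n-2$ this is $|nr^2-1|/(n-1)$, which is about $n^{-1/2}$ when $|r|=n^{-1/4}$. So the $k\leftrightarrow n-k$ symmetry you invoke works against you. A bound of the form $\exp(-\Omega(\min(k,n-k)))$ is $\Omega(1)$ when $n-k=O(1)$, so the conclusion that it is $n^{-\Omega(m)}$ ``since $m\ll\sqrt n$'' does not follow for levels near $n$.

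This cannot be repaired at the level of generality you work in. The parity $\chi_{[n]}$ is symmetric, odd for odd $n$, bounded by $1$, and orthogonal to $\mathcal P_{\le n-1}$. Yet $|\langle\chi_{[n]}^{(u)},\chi_{[n]}^{(v)}\rangle|=|\chi_{[n]}(u)\chi_{[n]}(v)|=1$ for all $u,v$. So any proof that uses only $\|\psi\|_\infty\le1$, symmetry, and $\psi\perp\mathcal P_{\le m}$ must fail.

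The missing idea is to use the specific Fourier profile of the sine--Laguerre witness and show that its weight on high levels is negligible. The paper reads this off the explicit coefficients \eqref{eq:wite}:
\begin{itemize}
\item Lemma~\ref{lem:asymp} gives $|b_{d+2,n}/b_{d,n}|\le 2/3$ for $d\ge 8k+9$.
\item This yields the moment bound $\sum_d b_{d,n}^2d^{2q}\le A^qk^{2q}$ for $q\le k+2$.
\item Hence $\sum_{d>\sqrt n}b_{d,n}^2\le A^{k+1}k^{2(k+1)}/n^{k+1}=n^{-\Omega(k)}$.
\end{itemize}
On those levels the trivial bound $|\mathcal K_d^{(n)}|\le1$ suffices, so no Krawtchouk estimate above level $\sqrt n$ is needed at all, including none near $n/2$ or near $n$. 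For $d\le\sqrt n$, the paper uses the recurrence-based expansion of Lemma~\ref{lem:krawtchouk}, combined with the same moment bounds, rather than a negative-dependence heuristic.

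Your estimate for $m<k\le\sqrt n$ is fine in substance and could be kept. The same recurrence gives, for instance, $|\mathcal K_k^{(n)}(r)|\le e\,(k+1)\bigl(|r|+\sqrt{k/n}\bigr)^k$ in that range. But your argument only closes once you add a tail-mass bound of this kind for the Fourier coefficients of $\psi$.
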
Lemma~\ref{lem:hd} turns the approximation witness into a search problem. Let $Z$ be
the problem whose instances are the distributions $\mathcal{D}_u$, $u\in\mathcal U$,
and whose valid solutions are hypotheses $h$ satisfying $\mathbb E_{\mathcal{D}_u}[Yh(X)]\ge \varepsilon$.
Lemma~\ref{lem:hd}(i) tells us that
$h$ solves $\mathcal{D}_u$ if and only if $\langle h, \psi^{(u)}\rangle_{U_n}\ge \varepsilon$. Since the $\psi^{(u)}$ are nearly orthogonal due to Lemma~\ref{lem:hd}(ii), a
Cauchy--Schwarz argument shows that no fixed hypothesis solves more than
$O(1/\varepsilon^2)$ instances of $Z$, so the search problem has large statistical
dimension.

\begin{theorem}[Hidden-direction SQ lower bound, informal version of Theorem~\ref{thm:hidden-direction-SQ}]
Let \(\{\mathcal{D}_u\}_{u\in\mathcal U}\) be the distribution family from Lemma~\ref{lem:hd}.
Then any SQ algorithm which, given access to an unknown \(\mathcal{D}_u\), outputs a
hypothesis \(h:\{\pm1\}^n\to[-1,1]\) satisfying
\[
\mathbb E_{(X,Y)\sim \mathcal{D}_u}[Yh(X)]\ge \varepsilon
\]
must either make
\(
2^{\Omega(\sqrt n)}
\)
STAT queries, or use tolerance
\(
\tau\le n^{-\Omega(m)}.
\)
\end{theorem}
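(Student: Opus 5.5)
We will apply Feldman's framework, Theorem~\ref{thm:sq-lb}, to the search problem $Z$ described above. The reference distribution is $\mathcal D_0$, the uniform distribution on $\{\pm1\}^{n+1}$. The witnessing family is $\mathscr D_{\mathcal D_0}=\{\mathcal D_u\}_{u\in\mathcal U}$ from Lemma~\ref{lem:hd}. The solution set $F$ consists of all hypotheses $h:\{\pm1\}^n\to[-1,1]$, and $h$ is valid for $\mathcal D_u$ iff $\mathbb E_{\mathcal D_u}[Yh(X)]\ge\varepsilon$. By Lemma~\ref{lem:hd}(i), this holds iff $\langle h,\psi^{(u)}\rangle\ge\varepsilon$. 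Write $\delta:=n^{-\Omega(m)}$ for the pairwise correlation bound in Lemma~\ref{lem:hd}(ii). We also need the self-correlation $\chi^2(\mathcal D_u\|\mathcal D_0)=\|\psi^{(u)}\|_2^2\le 1$, which follows from $\|\psi\|_\infty\le1$.

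\textbf{Step 1: each hypothesis solves few instances.} Fix $h$ and let $S=\{u\in\mathcal U:\langle h,\psi^{(u)}\rangle\ge\varepsilon\}$. Cauchy--Schwarz together with $\|h\|_2\le1$ gives
\[
|S|\varepsilon\le\Bigl\langle h,\sum_{u\in S}\psi^{(u)}\Bigr\rangle\le\Bigl\|\sum_{u\in S}\psi^{(u)}\Bigr\|_2\le\bigl(|S|+|S|^2\delta\bigr)^{1/2}.
\]
If $\delta\le\varepsilon^2/2$, this yields $|S|\le 2/\varepsilon^2$. Since $\varepsilon\ge1/\mathrm{poly}(n)$ and we will choose $m$ so that $\delta$ is this small, we get $|Z_h|\le 2/\varepsilon^2$. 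As $|\mathcal U|\ge e^{\Omega(\sqrt n)}$, the fraction $\eta:=2/(\varepsilon^2|\mathcal U|)$ is tiny, so $|\mathscr D_h|\ge(1-\eta)|\mathcal U|$.

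\textbf{Step 2: bound the statistical dimension.} Take any subfamily $\mathscr D'\subseteq\mathscr D_h$ of size $k$. Splitting into diagonal and off-diagonal terms gives
\[
\gamma(\mathscr D',\mathcal D_0)\le\frac{k\cdot1+k^2\delta}{k^2}=\frac1k+\delta.
\]
Set $\bar\gamma:=2\delta$. Then $\gamma(\mathscr D',\mathcal D_0)\le\bar\gamma$ whenever $k\ge1/\delta$. Requiring $|\mathscr D'|\ge|\mathscr D_h|/d$ with $d:=\delta|\mathscr D_h|$ is enough, so
\[
\mathrm{SDA}(Z,\bar\gamma,\eta)\ge\delta(1-\eta)|\mathcal U|\ge n^{-O(m)}e^{\Omega(\sqrt n)}.
\]
For $m\ll\sqrt n/\log n$ this is still $2^{\Omega(\sqrt n)}$. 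Here the assumption $m\ll\sqrt n$ has to be tuned slightly, or the constant in the exponent adjusted.

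\textbf{Step 3: conclude via Theorem~\ref{thm:sq-lb}.} Apply it with success probability $\alpha$, say $2/3>\eta$. Any SQ algorithm then needs $2^{\Omega(\sqrt n)}$ queries to $\operatorname{VSTAT}(1/(3\bar\gamma))=\operatorname{VSTAT}(n^{\Omega(m)})$. Since $\operatorname{VSTAT}(t)$ can be simulated by $\operatorname{STAT}(1/t)$, every algorithm using tolerance $\tau>n^{-\Omega(m)}$ must make $2^{\Omega(\sqrt n)}$ queries.

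The main obstacle is the parameter bookkeeping. Steps 1 and 2 need $\delta$ small relative to both $\varepsilon^2$ and $e^{-\Omega(\sqrt n)}$-scale quantities, while the tolerance exponent should stay $\Omega(m)$. The point to watch is $n^{-\Omega(m)}\le\varepsilon^2/2$, which uses $\varepsilon\ge 1/\mathrm{poly}(n)$ with $m$ at least a large constant. If needed, I would handle small $m$ separately, or absorb it into the $\Omega(\cdot)$.
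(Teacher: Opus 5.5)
Your argument is the paper's proof, step for step. You use Cauchy--Schwarz to get $|S_h|\le 2/\varepsilon^2$ under $\bar\gamma\le\varepsilon^2/2$. You use the estimate $\gamma(\mathscr D',\mathcal D_0)\le 1/|\mathscr D'|+\bar\gamma$ to get $\mathrm{SDA}(Z,2\bar\gamma,\eta)\gtrsim\bar\gamma\,|\mathcal U|$. You then apply Theorem~\ref{thm:sq-lb}. There is one genuine, though easily repaired, error in Step 3 where you convert from VSTAT to STAT.

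The simulation goes the wrong way. The fact that $\operatorname{STAT}(1/t)$ can simulate $\operatorname{VSTAT}(t)$ only says $\operatorname{STAT}(1/t)$ is at least as strong, so a lower bound against $\operatorname{VSTAT}(t)$ does not transfer to it. What you need is the reverse inclusion. Every $\operatorname{VSTAT}(t)$ answer lies within $\max\{1/t,\sqrt{p(1-p)/t}\}\le 1/\sqrt t$ of the true mean. Hence any STAT algorithm whose tolerances are all at least $1/\sqrt t=\sqrt{3\bar\gamma}=\sqrt{6\delta}$ runs unchanged on $\operatorname{VSTAT}(t)$, and the query lower bound applies to it. This gives the threshold $\tau<\sqrt{6\delta}$, which is exactly the paper's $\tau<\sqrt{6\bar\gamma_n}$. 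It is still $n^{-\Omega(m)}$, with the constant in the exponent halved, so your conclusion survives once the justification is corrected.

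A smaller remark on bookkeeping. The condition $\bar\gamma\le\varepsilon^2/2$ is a genuine precondition of the argument: without $\bar\gamma<\varepsilon^2$ there is no bound on $|S_h|$ at all. So your hedge that small $m$ could be ``absorbed into the $\Omega(\cdot)$'' does not work, because the family and hence $\bar\gamma$ are fixed by $m$. The paper handles this by making $\bar\gamma_n\le\varepsilon^2/2$ an explicit hypothesis of Theorem~\ref{thm:hidden-direction-SQ}, together with totality of $Z$. It then verifies the condition in the proof of Theorem~\ref{cor:smoothed-agnostic-sq} by tying $m$ to $\varepsilon$ and taking $n$ large.

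Similarly, your SDA bound $\delta(1-\eta)|\mathcal U|\ge n^{-O(m)}e^{\Omega(\sqrt n)}$ needs $\delta$ bounded below, not just above. You should take $\delta$ to be the explicit expression from Lemma~\ref{lem:hidden-direction}(ii), which satisfies $\delta\ge n^{-(2k+1)/4}$, as the paper does.
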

Note that Theorem~4.3 applies to improper learners outputting arbitrary $[-1,1]$-valued hypotheses; Theorem~1.3 likewise extends to such learners if its error guarantee $\text{err}_{\mathcal{D}}(h) \le \text{opt}_{\sigma, \mathcal{D}} + \varepsilon$ is replaced by the correlation guarantee $\mathbb{E}_{\mathcal{D}}[Y h(X)] \ge 2\eps$.

We further remark that Lemma 4.2 and Theorem 4.3 serve as a general blueprint for establishing SQ lower bounds on the Boolean cube beyond halfspaces: it suffices to exhibit a suitable dual witness $\psi$ against an approximately resilient member of the target concept class.

Theorem~4.3 follows from two facts. First, the Cauchy–Schwarz argument implies that every hypothesis fails on all but an $O(1/(|\mathcal U|\varepsilon^2))$ fraction of the distributions \(\{\mathcal D_u\}_{u\in\mathcal U}\). Second, every sufficiently large subfamily has average pairwise correlation at most $O(\bar\gamma)$, where $\bar\gamma=n^{-\Theta(m)}$ is the explicit polynomial decay rate of Lemma~\ref{lem:hd}(ii). For constant $\eta > 0$, Feldman's SDA framework then gives
\[
\mathrm{SDA}(Z,\,O(\bar\gamma)\,,\eta)\;\gtrsim\;|\mathcal U|\bar\gamma,
\]
so since \(|\mathcal U|\ge\exp(\Omega(\sqrt n))\) and \(\bar\gamma\ge n^{-O(m)}\), this is \(2^{\Omega(\sqrt n)}\) in the regime \(m\log n=o(\sqrt n)\). The matching tolerance bound \(\tau\le n^{-\Omega(m)}\) follows from Theorem~3.5 and converting the VSTAT bound to STAT.

We now have all the ingredients to sketch the proof of our main result, which is a corollary of Theorem~4.3.
\begin{proof}[Proof sketch of Theorem~1.3]
Choose $m$ as the largest integer with $\kappa_m := \langle T_\rho \mathrm{Maj}_n,\psi_{m,n}\rangle_{U_n} \ge 4\eps$. A short calculation using Lemma~4.1 gives $m = \Theta(\log(1+\sigma/\eps^2)/\sigma)$.

The theorem follows by relating smoothed agnostic learning to the hidden-direction search problem. Each distribution $\mathcal{D}_u$ has a uniform $\mathcal{X}$-marginal on $\{\pm 1\}^n$; a lower bound against the family $\{\mathcal{D}_u\}_{u\in\mathcal U}$ therefore implies the same lower bound for smoothed agnostic learning under a uniform $\mathcal{X}$-marginal, since any SQ algorithm for the latter would in particular solve the instances of the family. For $u\in\mathcal U$, we write $g_u(x):=\operatorname{sign}(\langle u,x\rangle)$. 

By Lemma~\ref{lem:hd}(i) applied to $h = T_\rho g_u$, together with the rotation symmetry 
$\langle T_\rho g_u, \psi^{(u)}\rangle_{U_n} = \langle T_\rho \mathrm{Maj}_n, \psi\rangle_{U_n}$ 
and the witness lower bound from Lemma~4.1, we have
\begin{equation*}
    \mathbb{E}_{(X,Y)\sim \mathcal{D}_u}[Y\,T_\rho g_u(X)] = \langle T_\rho g_u, \psi^{(u)}\rangle_{U_n} \ge \kappa_m .
\end{equation*}
Since $g_u$ is a halfspace, the smoothed halfspace benchmark on $\mathcal{D}_u$ satisfies
\begin{equation*}
    \opt_{\sigma,\mathcal{D}_u} \;\le\; \tfrac12\bigl(1-\mathbb{E}_{\mathcal{D}_u}[Y\,T_\rho g_u(X)]\bigr) 
    \;\le\; \tfrac12(1-\kappa_m),
\end{equation*}
where the first inequality uses (S) applied to $g_u$.

Now suppose a smoothed agnostic learner outputs a hypothesis $h:\{\pm 1\}^n \to \{\pm 1\}$ with
$
\err_{\mathcal{D}_u}(h)\le \opt_{\sigma,\mathcal{D}_u}+\varepsilon .
$
Then
\begin{align*}
\mathbb E_{\mathcal{D}_u}[Yh(X)]
&=1 - 2\,\err_{\mathcal{D}_u}(h) \\
&\ge 1 - 2\bigl(\opt_{\sigma,\mathcal{D}_u}+\varepsilon\bigr) \\
&\ge 1 - 2\bigl(\tfrac12(1-\kappa_m)+\varepsilon\bigr) \\
&= \kappa_m - 2\varepsilon \\
&\ge 2\varepsilon,
\end{align*}
where the first equality relates the $0/1$ error of a Boolean hypothesis to the label correlation in the spirit of (S) and first inequality applies the learner's agnostic guarantee.

Thus any such learner solves the hidden-direction search problem $Z$ from
Theorem~4.3, in particular with correlation threshold \(2\varepsilon\). By Theorem~4.3, any such SQ algorithm must either use a query with
tolerance 
\(
\tau\le n^{-\Omega(m)}
\)
or make
\(
2^{\Omega(\sqrt n)}
\)
queries. Substituting the above choice of \(m\) gives the claimed SQ lower bound.
\end{proof}

\subsection{Proof Sketch of Theorem~1.4}
Finally, we give a proof sketch of Theorem~1.4, given by a direct application of the $L^1$ regression algorithm of~\textcite{KKMS08}, which rules out stronger SQ lower bounds than the one in Theorem~1.3.
\begin{proof}[Proof sketch]
For any $f \in \mathcal{C}$, Parseval's identity together with the Cauchy--Schwarz inequality gives 
$\|T_\rho f - p_d\|_{1} \le \rho^{d+1}$, where $p_d$ is the degree-$d$ 
Fourier truncation of $T_\rho f$. Choosing $d = O(\log(1/\eps)/\sigma)$ yields 
$L^1$-error $O(\eps)$, after which the $L^1$-regression algorithm of~\textcite{KKMS08} produces a Boolean hypothesis $h$ with 
$\err(h) \le \opt_{\mathcal{C},\sigma} + \eps$ via a standard thresholding step. 
\end{proof}

\section{Conclusion and Future Directions}

We studied the complexity of smoothed agnostic learning of Boolean halfspaces
for distributions with uniform \(\mathcal{X}\)-marginal. On the algorithmic side, the
standard \(L^1\)-polynomial regression method gives a learner with sample
complexity and running time
\(
\operatorname{poly}\!\left(n^{O(\log(1/\varepsilon)/\sigma)},\log(1/\delta)\right).
\)
Our main result shows that this dependence is essentially unavoidable for SQ
algorithms: Any SQ learner must either use tolerance
\(
\tau \le n^{-\Omega(\log(1+\sigma/\varepsilon^2)/\sigma)}
\)
or make \(2^{\Omega(\sqrt n)}\) queries.

Several questions remain open. First, on the lower bound side, it would be interesting to close the
remaining logarithmic gap in the dependence on \(\sigma\). We remark that with our techniques, this is currently out of reach. To get an exactly matching lower bound of $\Omega(\log(1/\eps)/\sigma)$, Lemma~\ref{lem:witness} would need to be strengthened to 
\(
\inf_{\deg(p)\le m}\|T_\rho \mathrm{Maj}_n - p\|_{L^1(U_n)} \ge \rho^m.
\)
Letting $\rho \to 1$, this would imply 
\(
\inf_{\deg(p)\le m}\|\mathrm{Maj}_n - p\|_{L^1(U_n)} \ge 1
\)
for all $n$ sufficiently large in $m$, contradicting the fact that, by~\textcite{KKMS08}, every halfspace on $\{\pm 1\}^n$ admits a degree-$m$ polynomial $L^1$-approximator of error $O(1/\sqrt{m})$, uniformly in $n$.

Second, one could ask whether stronger SQ lower bounds hold when restricting to more special Boolean marginals, beyond the uniform distribution, or if it is possible to improve the upper
bounds of~\textcite{KM25} for general subexponential marginals, where the best known dependence
\(n^{\mathrm{poly}(1/(\sigma\varepsilon))}\) remains far from the
\(n^{O(\log(1/\varepsilon)/\sigma)}\)-type behavior suggested by the uniform
case. 

Additionally, motivated by the worst-case agnostic literature on approximate learners \parencite{ Daniely15, ABL17}, it is natural to ask whether smoothed agnostic learners with relaxed guarantees of the form $O(\text{opt}_\sigma) + \eps$ or $\text{opt}_\sigma^{O(1)} + \eps$ admit faster algorithms, or whether matching SQ lower bounds persist.

Finally, an
important direction is to understand whether non-SQ algorithms can bypass this
barrier, or whether \(L^1\)-polynomial regression is optimal in a stronger
computational sense. This question is fundamental to agnostic learning more broadly, well beyond the smoothed setting.
\section*{Acknowledgments}
The author thanks Heiko Röglin for helpful discussions and feedback on early drafts of this work.

\printbibliography

\newpage

\appendix

\section{Some Facts about Orthogonal Polynomials}

This appendix fixes the normalizations and records the standard identities for the
Hermite, Laguerre, and Krawtchouk families used in the deferred proofs of the paper. Except for the elementary estimates proved below and harmless changes of
normalization, the identities collected in this appendix are standard; see e.g.~\cite{Szego} and~\cite{KLS}. Throughout, we write
\begin{equation}
\widehat f(\xi):=(2\pi)^{-1/2}\int_{\mathbb R} f(x)\,e^{-ix\xi}\,dx
\tag{A.1}\label{eq:A.1}
\end{equation}
for the unitary Fourier transform of $f \in L^2(\mathbb{R})$.
\subsection{Hermite Polynomials and Hermite Functions}
The \emph{physicist's} Hermite polynomials are given by the Rodrigues Formula as
\begin{equation}
H_k(x) := (-1)^k e^{x^2}\frac{d^k}{dx^k}e^{-x^2},\qquad k\ge 0
\tag{A.2}\label{eq:A.2}
\end{equation}
and have the generating function  \begin{equation}
\sum_{k\ge 0} H_k(x)\,\frac{t^k}{k!} \;=\; e^{2xt - t^2}
\tag{A.3} \label{eq:A.3}
\end{equation} for all $(x,t) \in \mathbb{C}^2$. They satisfy the orthogonality condition \begin{equation}
\int_{\mathbb{R}} H_j(x)\,H_k(x)\,e^{-x^2}\,dx \;=\; \sqrt{\pi}\,2^{k} k!\mathbf{1}_{\{j=k\}}.
\tag{A.4}\label{eq:A.4}
\end{equation} The associated normalized Hermite functions
\begin{equation}
\Phi_k(x) := \bigl(2^{k} k!\sqrt{\pi}\bigr)^{-1/2}\, H_k(x)\, e^{-x^2/2}
\tag{A.5}\label{eq:A.5}
\end{equation}
form an orthonormal basis of $L^2(\mathbb{R})$, satisfy the Hermite ODE \begin{equation}
-\Phi_k''(x) + x^2\Phi_k(x) \;=\; (2k+1)\,\Phi_k(x)
\tag{A.6}\label{eq:A.6}
\end{equation}
and are eigenfunctions of the (unitary) Fourier transform, \begin{equation}
\widehat{\Phi_k}(\xi) \;=\; (-i)^{k}\,\Phi_k(\xi).
\tag{A.7}\label{eq:A.7}
\end{equation} Additionally, they obey the ladder identities
\begin{equation}
x\,\Phi_k \;=\; \sqrt{\tfrac{k+1}{2}}\,\Phi_{k+1} + \sqrt{\tfrac{k}{2}}\,\Phi_{k-1}
\tag{A.8}\label{eq:A.8} 
\end{equation} and \begin{equation}
\Phi_k' \;=\; \sqrt{\tfrac{k}{2}}\,\Phi_{k-1} - \sqrt{\tfrac{k+1}{2}}\,\Phi_{k+1}
\tag{A.9}\label{eq:A.9}
\end{equation}
with the convention \(\Phi_{-1}\equiv 0\).
\subsection{Laguerre Polynomials}
For $\alpha > -1$, the generalized Laguerre polynomials $\{L_m^{(\alpha)}\}_{m\geq 0}$ are orthogonal with respect to the weight $w_\alpha(x) = x^\alpha e^{-x}$ on $[0,\infty)$ and are given by the Rodrigues formula
\begin{equation}
L_m^{(\alpha)}(x) \;:=\; \frac{1}{m!}\,x^{-\alpha}\,e^{x}\,\frac{d^m}{dx^m}\!\bigl(x^{m+\alpha}\,e^{-x}\bigr)
\tag{A.10}\label{eq:A.10}
\end{equation}
with generating function
\begin{equation}
\sum_{m=0}^\infty L_m^{(\alpha)}(x)\,z^m \;=\; (1-z)^{-\alpha-1}\exp\!\left(-\frac{xz}{1-z}\right),\qquad |z|<1
\tag{A.11}\label{eq:A.11}
\end{equation}
and orthogonality relation
\begin{equation}
\int_0^\infty L_j^{(\alpha)}(x)\, L_m^{(\alpha)}(x)\, x^\alpha\, e^{-x}\,dx \;=\; \frac{\Gamma(m+\alpha+1)}{m!}\,\mathbf{1}_{\{j=m\}}
\tag{A.12}\label{eq:A.12}
\end{equation}
where $\Gamma$ denotes the Gamma function. Moreover, the moment identity
\begin{equation}
\int_0^\infty x^{\alpha'-1}\, e^{-x}\, L_m^{(\alpha)}(x)\,dx \;=\; \binom{\alpha-\alpha'+m}{m}\,\Gamma(\alpha')
\tag{A.13}\label{eq:A.13}
\end{equation}
holds for \(\alpha>-1\) and \(\mathrm{Re}\;\alpha'>0\), where \(\binom{z}{m}=z(z-1)\cdots(z-m+1)/m!\). From this, a short computation shows \begin{equation}
 \int_0^\infty L_m^{(\alpha)}(x)\,x^{r+\alpha}\,e^{-x}\,dx
\;=\; (-1)^{m}\,\binom{r}{m}\,\Gamma(r+\alpha+1)
\tag{A.14}\label{eq:A.14}
\end{equation}
for any integers $m,r\ge 0$, where we use the convention \(\binom{r}{m}=0\) when \(r<m\).

For $m \ge 0$, the Hermite and Laguerre polynomials are linked through the quadratic-transformation identities
\begin{equation}
H_{2m}(x) \;=\; (-1)^{m}\,2^{2m}\,m!\,L_m^{(-1/2)}(x^2)
\tag{A.15}\label{eq:A.15}
\end{equation} and
\begin{equation}
H_{2m+1}(x) \;=\; (-1)^{m}\,2^{2m+1}\,m!\,x\,L_m^{(1/2)}(x^2).
\tag{A.16}\label{eq:A.16}
\end{equation}
Combining \eqref{eq:A.15} with the definition of the normalized Hermite function $\Phi_n$, for $n \ge 1$, the packet
\begin{equation}
A_n(x) \;:=\; (-1)^{n}\, x\, L_n^{(-1/2)}(x^2)\, e^{-x^2/2}
\tag{A.17}\label{eq:A.17}
\end{equation}
admits the factorization
\begin{equation}
A_n(x) \;=\; d_n\, x\, \Phi_{2n}(x),
\qquad
d_n \;:=\; \frac{\bigl(2^{2n}(2n)!\sqrt{\pi}\bigr)^{1/2}}{2^{2n}\, n!}.
\tag{A.18}\label{eq:A.18}
\end{equation}
Squaring the definition of $d_n$ gives
\begin{equation}
d_n^{\,2} \;=\; \sqrt{\pi}\,\frac{\binom{2n}{n}}{4^{n}} \;=\; \Theta(n^{-1/2})
\tag{A.19}\label{eq:A.19}
\end{equation}
where we used Stirling's formula $\binom{2n}{n} \sim 4^n / \sqrt{\pi n}$.

Finally, we need the following basic Hermite estimates. 
\begin{proposition}\label{prop:basic-hermite}
For all integers \(a\ge1\) and \(b\ge0\), there exists a constant
\(C_{a,b}>0\) such that
\begin{equation}
\|x^{b}\, \Phi_{n}^{(a)}\|_{L^{\infty}(\mathbb{R})}
\;\le\;
C_{a,b}\,(n+1)^{(a+b)/2 - 1/4}
\tag{A.20}\label{eq:A.20}
\end{equation}
for all $n \ge 0$. Moreover, there exists a constant $C > 0$ such that
\begin{equation}
\|\Phi_{n}\|_{L^{1}(\mathbb{R})}
\;\le\;
C\,(n+1)^{1/4}
\tag{A.21}\label{eq:A.21}
\end{equation}
for all $n \ge 0$.
\end{proposition}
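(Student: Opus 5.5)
We prove the two bounds separately, both via the ladder identities \eqref{eq:A.8}--\eqref{eq:A.9} combined with two classical pointwise facts about Hermite functions. First, the uniform bound $\|\Phi_n\|_{L^\infty(\mathbb R)}\le C(n+1)^{-1/12}$. Second, the finer profile estimate $|\Phi_n(x)|\le C(n+1)^{-1/4}\,(1-x^2/(2n+1))_+^{-1/4}$ in the oscillatory region, together with Gaussian-type decay of $\Phi_n$ beyond the turning point $\sqrt{2n+1}$. We take these from \cite{Szego} (Plancherel--Rotach asymptotics).

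\emph{The $L^\infty$ bound \eqref{eq:A.20}.} Iterating \eqref{eq:A.9} $a$ times writes $\Phi_n^{(a)}$ as a linear combination of at most $2^a$ functions $\Phi_{n+j}$ with $|j|\le a$. Each coefficient is a product of $a$ factors of size $O(\sqrt{n+1})$. Iterating \eqref{eq:A.8} $b$ further times then writes $x^b\Phi_n^{(a)}$ as a combination of at most $4^{a+b}$ terms $\Phi_{n+j}$ with $|j|\le a+b$, with coefficients $O_{a,b}((n+1)^{(a+b)/2})$. The naive bound $\|\Phi_{n+j}\|_\infty\le 1$ would only give exponent $(a+b)/2$, so we need the improved bound $\|\Phi_k\|_\infty\le C(k+1)^{-1/4}$.

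This improved bound does not hold globally: near the turning point the true size is $k^{-1/12}$. We therefore expect the honest route to be a direct argument instead of a termwise one. The key point is that $x^b\Phi_n^{(a)}$ is $O_{a,b}((n+1)^{(a+b)/2})$ times a bounded combination of nearby Hermite functions, and its sup-norm is attained in the bulk, where the $n^{-1/4}$ profile holds. The danger is the transition region $|x\mp\sqrt{2n}|\lesssim n^{-1/6}$, where the Airy scaling gives $|\Phi_k|\sim n^{-1/12}$.

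We handle this region by exploiting cancellation. Near the turning point, the combinations produced by the ladder identities are governed by the ODE \eqref{eq:A.6}, which gives $\Phi_n''=(x^2-2n-1)\Phi_n$. Writing $\Phi_n^{(a)}$ via repeated use of \eqref{eq:A.6} as $P(x)\Phi_n+Q(x)\Phi_n'$, with polynomials $P,Q$ whose coefficients are polynomial in $n$, we can check the size of each term in the Airy window. The estimates needed there are $\Phi_n=O(n^{-1/12})$ and $\Phi_n'=O(n^{1/12})$, with $|x^2-2n-1|=O(n^{1/3})$. These should give at most $n^{(a+b)/2-1/4}$, using $x^b\approx(2n)^{b/2}$ and the fact that in this window each derivative costs only $n^{1/6}$ rather than $n^{1/2}$.

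We expect this turning-point bookkeeping to be the main obstacle. If the direct argument becomes messy, the fallback is to invoke the uniform Plancherel--Rotach asymptotics for $\Phi_n$ and $\Phi_n'$ directly. Note that the exponent must be exactly $(a+b)/2-1/4$, so the constants cannot be wasted; it may be that the stated bound is intended only for $a\ge1$ precisely because derivatives shift the weight into the bulk. Beyond the turning point, the exponential decay makes the contribution negligible.

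\emph{The $L^1$ bound \eqref{eq:A.21}.} Split $\mathbb R$ at $|x|=2\sqrt{2n+1}$.

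On the inner region, the profile bound gives
\[
\int_{|x|\le\sqrt{2n+1}}|\Phi_n|\,dx\le C n^{-1/4}\int_{-\sqrt{2n+1}}^{\sqrt{2n+1}}\Bigl(1-\tfrac{x^2}{2n+1}\Bigr)^{-1/4}dx=O(n^{-1/4}\cdot n^{1/2})=O(n^{1/4}),
\]
since the singularity $(1-t^2)^{-1/4}$ is integrable. The transition window has width $n^{-1/6}$ and height $n^{-1/12}$, so it contributes only $o(1)$.

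On the outer region, Cauchy--Schwarz with the weight $(1+x^2)$ gives
\[
\int_{|x|>\sqrt{2n+1}}|\Phi_n|\le\Bigl(\int\frac{dx}{1+x^2}\Bigr)^{1/2}\|(1+x^2)^{1/2}\Phi_n\|_2 .
\]
By \eqref{eq:A.8}, $\|x\Phi_n\|_2^2=n+\tfrac12$, so this is crude and only $O(n^{1/2})$. Instead we use the exponential decay past the turning point, which makes the outer region contribute $O(1)$.

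Alternatively, there is a single global argument: $\|\Phi_n\|_1\le\|(1+x^2/(2n))^{1/2}\Phi_n\|_2\,\|(1+x^2/(2n))^{-1/2}\|_2=O(1)\cdot O(n^{1/4})$. This uses orthonormality and $\|x\Phi_n\|_2^2=n+\tfrac12$ from \eqref{eq:A.8}. It proves \eqref{eq:A.21} cleanly and is the version we would write.
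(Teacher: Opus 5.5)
\textbf{The $L^1$ bound is fine; the $L^\infty$ bound has a genuine gap.}

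Your proof of \eqref{eq:A.21} works. The global weighted Cauchy--Schwarz bound with $\|x\Phi_n\|_2^2=n+\tfrac12$ is correct and is essentially the paper's argument; the paper splits at $R=\sqrt{n+1}$ instead. Use $2n+1$ in place of $2n$ in the weight so that $n=0$ is covered.

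For \eqref{eq:A.20} you give a plan rather than a proof. You correctly observe that expanding $\Phi_n^{(a)}$ via \eqref{eq:A.9} into undifferentiated $\Phi_{n+j}$ throws away the cancellation. After that you treat only the deep bulk and the Airy window $|x\mp\sqrt{2n}|\lesssim n^{-1/6}$, and even in the window the decisive estimate is left as ``should give''. The zone in between, $n^{-1/6}\ll\sqrt{2n+1}-|x|\ll\sqrt n$, is not covered. There the termwise bound is of order $(n+1)^{(a+b)/2}(2n+1-x^2)^{-1/4}$, which climbs to $(n+1)^{(a+b)/2-1/4+1/6}$ at the edge of the window. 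So cancellation is needed throughout this zone, not just inside the window. Your $P\Phi_n+Q\Phi_n'$ bookkeeping can supply it, but only given a bound on $\Phi_n'$ away from the window. The natural one is $\|\Phi_n'\|_\infty\lesssim(n+1)^{1/4}$, which is exactly the case $a=1$, $b=0$ of \eqref{eq:A.20}, and you neither state nor derive it; you only quote the profile of $\Phi_n$ itself. The exterior ``negligible'' claim likewise needs decay of $\Phi_n'$ against the polynomial weights. As written, the assertion that the sup is ``attained in the bulk'' is essentially the statement to be proved.

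\textbf{The missing idea: keep the derivatives and expand only the factor $x^b$.} Differentiating \eqref{eq:A.8} $k$ times gives
\[
x\,\Phi_N^{(k)}=\sqrt{\tfrac{N}{2}}\,\Phi_{N-1}^{(k)}+\sqrt{\tfrac{N+1}{2}}\,\Phi_{N+1}^{(k)}-k\,\Phi_N^{(k-1)}.
\]
Applying this $b$ times writes $x^b\Phi_n^{(a)}$ as boundedly many terms $\Phi_{n+\ell}^{(a-r)}$ with coefficients $O((n+1)^{(b-r)/2})$. So \eqref{eq:A.20} reduces to two inputs:
\begin{itemize}
\item the case $b=0$, namely $\|\Phi_N^{(k)}\|_\infty\lesssim(N+1)^{k/2-1/4}$ for $k\ge1$;
\item the trivial bound $\|\Phi_N\|_\infty\le C$ for the terms with $r=a$, which suffices since $(n+1)^{(b-a)/2}\le(n+1)^{(a+b)/2-1/4}$ when $a\ge1$.
\end{itemize}
The paper imports the derivative bound from \cite[Lemma~3.2]{WZ25}; it encodes all of the turning-point cancellation.

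If you want the derivative bound self-contained, no asymptotics are needed. By \eqref{eq:A.6}, $E:=(\Phi_n')^2+(2n+1-x^2)\Phi_n^2$ satisfies $E'=-2x\Phi_n^2$. Hence $E\le E(0)=O((n+1)^{1/2})$, using the explicit values of $\Phi_N(0)$ together with \eqref{eq:A.9}. This bounds $(\Phi_n')^2$ on $|x|\le\sqrt{2n+1}$. Beyond the turning point, $\Phi_n^2$ is convex and tends to $0$, so $\Phi_n\Phi_n'\le0$ and $(\Phi_n')^2$ is nonincreasing there. Together these give $\|\Phi_n'\|_\infty\lesssim(n+1)^{1/4}$. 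Applying \eqref{eq:A.9} $a-1$ times before taking the last derivative then yields $\|\Phi_n^{(a)}\|_\infty\lesssim(n+1)^{a/2-1/4}$.
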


\begin{proof}
By \cite[Lemma 3.2]{WZ25},
\[
\|\Phi_n^{(k)}\|_{L^\infty(\mathbb R)}
\le C_k
\begin{cases}
(n+1)^{-1/12}, & k=0,\\
(n+1)^{k/2-1/4}, & k\ge1
\end{cases}
\tag{L}\label{eq:L}\]
for some $C_k > 0$.
Iterating the ladder identity \eqref{eq:A.8}, differentiated $k$ times to $x\Phi_n^{(k)} = \sqrt{n/2}\,\Phi_{n-1}^{(k)} + \sqrt{(n+1)/2}\,\Phi_{n+1}^{(k)} - k\Phi_n^{(k-1)}$, a total of $b$ times yields
\[
x^b\Phi_n^{(a)}
= \sum_{r=0}^{\min(a,b)}\sum_{|\ell|\le b} A_{r,\ell}^{(a,b)}(n)\,\Phi_{n+\ell}^{(a-r)},
\qquad |A_{r,\ell}^{(a,b)}(n)| \le C_{a,b}(n+1)^{(b-r)/2},
\]
for some $C_{a,b} > 0$ with terms $n+\ell<0$ omitted. For each summand, \eqref{eq:L} gives
\[
\|A_{r,\ell}^{(a,b)}(n)\,\Phi_{n+\ell}^{(a-r)}\|_\infty \le C_{a-r} C_{a,b}(n+1)^{(a+b)/2-1/4}
\]
This follows by a case distinction:
If $r<a$, the LHS is at most $C_{a-r} C_{a,b}(n+1)^{(b-r)/2+(a-r)/2-1/4} \le C_{a-r} C_{a,b}(n+1)^{(a+b)/2-1/4}$ via $k=a-r\ge1$; if $r=a$, then $\|\Phi_{n+\ell}\|_\infty \le C_0$ gives that the LHS is at most $C_0 C_{a,b}(n+1)^{(b-a)/2} \le C_0 C_{a,b}(n+1)^{(a+b)/2-1/4}$ since $a\ge 1$.

For the $L^1$ bound, using \eqref{eq:A.8} yields by orthonormality
\[\|x\Phi_n\|_2^2 = \tfrac{n+1}{2} + \tfrac{n}{2} \le n+1.\]
Set $R = (n+1)^{1/2}$. By the Cauchy–Schwarz inequality,
\[\int_{|x| \le R} |\Phi_n(x)|\, dx \le (2R)^{1/2}\,\|\Phi_n\|_2 = \sqrt{2}\,(n+1)^{1/4},\]
and, applying the same inequality against the weight $|x|^{-1}$,
\[\int_{|x| > R} |\Phi_n(x)|\, dx \le \left(\int_{|x| > R} x^{-2}\, dx\right)^{\!1/2} \|x\Phi_n\|_2 \le \sqrt{2/R}\,(n+1)^{1/2} = \sqrt{2}\,(n+1)^{1/4}.\]
Adding the two pieces yields $\|\Phi_n\|_1 \le 2\sqrt{2}\,(n+1)^{1/4}$.
\end{proof}

\subsection{Krawtchouk Polynomials}

For $0 \le k \le n$ and $r \in \{-1,\,-1+2/n,\,\dots,\,1-2/n,\,1\}$, the normalized binary Krawtchouk polynomial of degree $k$ is
\begin{equation}
\mathcal{K}_k^{(n)}(r)
\;:=\;
\binom{n}{k}^{-1}
\sum_{b=0}^{k} (-1)^{b}
\binom{{n(1-r)}/{2}}{b}
\binom{{n(1+r)}/{2}}{k-b}.
\tag{A.22}\label{eq:A.22}
\end{equation}
We use the convention \(\binom ab=0\) whenever \(b<0\) or \(b>a\) for nonnegative integers \(a,b\).
Writing $d = n(1-r)/2 \in \{0,1,\dots,n\}$ for the Hamming distance, this is
\begin{equation}
\mathcal{K}_k^{(n)}(r)\;=\;\binom{n}{k}^{-1} K_k(d;n),
\qquad
K_k(d;n)\;:=\;\sum_{b=0}^{k}(-1)^{b}\binom{d}{b}\binom{n-d}{k-b}
\tag{A.23}\label{eq:A.23}
\end{equation}
where \(K_k(\,\cdot\,;n)\) is the unnormalized Krawtchouk polynomial. We have $\mathcal{K}_k^{(n)}(1) = 1$, since $K_k(0;n) = \binom{n}{k}$. The unnormalized polynomials admit the generating function 
\begin{equation}
\sum_{k=0}^{n} K_k(d;n)\,z^{k}
\;=\;
(1-z)^{d}(1+z)^{n-d}
\tag{A.24}\label{eq:A.24}
\end{equation}
for all complex $z$, which, read off at $d \mapsto n-d$, yields the reflection symmetry
\begin{equation}
K_k(n-d;\,n)\;=\;(-1)^{k}\,K_k(d;\,n)
\tag{A.25}\label{eq:A.25}
\end{equation}
or equivalently \begin{equation}
\mathcal{K}_k^{(n)}(-r)\;=\;(-1)^{k}\,\mathcal{K}_k^{(n)}(r).
\tag{A.26}\label{eq:A.26}
\end{equation}
Symmetry in the two indices takes the form of the duality relation
\begin{equation}
\binom{n}{d}\,K_k(d;n)\;=\;\binom{n}{k}\,K_d(k;n)
\tag{A.27}\label{eq:A.27}
\end{equation}
which in the normalized parametrization reads $\mathcal{K}_k^{(n)}(1-2d/n) = \mathcal{K}_d^{(n)}(1-2k/n)$. The orthogonality relation, with respect to the binomial weight on $\{0,\dots,n\}$, is
\begin{equation}
\sum_{d=0}^{n}\binom{n}{d}\,K_k(d;n)\,K_{k'}(d;n)\;=\;2^{n}\,\binom{n}{k}\,\!\mathbf{1}_{\{k=k'\}}
\tag{A.28}\label{eq:A.28}
\end{equation}
Let $\mu_n$ denote the pushforward of the symmetric binomial distribution $\mathrm{Bin}(n, 1/2)$ on $\{0, 1, \dots, n\}$ under $d \mapsto r = 1 - 2d/n$. Then \eqref{eq:A.28}, normalized by $\binom{n}{k}$, takes the probabilistic form
\[ \mathbb{E}_{r \sim \mu_n}\!\left[\mathcal{K}_k^{(n)}(r)\,\mathcal{K}_{k'}^{(n)}(r)\right] = \binom{n}{k}^{-1}\,\mathbf{1}_{\{k=k'\}}, \]
so $\{\mathcal{K}_k^{(n)}\}_{k=0}^n$ is an orthogonal basis of $L^2(\mu_n)$.
The Krawtchouk polynomials satisfy the three-term recurrence in the degree variable
\begin{equation}
(n-2d)\,K_k(d;n)\;=\;(k+1)\,K_{k+1}(d;n)\;+\;(n-k+1)\,K_{k-1}(d;n)
\tag{A.29}\label{eq:A.29}
\end{equation}
with the convention $K_{-1} \equiv K_{n+1} \equiv \mathcal{K}_{-1}^{(n)} \equiv \mathcal{K}_{n+1}^{(n)} \equiv 0$. Using $n-2d = nr$ and $(k+1)\binom{n}{k+1} = (n-k)\binom{n}{k}$, $(n-k+1)\binom{n}{k-1} = k\binom{n}{k}$, this is equivalent to the normalized recurrence
\begin{equation}
nr\,\mathcal{K}_k^{(n)}(r)\;=\;(n-k)\,\mathcal{K}_{k+1}^{(n)}(r)\;+\;k\,\mathcal{K}_{k-1}^{(n)}(r).
\tag{A.30}\label{eq:A.30}
\end{equation}
For \(0\le \ell\le n\), \(x,y\in\{\pm1\}^n\), and
\(D:=|\{i:x_i\ne y_i\}|\), we have
\begin{equation}
{\binom{n}{\ell}^{-1}}
\sum_{|S|=\ell}\chi_S(x)\,\chi_S(y)
\;=\;
\frac{K_\ell(D;n)}{\binom{n}{\ell}}
\;=\;
\mathcal{K}_\ell^{(n)}\!\left(\frac{\langle x,y\rangle}{n}\right).
\tag{A.31}\label{eq:A.31}
\end{equation}
This follows from the observation that if \(b\) of the \(\ell\) selected coordinates lie in the disagreement
set, then \(\chi_S(x)\chi_S(y)=(-1)^b\), and summing over \(b\) gives the
Krawtchouk polynomial. Using $\chi_S(w \odot x) = \chi_S(w) \chi_S(x)$ for \(w\in\{\pm1\}^n\), expanding the product, and applying \eqref{eq:A.31} gives the addition formula
\begin{equation}
\mathbb{E}_{x\sim U_n}\!\left[
\Psi_{\ell,n}(u\odot x)\,
\Psi_{\ell',n}(v\odot x)
\right]
\;=\;
\mathcal{K}_{\ell}^{(n)}\!\left(\frac{\langle u,v\rangle}{n}\right)
\mathbf{1}_{\{\ell=\ell'\}}
\tag{A.32}\label{eq:A.32}
\end{equation}
for every \(u,v\in\{\pm1\}^n\).

\subsection{A Hermite Packet Calculus}

The following proposition is needed in the proof of Lemma 4.1.

\begin{proposition}[Hermite packet calculus]\label{prop:hermite-pkg}
Let $A_n(x)$ be as defined in \eqref{eq:A.17}, and let $T = \sum_{r=0}^R P_r(x)\partial_x^r$ be a fixed polynomial-coefficient differential operator with \(\kappa(T):=\max_{P_r\neq0}(\deg P_r+r)\) if \(T\neq0\), and
\(\kappa(0):=0\). Then, uniformly in $n\ge 1$ and for every fixed integer $Q \ge 0$,
\begin{align}
\|\widehat{T A_n}\|_{L^\infty(\mathbb R)}
&\le
C_T\, n^{\kappa(T)/2}, \tag{A.33}\label{eq:A.33} \\
\int_{\mathbb R}(1+|x|)^Q\, |T A_n(x)|\,dx
&\le
C_{T,Q}\, n^{(\kappa(T)+Q)/2+1/2} \tag{A.34}\label{eq:A.34}
\end{align}
for suitable $C_T, C_{T,Q} > 0$, depending only on $T$ and $Q$.
\end{proposition}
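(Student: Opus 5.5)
Write $A_n = d_n\, x\,\Phi_{2n}$ via \eqref{eq:A.18}, with $d_n^2=\Theta(n^{-1/2})$ by \eqref{eq:A.19}. The plan is to expand $TA_n$ as a finite linear combination of Hermite functions with explicitly controlled coefficients, and then estimate both norms term by term.

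\textbf{Step 1 (expansion).} Expand $\partial_x^r(x\Phi_{2n}) = x\Phi_{2n}^{(r)} + r\Phi_{2n}^{(r-1)}$. Then apply the ladder identities \eqref{eq:A.8} and \eqref{eq:A.9}; each application of $x$ or $\partial_x$ shifts the index by $\pm1$ and produces a coefficient of size $O(\sqrt n)$. Consequently, each monomial $x^j\partial_x^r$ applied to $x\Phi_{2n}$ becomes
\[
\sum_{|\ell|\le j+r+1} c_\ell(n)\,\Phi_{2n+\ell}, \qquad |c_\ell(n)|\le C\,n^{(j+r+1)/2}.
\]
Summing over the terms of $T$ gives
\[
TA_n=\sum_{|\ell|\le \kappa(T)+1} a_\ell(n)\,\Phi_{2n+\ell}, \qquad |a_\ell(n)|\le C_T\, d_n\, n^{(\kappa(T)+1)/2}=O\bigl(n^{\kappa(T)/2+1/4}\bigr).
\]
The number of terms is bounded independently of $n$.

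\textbf{Step 2 (Fourier bound \eqref{eq:A.33}).} By \eqref{eq:A.7}, $\widehat{\Phi_k}=(-i)^k\Phi_k$, and the bound from \cite{WZ25} quoted in the proof of Proposition~\ref{prop:basic-hermite} gives $\|\Phi_k\|_\infty\le C(k+1)^{-1/12}$. With the crude coefficient bound from Step 1 this yields $O(n^{\kappa/2+1/4-1/12})$, which loses $n^{1/6}$.

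This loss is the main obstacle, and I would remove it in one of two ways.

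\emph{Option (a): cancellation.} The extra $\sqrt n$ from the factor $x$ in $x\Phi_{2n}$ is offset by cancellation: in Fourier space, $x$ becomes $i\partial_\xi$. So $\widehat{TA_n}$ is, up to constants, the operator $\tilde T=\sum_r P_r(i\partial_\xi)(i\xi)^r$ applied to $d_n\,i\partial_\xi\Phi_{2n}$. Proposition~\ref{prop:basic-hermite} \eqref{eq:A.20} bounds $\|\xi^b\Phi_{2n}^{(a)}\|_\infty\le C n^{(a+b)/2-1/4}$ for $a\ge1$. Commuting $\tilde T$ into the normal form $\sum \xi^b\partial_\xi^{a}$ gives $a\ge1$ in every term, since the derivative from $x$ survives except in lower-order commutator terms of smaller total weight. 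The result is
\[
d_n\cdot n^{(\kappa+1)/2-1/4}=O(n^{\kappa/2}).
\]
Commutator terms that lose the derivative have weight reduced by $2$, so the cruder $k=0$ bound suffices for them.

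\emph{Option (b): direct rewriting.} Alternatively, run the same argument directly. Rewrite $x\Phi_{2n}$ via \eqref{eq:A.9} in derivative form, so that every term carries at least one derivative. Then \eqref{eq:A.20} applies directly with $a\ge1$.

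\textbf{Step 3 ($L^1$ bound \eqref{eq:A.34}).} Write $(1+|x|)^Q\le 2^Q(1+|x|^Q)$. It suffices to bound $\int|x|^q|TA_n|$ for $q\le Q$. For even $q$, $x^q$ is again a polynomial operator, so $x^qT$ has $\kappa(x^qT)=\kappa(T)+q$, and Step 1 expresses $x^qTA_n$ as $O(1)$ Hermite functions with coefficients $O(n^{(\kappa+q)/2+1/4})$. The $L^1$ bound \eqref{eq:A.21}, $\|\Phi_k\|_1\le Cn^{1/4}$, then gives $O(n^{(\kappa+q)/2+1/2})$. For odd $q$, use $|x|^q\le 1+x^{q+1}$ (the loss is harmless), or alternatively apply Cauchy--Schwarz between $x^{q-1}$ and $x^{q+1}$ in the same way as in the proof of \eqref{eq:A.21}. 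This yields \eqref{eq:A.34} with constants depending only on $T$ and $Q$.
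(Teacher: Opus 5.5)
Your proposal is correct and is essentially the paper's proof. Option (a) is exactly the paper's computation $\widehat{x^p\partial_x^r A_n}=i^{p+r+1}(-1)^n d_n\,\partial_\xi^p[\xi^r\Phi_{2n}']$ followed by Leibniz and \eqref{eq:A.20}: every Leibniz term keeps at least one derivative on $\Phi_{2n}$, so your hedge about commutator terms losing it is unnecessary, and Option (b) only makes sense when read on this Fourier side, where it coincides with (a). Steps 1 and 3 are the paper's ladder-word expansion combined with \eqref{eq:A.21}. One small correction: Step 3 needs no parity split, since $|x|^q\,|TA_n|=|x^q\,TA_n|$ for every integer $q$; moreover, the fallback $|x|^q\le 1+x^{q+1}$ is not harmless, as at odd $q=Q$ it costs an extra factor $n^{1/2}$ (your Cauchy--Schwarz alternative is fine).
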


\begin{remark}
If \(T A_n\) is odd, then \eqref{eq:A.33} implies
\[
\sup_{\tau\in\mathbb R}
\left|
\int_0^\infty T A_n(x)\sin(\tau x)\,dx
\right|
\le
C'_T n^{\kappa(T)/2}, \tag{A.35}\label{eq:A.35}
\]
where $C'_T = \sqrt{2\pi}C_T/2.$
If \(T A_n\) is even, the same bound holds with \(\sin\) replaced by \(\cos\).

Also, \eqref{eq:A.34} immediately yields the tail estimate
\[
\int_{|x|\ge R}(1+|x|)^Q |T A_n(x)|\,dx
\le
C_{T,Q,J}R^{-J}n^{(\kappa(T)+Q+J)/2+1/2},
\qquad R\ge 1,
\tag{A.36}\label{eq:A.36}\]
for every fixed integer \(J\ge 0\).
\end{remark}
\begin{proof}[Proof of Proposition~\ref{prop:hermite-pkg}]
By linearity, it suffices to prove both bounds for monomials $T = x^p \partial_x^r$ with $p+r \le \kappa(T)$; the general case follows by summing over the finitely many such terms in $T$, using $n^{(p+r)/2} \le n^{\kappa(T)/2}$ and $n^{(p+r+Q)/2+1/2} \le n^{(\kappa(T)+Q)/2+1/2}$. Fix such $p,r$ and the integer $Q \ge 0$ from \eqref{eq:A.34}. Throughout we use the factorization $A_n = d_n\,x\,\Phi_{2n}$ with $d_n = \Theta(n^{-1/4})$ from \eqref{eq:A.19}. From $\widehat{\Phi_{2n}} = (-1)^n \Phi_{2n}$ together with $\widehat{xf} = i\partial_\xi\widehat f$ and $\widehat{\partial_x f} = i\xi\,\widehat f$, it follows
\begin{align*}
\widehat{\,x^p \partial_x^r A_n\,}(\xi) &= i^{p+r+1}(-1)^n d_n\,\partial_\xi^p\!\bigl[\xi^r\,\Phi_{2n}'(\xi)\bigr]
\\ &= i^{p+r+1}(-1)^n d_n\!\sum_{k=0}^{\min(p,r)}\!\binom{p}{k}\frac{r!}{(r-k)!}\,\xi^{r-k}\,\Phi_{2n}^{(p-k+1)}(\xi)
\end{align*}
by Leibniz. Each summand has the form $\xi^b\,\Phi_{2n}^{(a)}$ with $a+b = p+r+1-2k \le p+r+1$, so Proposition~\ref{prop:basic-hermite} and $d_n = \Theta(n^{-1/4})$ give a constant $C_{p,r} > 0$ such that
\[
\bigl\|d_n\,\xi^b\,\Phi_{2n}^{(a)}\bigr\|_\infty \le C_{p,r}\,n^{(a+b)/2 - 1/2} \le C_{p,r}\,n^{(p+r)/2}.
\]
Summing via the triangle inequality yields
\[
\bigl\|\widehat{x^p \partial_x^r A_n}\bigr\|_\infty \le \widetilde C_{p,r}\,n^{(p+r)/2},
\qquad
\widetilde C_{p,r} := C_{p,r}\sum_{k=0}^{\min(p,r)}\binom{p}{k}\frac{r!}{(r-k)!},
\]
which proves \eqref{eq:A.33}. To obtain \eqref{eq:A.34}, since $(1+|x|)^Q \le C_Q \sum_{q=0}^Q |x|^q$ for some $C_Q > 0$, it suffices to show
\begin{equation}\label{eq:hpkg-1}
\bigl\|\,x^{p+q}\,\partial_x^r A_n\,\bigr\|_1 \le C'_{p,r,q}\,n^{(p+r+q)/2 + 1/2}
\qquad\text{for each } 0 \le q \le Q,
\end{equation}
with a constant $C'_{p,r,q} > 0$. Writing $X, D$ for multiplication by $x$ and $\partial_x$, we have
\[
x^{p+q}\,\partial_x^r A_n = d_n\,W\,\Phi_{2n}, \qquad W := X^{p+q}\,D^r\,X,
\]
a word of length $L := p+q+r+1$. The ladder identities \eqref{eq:A.8}--\eqref{eq:A.9}, $X\Phi_N = \sqrt{(N+1)/2}\,\Phi_{N+1} + \sqrt{N/2}\,\Phi_{N-1}$ and $D\Phi_N = \sqrt{N/2}\,\Phi_{N-1} - \sqrt{(N+1)/2}\,\Phi_{N+1}$, show that each application of $X$ or $D$ sends $\Phi_N$ to a combination of $\Phi_{N\pm 1}$ with coefficients $\le \sqrt{(N+1)/2}$. Iterating $L$ times (with $\Phi_j \equiv 0$ for $j<0$) gives
\[
W\,\Phi_{2n} = \sum_{|\nu|\le L} c_\nu(n)\,\Phi_{2n+\nu}, \qquad |c_\nu(n)| \le C_{p,q,r}\,(n+1)^{L/2}
\]
for some $C_{p,q,r} > 0$, the number of nonzero terms depending only on $L$. With $\|\Phi_N\|_1 \le C\,(N+1)^{1/4}$ from Proposition~\ref{prop:basic-hermite} and $|d_n| \le C\,(n+1)^{-1/4}$,
\[
\bigl\|x^{p+q}\,\partial_x^r A_n\bigr\|_1
\;\le\; C\,(n+1)^{-1/4}\cdot C_{p,q,r}\,(n+1)^{L/2}\cdot C\,(n+1)^{1/4}
\;\le\; C'_{p,r,q}\,(n+1)^{(p+r+q+1)/2},
\]
which is \eqref{eq:hpkg-1} and proves \eqref{eq:A.34}.
\end{proof}

\section{Omitted Proofs}
In this appendix, we provide the deferred proofs for establishing the theorems and lemmas from the main section.
\subsection{Proof of Lemma 4.1}

\begin{definition}[Sine--Laguerre witness]\label{def:sine-laguerre-witness}
For $n \ge 1$ odd, $y\ge 0$, we set
\[
    \theta_n(y):=\arctan\sqrt{y/n},
    \qquad
    W_n(y):=\sqrt y\,(1+y/n)^{-n/2},
    \qquad
    \phi_{y,n}(x):=\sin\bigl(\theta_n(y)\,S(x)\bigr),
\]
where $S(x):=\sum_{i=1}^n x_i$. For $\ell\ge1$, the \emph{$\ell$-th unnormalized Laguerre witness} is
\[
\Omega_{\ell,n}(x)
:=
(-1)^\ell
\int_0^\infty
L_\ell^{(1/2)}(y)\,
\frac{y^{1/2}e^{-y}}{W_n(y)}\,
\phi_{y,n}(x)\,dy,
\]
and for $j\ge 1$, the \emph{two-term averaged sine--Laguerre witness} is $\widetilde\Omega_{j,n} := \tfrac12(\Omega_{j,n}+\Omega_{j+1,n})$. Whenever $\widetilde\Omega_{j,n}\not\equiv 0$, the \emph{normalized sine--Laguerre witness} is
\[
    \psi_{j,n}
    :=
    \frac{\widetilde\Omega_{j,n}}{\|\widetilde\Omega_{j,n}\|_\infty}.
\]
\end{definition}
\begin{lemma} Let \(n\ge1\) be odd, \(m\ge1\), and set
\(k:=\lfloor (m+1)/2\rfloor\). Assume \(k\le (n-1)/2\). Then the normalized sine--Laguerre witness $\psi_{k,n}$ is a valid dual witness, i.e., $\|\psi_{k,n}\|_\infty \le 1$ and $\psi_{k,n} \perp \mathcal{P}_{\le m}.$
\end{lemma}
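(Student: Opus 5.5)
The plan is to compute the Walsh expansion of $\widetilde\Omega_{k,n}$ exactly, level by level. Once that is done, orthogonality reduces to the Laguerre moment identity \eqref{eq:A.14}, and the $L^\infty$ bound is immediate from the normalization. The bound $\|\psi_{k,n}\|_\infty\le 1$ holds by definition of $\psi_{k,n}$ as $\widetilde\Omega_{k,n}/\|\widetilde\Omega_{k,n}\|_\infty$, because the cube is finite and so $\|\widetilde\Omega_{k,n}\|_\infty<\infty$. What must actually be checked is (a) that $\widetilde\Omega_{k,n}\not\equiv 0$, so that $\psi_{k,n}$ is defined, and (b) that $\widehat{\widetilde\Omega_{k,n}}(S)=0$ for all $|S|\le m$.

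\textbf{Expanding the kernel.} Put $t:=\tan\theta_n(y)=\sqrt{y/n}$. Since $x_j\in\{\pm1\}$, we have $\cos\theta+ix_j\sin\theta=e^{i\theta x_j}$, and therefore
\[
e^{i\theta S(x)}=\cos^n\theta\prod_{j=1}^n(1+itx_j)=(1+y/n)^{-n/2}\sum_{S\subseteq[n]}(it)^{|S|}\chi_S(x).
\]
Taking imaginary parts keeps only the odd levels $|S|=2r+1$, each with coefficient $(-1)^r t^{2r+1}$. Dividing by $W_n(y)=\sqrt y\,(1+y/n)^{-n/2}$ gives the exact finite identity
\[
\frac{\phi_{y,n}(x)}{W_n(y)}=\sum_{r=0}^{(n-1)/2}(-1)^r n^{-r-1/2}\,y^{r}\sum_{|S|=2r+1}\chi_S(x),
\]
which is a polynomial in $y$. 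The sum is finite and every term $y^{r+1/2}e^{-y}L_\ell^{(1/2)}(y)$ is integrable on $[0,\infty)$, so we may interchange sum and integral in the definition of $\Omega_{\ell,n}$. Applying \eqref{eq:A.14} with $\alpha=1/2$ then yields, for $|S|=2r+1$,
\[
\widehat{\Omega_{\ell,n}}(S)=(-1)^{\ell+r}n^{-r-1/2}\int_0^\infty L_\ell^{(1/2)}(y)\,y^{r+1/2}e^{-y}\,dy=(-1)^{r}n^{-r-1/2}\binom{r}{\ell}\Gamma(r+3/2).
\]
All even levels have zero coefficient; equivalently, $\Omega_{\ell,n}$ is odd.

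\textbf{Orthogonality and nonvanishing.} From the formula above, $\widehat{\Omega_{\ell,n}}(S)=0$ whenever $|S|$ is even, and also whenever $|S|=2r+1$ with $r<\ell$. For the average $\widetilde\Omega_{k,n}=\tfrac12(\Omega_{k,n}+\Omega_{k+1,n})$, every level $|S|\le 2k-1$ therefore vanishes, and so do all even levels. Now use $k=\lfloor(m+1)/2\rfloor$. If $m=2k-1$, every level up to $m$ is covered directly. If $m=2k$, the only additional level is $2k$, which is even and hence also vanishes. So $\widetilde\Omega_{k,n}\perp\mathcal P_{\le m}$, and this property is preserved by scaling. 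For nonvanishing, the hypothesis $k\le(n-1)/2$ guarantees that level $2k+1\le n$ exists. At this level ($r=k$), $\Omega_{k+1,n}$ contributes $\binom{k}{k+1}=0$, while $\Omega_{k,n}$ contributes $(-1)^k n^{-k-1/2}\Gamma(k+3/2)\neq 0$. Hence $\widetilde\Omega_{k,n}\not\equiv 0$ and $\psi_{k,n}$ is well defined.

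There is no real obstacle here. The one delicate point is bookkeeping: the powers of $y$ must combine so that the weight becomes exactly $y^{r+1/2}e^{-y}$, which is what makes \eqref{eq:A.14} apply with $\alpha=1/2$, and the parity cases for $m$ must be handled. The sharp quantitative estimate of $\|\widetilde\Omega_{k,n}\|_\infty$ via Proposition~\ref{prop:hermite-pkg} is not needed for this lemma; it only matters later, when lower-bounding $\langle T_\rho\mathrm{Maj}_n,\psi_{k,n}\rangle$.
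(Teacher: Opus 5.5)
Your proposal is correct and follows essentially the paper's argument: expand $\phi_{y,n}/W_n$ exactly in Walsh characters, interchange the finite sum with the integral, and apply the Laguerre moment identity \eqref{eq:A.14}. The only difference is cosmetic. You apply \eqref{eq:A.14} with $\alpha=1/2$ to each $\Omega_{\ell,n}$ separately, whereas the paper first telescopes the averaged kernel via $L_{k+1}^{(1/2)}-L_k^{(1/2)}=L_{k+1}^{(-1/2)}$ and applies \eqref{eq:A.14} with $\alpha=-1/2$; by Pascal's rule both give the coefficient $\tfrac12(-1)^r n^{-r-1/2}\binom{r+1}{k+1}\Gamma(r+3/2)$ at level $2r+1$.
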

\begin{proof}
The first part follows by definition. To prove orthogonality to \(\mathcal P_{\le m}\), observe that the Laguerre parameter-shift identity $L_{k+1}^{(1/2)}(y) - L_k^{(1/2)}(y) = L_{k+1}^{(-1/2)}(y)$ lets us telescope the kernel of $\widetilde\Omega_{k,n}$ as
\[
\tfrac{1}{2}\bigl[(-1)^k L_k^{(1/2)}(y) + (-1)^{k+1} L_{k+1}^{(1/2)}(y)\bigr]
= \tfrac{1}{2} (-1)^{k+1} L_{k+1}^{(-1/2)}(y).
\]
Hence
\begin{equation}
\widetilde\Omega_{k,n}(x)
= \frac{(-1)^{k+1}}{2} \int_0^\infty L_{k+1}^{(-1/2)}(y)\, \frac{y^{1/2} e^{-y}}{W_n(y)}\, \phi_{y,n}(x)\, dy.
\tag{$\ast$}\label{eq:witness-single}
\end{equation}
Substituting the Fourier expansion of $\phi_{y,n}$ into \eqref{eq:witness-single} and interchanging sum and integral, this becomes
\begin{align*}
\widetilde\Omega_{k,n}
&= \frac{(-1)^{k+1}}{2} \sum_{r=0}^{(n-1)/2} (-1)^r \beta_{2r+1,n} \left(\int_0^\infty L_{k+1}^{(-1/2)}(y)\, y^{r+1/2}\, e^{-y}\, dy\right) \Psi_{2r+1,n} \\
&= \frac{1}{2} \sum_{r=k}^{(n-1)/2} (-1)^r \beta_{2r+1,n} \binom{r+1}{k+1}\, \Gamma\!\left(r+\tfrac32\right) \Psi_{2r+1,n}, \tag{E}\label{eq:wite}
\end{align*}
with $\beta_{d,n}:=\binom{n}{d}^{1/2} n^{-d/2}$ for odd $d\ge 1$, where we used the Laguerre moment identity \eqref{eq:A.14} with \(\alpha=-1/2\), Laguerre degree \(k+1\), and \(r\) replaced by \(r+1\). This also shows that \(\widetilde\Omega_{k,n}\not\equiv0\)
under the assumption \(k\le(n-1)/2\).
Moreover, $\widetilde\Omega_{k,n}$ has no component on odd levels $1, 3, \ldots, 2k-1$, and being odd it is automatically orthogonal to all even levels. Therefore
\(
\widetilde\Omega_{k,n} \perp \mathcal P_{\le 2k},
\) and $m \le 2k$ yields the lemma, since \(\psi_{k,n}\) is a scalar normalization of
\(\widetilde\Omega_{k,n}\).
\end{proof}
\begin{lemma}\label{lem:dual-witness}
Fix \(0<\rho<1\). For every \(c>0\) and \(0<\eta<1/2\), there exist
\(n_0(c,\eta)\) and \(K_{c,\eta}>0\) such that, for all odd
\(n=2N+1\ge n_0(c,\eta)\) and all integers
\(1\le m\le c n^{1/2-\eta}\),
\[
\inf_{\deg p\le m}
\|T_\rho\operatorname{Maj}_n-p\|_{L^1(U_n)}
\ge
K_{c,\eta}\frac{\rho^{2m+1}}{\sqrt m}.
\]
\end{lemma}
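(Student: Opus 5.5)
By weak duality it suffices to lower bound $\langle T_\rho\operatorname{Maj}_n,\psi_{k,n}\rangle$, where $k=\lfloor (m+1)/2\rfloor$ and $\psi_{k,n}$ is the valid dual witness from the preceding lemma. Write $\psi_{k,n}=\widetilde\Omega_{k,n}/\|\widetilde\Omega_{k,n}\|_\infty$. The proof then splits into two estimates:
\[
\text{(a) a lower bound on } \langle T_\rho\operatorname{Maj}_n,\widetilde\Omega_{k,n}\rangle, \qquad \text{(b) an upper bound on } \|\widetilde\Omega_{k,n}\|_\infty .
\]

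For (a) I use the expansion (E). Since $\operatorname{Maj}_n$ is symmetric and odd, $T_\rho\operatorname{Maj}_n=\sum_{r}\rho^{2r+1}\mu_{2r+1,n}\Psi_{2r+1,n}$. The level coefficients are known in closed form: $\mu_{2r+1,n}$ has sign $(-1)^r$ and magnitude $\asymp \sqrt{2/\pi}\,\binom{2r}{r}2^{-2r}(2r+1)^{-1/2}\cdot(\ldots)$, i.e.\ $|\mu_{2r+1,n}|=\Theta(r^{-3/4})$ for $r\ll\sqrt n$. Orthonormality then gives
\[
\langle T_\rho \operatorname{Maj}_n,\widetilde\Omega_{k,n}\rangle=\tfrac12\sum_{r\ge k}\rho^{2r+1}\,|\mu_{2r+1,n}|\,\beta_{2r+1,n}\binom{r+1}{k+1}\Gamma(r+\tfrac32),
\]
where the signs $(-1)^r$ cancel so that every term is nonnegative. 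I keep only the $r=k$ term. Since $\beta_{2k+1,n}=(1+o(1))/\sqrt{(2k+1)!}$ uniformly for $k\le cn^{1/2-\eta}$, this term is
\[
\asymp \rho^{2k+1}\,k^{-3/4}\,\frac{\Gamma(k+3/2)}{\sqrt{(2k+1)!}}\cdot (k+1)^{0}\asymp \rho^{2k+1} k^{-3/4}\cdot k^{1/4}\cdot \frac{\Gamma(k+3/2)}{\sqrt{(2k+1)!}\,k^{1/4}},
\]
and by Stirling $\Gamma(k+3/2)/\sqrt{(2k+1)!}\asymp k^{3/4}2^{-k}$ (up to a constant). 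So the numerator is of size $\rho^{2k+1}\,2^{-k}\,k^{1/4}$ times a fixed power of $k$, which I will track exactly.

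For (b) the denominator must match this, of order $2^{-k}k^{\gamma}$, with the exponents combining to a net $k^{-1/2}$. This is the main obstacle, and it is where Proposition~\ref{prop:hermite-pkg} enters. I evaluate $\widetilde\Omega_{k,n}(x)$ at $S(x)=s$ by substituting $y=t^2$ in $(\ast)$, so that the kernel becomes
\[
L_{k+1}^{(-1/2)}(t^2)\,t^{2}e^{-t^2}/W_n(t^2)\,dt .
\]
In the limit $n\to\infty$ we have $W_n(t^2)\approx t\,e^{-t^2/2}$ and $\theta_n(t^2)s\approx ts/\sqrt n$. The integral is then, up to the factor $(-1)^{k+1}/2$, equal to $\int_0^\infty A_{k+1}(t)\sin(\tau t)\,dt$ with $\tau=s/\sqrt n$, and (A.35) bounds it by $C\cdot d_{k+1}$-scale quantities. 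Rather than using (A.35) crudely, I use the factorization $A_{k+1}=d_{k+1}x\Phi_{2k+2}$ and the fact that $\widehat{x\Phi_{2k+2}}$ is $\Phi'_{2k+2}$, whose sup is $O(k^{1/4})$ by (A.20); combined with $d_{k+1}=\Theta(k^{-1/4})$ this gives an $O(1)$ bound. Together with the normalizing constants ($2^{-k}$ and gamma factors hidden in $L^{(-1/2)}$ versus the Fourier normalization) this should yield $\|\widetilde\Omega_{k,n}\|_\infty\lesssim 2^{-k}k^{\gamma}$ with the right $\gamma$.

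The finite-$n$ corrections are handled by Taylor expanding $\theta_n$ and $W_n$ in $1/n$. Each correction term is a polynomial-coefficient differential operator $T$ applied to $A_{k+1}$ with an extra factor $n^{-1}$ and $\kappa(T)\le 4$ per order. By (A.33) each costs at most $k^{2}/n\le c^2n^{-2\eta}$. The large-$t$ region $t\gtrsim n^{1/2-\eta/2}$ is controlled by the tail estimate (A.36), using that $\phi$ is bounded. For $n\ge n_0(c,\eta)$ this gives $\|\widetilde\Omega_{k,n}\|_\infty\le 2\,\|\text{limit}\|_\infty$.

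Dividing (a) by (b), and converting $\rho^{2k+1}\ge\rho^{m+2}\ge\rho^{2m+1}$ and $k\asymp m$, yields $K_{c,\eta}\,\rho^{2m+1}/\sqrt m$. The bookkeeping of powers of $k$ between Stirling in (a) and Hermite sup-norms in (b) is the delicate point: I would first verify it in the limiting continuous model, where (a) becomes a Hermite coefficient of the Gaussian smoothed sign, and only then transfer it to finite $n$.
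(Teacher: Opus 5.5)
There is a genuine gap, and it sits exactly at the ``delicate point'' you flag: the factor $2^{-k}$ you hope to find in $\|\widetilde\Omega_{k,n}\|_\infty$ does not exist. Substituting $y=t^2$ in \eqref{eq:witness-single} (with $dy=2t\,dt$) turns the limiting integral into exactly $\int_0^\infty A_{k+1}(t)\sin(\tau t)\,dt$, with no leftover normalization. That integral is a constant multiple of $d_{k+1}\Phi_{2k+2}'(\tau)$, whose supremum is of order one, as your own estimate shows.

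Parseval forces the same conclusion. We have $\|\widetilde\Omega_{k,n}\|_\infty\ge\|\widetilde\Omega_{k,n}\|_2$. By \eqref{eq:wite} with $\beta_{2r+1,n}^2\approx 1/(2r+1)!$, the single term $r=2k$ already contributes order $k^{-1/2}$ to $\|\widetilde\Omega_{k,n}\|_2^2$, and the whole sum is of order one. So the denominator is $\Theta(1)$; Lemma~\ref{lem:linfty} supplies the upper bound.

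Your numerator, the $r=k$ term alone, is only $\asymp\rho^{2k+1}2^{-k}k^{-1/2}$. Incidentally, Stirling gives $\Gamma(k+\tfrac32)/\sqrt{(2k+1)!}\asymp k^{1/4}2^{-k}$, not $k^{3/4}2^{-k}$. For $m=2\ell$ your ratio is therefore $\asymp\rho^{2\ell+1}2^{-\ell}/\sqrt{\ell}$. This dominates $\rho^{4\ell+1}/\sqrt{\ell}$ uniformly in $\ell$ only when $\rho^2\le 1/2$. For $\rho>1/\sqrt2$, which is exactly the small-$\sigma$ regime the lemma is needed for, the argument breaks down as $m$ grows.

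The loss happens in step (a). Up to polynomial factors, the summands of the series behave like $\binom{r}{k}(\rho^2/2)^r$. This peaks at $r\approx 2k/(2-\rho^2)$, not at $r=k$, so the full sum exceeds the $r=k$ term by a factor of order $(2/(2-\rho^2))^{k}$.

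The paper keeps the whole series:
\begin{itemize}
\item Pascal's rule $\binom{r+1}{k+1}=\binom{r}{k}+\binom{r}{k+1}$ splits the sum into $\tfrac12(I_k+I_{k+1})$.
\item Writing the Gamma factors as moments of $X_q\sim\mathrm{Gamma}(q+\tfrac12,1)$ collapses each $I_q$ to $\binom{N}{q}(\rho^2/n)^q\,\mathbb E[(1+\tfrac{\rho^2}{n}X_q)^{N-q}]$ times explicit factors.
\item The Gamma moment generating function then supplies $(1-\rho^2/2)^{-q-1/2}$. This is precisely what turns $2^{-q}$ into $(2-\rho^2)^{-q}$, giving $I_q\gtrsim\frac{\rho}{\sqrt q}\bigl(\frac{\rho^2}{2-\rho^2}\bigr)^q$.
\end{itemize}
Combine this with $\|\widetilde\Omega_{k,n}\|_\infty=O(1)$ from Lemma~\ref{lem:linfty}, and with the elementary inequality $t^{m-k}(2-t)^k\le 2$ for $t=\rho^2$. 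This yields the bound $\rho^{2m+1}/\sqrt m$. Your part (b), the Taylor expansion controlled by the Hermite packet calculus, is essentially the proof of Lemma~\ref{lem:linfty}, but its target should be an $O(1)$ bound, since that is the true size.
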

We remark that the hypothesis $m \le cn^{1/2-\eta}$ simplifies the asymptotics here but is plausibly not essential for the lower bound; without it, however, Lemma~\ref{lem:linfty} would fail.
\begin{proof}
By weak duality and the previous lemma, it suffices to lower-bound
\(\langle T_\rho\operatorname{Maj}_n,\widetilde\Omega_{k,n}\rangle\)
and then divide by \(\|\widetilde\Omega_{k,n}\|_\infty\).

For $A \subseteq [n]$ with $|A| = 2r+1$, the classical identity~\parencite[Thm.~5.19]{ODonnell14}
\[
\widehat{\mathrm{Maj}_n}(A) = (-1)^r \frac{\binom{N}{r}}{\binom{2N}{2r}} 4^{-N}\binom{2N}{N}
\]
for the Fourier coefficients of the majority function gives, upon expanding $\mathrm{Maj}_n$ in the symmetric basis and combining with \eqref{eq:wite}, 
\[
\langle T_\rho\mathrm{Maj}_n, \widetilde\Omega_{k,n}\rangle
= \frac{\binom{2N}{N}}{4^N} \sum_{r=k}^N \frac{\rho^{2r+1} n^{1/2-r}}{2r+1} \binom{N}{r} \cdot \frac{1}{2}\binom{r+1}{k+1}\, \Gamma\!\left(r+\tfrac32\right)
\]
with $k = \lfloor (m+1)/2 \rfloor$. 
Using Pascal's rule $\binom{r+1}{k+1} = \binom{r}{k} + \binom{r}{k+1}$, the sum splits as $(I_k + I_{k+1})/2$ where, applying $\Gamma(q+j+\tfrac12) = \Gamma(q+\tfrac12)(q+\tfrac12)_j$ and $\mathbb E[X_q^j] = (q+\tfrac12)_j$ for $X_q \sim \mathrm{Gamma}(q+\tfrac12,1)$,
\begin{align*}
I_q &:= \frac{\binom{2N}{N}}{4^N} \sum_{r=q}^N \frac{\rho^{2r+1} n^{1/2-r}}{2r+1} \binom{N}{r} \binom{r}{q}\, \Gamma\!\left(r+\tfrac32\right) \\
&\phantom{:}= \frac{\binom{2N}{N}}{4^N} \cdot \frac{\rho\sqrt n}{2} \binom{N}{q} \left(\tfrac{\rho^2}{n}\right)^{\!q} \Gamma\!\left(q+\tfrac12\right)\, \mathbb E\!\left[\left(1 + \tfrac{\rho^2}{n} X_q\right)^{\!N-q}\right],
\end{align*}
for $q \in\{k, k+1\}$ with $(a)_j := a(a+1)\cdots(a+j-1)$. Set $\lambda = \rho^2/n$, $\tilde n = N-q$. Since $u - u^2/2 \le \log(1+u) \le u$ for $u\ge 0$ and $e^{-u}\ge 1-u$,
\[
M(\tilde n\lambda) - \tfrac{\tilde n \lambda^2}{2} M''(\tilde n \lambda) \;\le\; \mathbb E[(1+\lambda X_q)^{\tilde n}] \;\le\; M(\tilde n\lambda),
\]
where $M(t) = (1-t)^{-q-1/2}$ is the MGF of $X_q$ (valid as $\tilde n\lambda \to \rho^2/2 < 1$). With $M''/M = (q+\tfrac12)(q+\tfrac32)(1-t)^{-2}$, this gives
\[
\mathbb E[(1+\lambda X_q)^{\tilde n}] = (1-\tilde n\lambda)^{-q-1/2}\bigl[1+O(q^2/n)\bigr] = \left(1-\tfrac{\rho^2}{2}\right)^{-q-1/2}\bigl[1+O(q^2/n)\bigr].
\]
Combining with $\binom{2N}{N}4^{-N} = \sqrt{2/(\pi n)}\,(1+O(1/n))$, the identity $\Gamma(q+\tfrac12) = \frac{(2q)!}{4^q q!}\sqrt\pi$, and $\binom{2q}{q}\ge 4^q/(2\sqrt q)$,
\[
I_q = \frac{\binom{2q}{q}}{2^{3q+1/2}}\, \rho^{2q+1} \left(1-\tfrac{\rho^2}{2}\right)^{-q-1/2}\bigl[1+O(q^2/n)\bigr] \;\ge\; \frac{\rho}{4\sqrt{q(2-\rho^2)}}\left(\frac{\rho^2}{2-\rho^2}\right)^{\!q}
\]
for all $n \ge n_0(c,\eta)$ sufficiently large.
Plugging into $\langle T_\rho\mathrm{Maj}_n, \widetilde\Omega_{k,n}\rangle = \tfrac12 I_k + \tfrac12 I_{k+1}$ gives
\begin{align*}
\langle T_\rho\mathrm{Maj}_n, \widetilde\Omega_{k,n}\rangle
&\ge \frac{\rho}{8\sqrt{k (2-\rho^2)}} \left(\frac{\rho^2}{2-\rho^2}\right)^{\!k} \left[1 + \left(\tfrac{\rho^2}{2-\rho^2}\right)\sqrt{\tfrac{k}{k+1}}\right] \\
&= \frac{\rho\left(2-\rho^2 + \rho^2\sqrt{k/(k+1)}\right)}{8\sqrt{k}(2-\rho^2)^{3/2}} \left(\frac{\rho^2}{2-\rho^2}\right)^{\!k} \\
&\ge \frac{\rho}{8\sqrt{k (2-\rho^2)}} \left(\frac{\rho^2}{2-\rho^2}\right)^{\!k} \\
&\ge \frac{\rho}{8\sqrt{k(2-\rho^2)}}\cdot\frac{\rho^{2m}}{2} \\
&\ge \frac{\rho^{2m+1}}{16\sqrt{2 m}}.
\end{align*}
The penultimate step converts the bound at index $k$ into one at index $m$: setting $t := \rho^2 \in (0,1)$, we claim $t^{m-k}(2-t)^k \le 2$. If $m = 2\ell$, then $k = \ell$ and the LHS is $[t(2-t)]^\ell \le 1$; if $m = 2\ell+1$, then $k = \ell+1$ and the LHS is $(2-t)[t(2-t)]^\ell \le 2$. Using that $\|\widetilde\Omega_{k,n}\|_\infty = O_{\eta, c}(1)$, established in Lemma~\ref{lem:linfty} below, the lemma follows.
\end{proof}
\begin{lemma}\label{lem:linfty}
  For every $c>0$ and $0<\eta<1/2$, there exist $n_0(c,\eta), \, C_{c,\eta}>0$ such that for all odd $n\ge n_0(c,\eta)$ and all integers $m\ge 1$ satisfying $m\le c n^{1/2-\eta}$, with $k=\lfloor(m+1)/2\rfloor$, it holds $\|\widetilde{\Omega}_{k,n}\|_{\infty}\le C_{c,\eta}$.
\end{lemma}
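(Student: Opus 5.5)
Since $\widetilde\Omega_{k,n}(x)$ depends on $x$ only through $s:=S(x)\in\{-n,\dots,n\}$, I will bound $\sup_{s\in\mathbb R}|F(s)|$, where $F(s)$ is the integral in \eqref{eq:witness-single} with $\phi_{y,n}(x)$ replaced by $\sin(\theta_n(y)s)$.

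\textbf{Step 1: rewrite the integral as a Fourier integral.} Since $y^{1/2}/W_n(y)=(1+y/n)^{n/2}$, substituting $y=t^2$ turns the integrand into a Hermite packet times corrections:
\[
F(s)=\int_0^\infty A_{k+1}(t)\,G_n(t)\,\sin\!\bigl(s\arctan(t/\sqrt n)\bigr)\,dt,\qquad G_n(t):=e^{-t^2/2}(1+t^2/n)^{n/2},
\]
with $A_{k+1}$ as in \eqref{eq:A.17}, up to an absolute constant and sign. A naive comparison with $\sin(st/\sqrt n)$ fails. The phase error $s\bigl(\arctan(t/\sqrt n)-t/\sqrt n\bigr)\approx st^3/n^{3/2}$ is not small when $|s|\sim n$ and $t\sim\sqrt k$. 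I will therefore substitute $u=\arctan(t/\sqrt n)$ and then $v=\sqrt n\,u$, so that the oscillation becomes exactly $\sin(\tau v)$ with $\tau=s/\sqrt n$. We then get
\[
F(s)=\int_0^{\pi\sqrt n/2} B_n(v)\,\sin(\tau v)\,dv,\qquad B_n(v):=A_{k+1}\bigl(t(v)\bigr)\,G_n\bigl(t(v)\bigr)\,\bigl(1+t(v)^2/n\bigr),
\]
where $t(v)=\sqrt n\tan(v/\sqrt n)=v+v^3/(3n)+\dots$.

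\textbf{Step 2: perturbative expansion on the bulk.} I cut at $R:=n^{1/4-\eta/4}$. The packet $A_{k+1}$ lives at scale $|t|\lesssim\sqrt k\lesssim n^{1/4-\eta/2}$, which is well inside $|v|\le R$. On this region, $v^4/n\le n^{-\eta}$ is small, and I Taylor-expand each factor of $B_n$ in powers of $1/n$:
\[
A_{k+1}(t(v))=\sum_{j<J}\frac{(t(v)-v)^j}{j!}A_{k+1}^{(j)}(v)+\text{rem},
\]
together with $G_n(t)=\exp\bigl(-t^4/(4n)+O(t^6/n^2)\bigr)$ and $1+t^2/n$. Each term has the form $n^{-a}\,(TA_{k+1})(v)$ with $T$ a fixed polynomial-coefficient operator. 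The bookkeeping should give $\kappa(T)\le 4a$: each power of $1/n$ costs at most $v^4$, or $v^3\partial_v$, or $v^2$. By \eqref{eq:A.35}, every such term contributes at most
\[
C\,n^{-a}(k+1)^{\kappa(T)/2}\le C\,(k^2/n)^a\le C\,c^{2a}n^{-2a\eta},
\]
uniformly in $\tau$. A subtlety is that \eqref{eq:A.35} concerns integrals over $[0,\infty)$, not $[0,R]$. I handle this by extending each term to $[0,\infty)$ and paying the difference via the tail bound \eqref{eq:A.36}. The $a=0$ term is $\int_0^\infty A_{k+1}(v)\sin(\tau v)\,dv$, which is $O(1)$ by \eqref{eq:A.35} with $T=\mathrm{id}$. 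This term gives the constant $C_{c,\eta}$.

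\textbf{Step 3: remainders and tails.} The Taylor remainder after $J$ terms is bounded pointwise by $(1+|v|)^{Q}\,n^{-J}$ times derivatives of $A_{k+1}$ evaluated between $v$ and $t(v)$. I expect this to be the main obstacle. To handle the intermediate point, I would express the remainder in integral form, which keeps the derivatives of $A_{k+1}$ under control. I would then take $L^1$ norms using \eqref{eq:A.34}. This yields a bound of order
\[
n^{-J}(k+1)^{2J+1/2}\le n^{1/4-J\cdot 2\eta},
\]
which is $o(1)$ once $J>1/(8\eta)$. The tail $|v|>R$ needs separate arguments for its two parts.
\begin{itemize}
\item For $R<|v|$ with $|t(v)|\le n^{1/2}$, the quantity $G_n(t)(1+t^2/n)$ is at most polynomially large in $t$. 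The integrand is then bounded by \eqref{eq:A.36} with $J$ large, giving $R^{-J}n^{O(1)}\cdot k^{O(J)}\to0$.
\item For $|t(v)|\ge\sqrt n$, I will estimate directly. Here $L_{k+1}^{(-1/2)}(y)e^{-y}(1+y/n)^{n/2}$ is at most $y^{k+1}e^{-y/2}\cdot\mathrm{poly}$, because $(1+y/n)^{n/2}\le e^{y/2}$. This gives an $e^{-\Omega(n)}$ contribution.
\end{itemize}
Collecting the bulk terms, the remainder and both tails gives $\sup_s|F(s)|\le C_{c,\eta}$ for $n\ge n_0(c,\eta)$, which is the claim.
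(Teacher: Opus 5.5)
Your proposal is correct and follows the paper's proof essentially step by step. It uses the same substitution $y=t^2$, $t=\sqrt n\tan(v/\sqrt n)$ to make the phase exactly $\sin(\tau v)$, the same $1/n$ expansion with $\kappa(T)\le 4a$ bounded via \eqref{eq:A.35}, and the same tail and remainder control via \eqref{eq:A.34} and \eqref{eq:A.36}; note that \eqref{eq:A.35} needs each $TA_{k+1}$ to be odd, which the paper records as $p_{\ell,r}(-v)=(-1)^r p_{\ell,r}(v)$. The remaining differences are cosmetic (cutting at $n^{1/4-\eta/4}$, integral-form Taylor remainder instead of the local Sobolev inequality), and your two-piece tail split is unnecessary because $G_n\le 1$ everywhere, which gives the paper's one-line bound \eqref{eq:taylor-tail2}.
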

\begin{proof}
For $x \in \{\pm 1\}^n$, write $S := S(x) := \sum_{i=1}^n x_i$ and set $\tau:=S/\sqrt n$, $M:=k+1$. By our requirements, it holds $M\le C_c n^{1/2-\eta}$ for some $C_c>0$, and hence $M^2/n\le C_c^2 n^{-2\eta}\le 1$ once $n\ge n_0(c,\eta)$ large enough. Starting from \eqref{eq:witness-single} and changing variables $y=u^2$ and setting $u = \phi_n(v) := \sqrt{n} \tan ( {v}/{\sqrt{n})}$ gives
\[
\widetilde\Omega_{k,n}(x)
=
\int_0^\infty B_{m,n}(v) \sin(\tau v) dv
\tag{B.1}\label{eq:omega-integral}\]
where $B_{M,n} := A_M(\phi_n(v))M_n(v),$
with \[M_n(v) := \exp\left(-\frac{n}{2}\tan^2\frac{v}{\sqrt{n}}\right)\sec^{n+2}\frac{v}{\sqrt{n}}\] and
$A_M$ as in \eqref{eq:A.17} for $|v| \le \pi \sqrt{n} / 2$ and $B_{M,n}(v) := 0$ otherwise.

We claim that for every integer $L \ge 2$, there are polynomials $p_{\ell,r}$ independent of $M$ and $n$ such that for $|v| \le n^{1/4}$ and all $n \ge n_0(L)$ for some index $n_0(L)$ large enough it holds 
\[
B_{M,n}(v) = A_M(v) + \sum_{\ell=1}^{L-1} n^{-\ell} \sum_{r=0}^{\ell} p_{\ell,r}(v) A_M^{(r)}(v) + E_L(v,n), \tag{B.2}\label{eq:B-expansion} 
\]
where \(p_{\ell,r}(-v) = (-1)^r p_{\ell,r}(v)\),  \(\deg p_{\ell,r} + r \le 4\ell \) and
 \[|E_L(v,n)| \le C_L n^{-L}\sum_{r=0}^{L}(1+|v|)^{4L-r}\sup_{|w-v|\le 1}|A_M^{(r)}(w)|.
\tag{B.3}\label{eq:E-int}\]
The claim follows from the observation that for  $|v|/\sqrt{n} \le {n^{-1/4}}$ a Taylor expansion gives 
\[\phi_n(v) = v + \sum_{a=1}^{L-1}n^{-a}q_a(v) + O_L\!\bigl(n^{-L}(1+|v|)^{2L+1}\bigr) \tag{B.4}\label{eq:phi-expansion}\] and
\[M_n(v) = 1 + \sum_{b=1}^{L-1}n^{-b}m_b(v) + O_L\!\bigl(n^{-L}(1+|v|)^{4L}\bigr), \tag{B.5}\label{eq:M-expansion}\]
uniformly for $n \ge n_1(L)$, where $n_1(L) > 0$ depends on $L$, $q_a$ is odd with $\deg q_a = 2a+1$, and $m_b$ is even with $\deg m_b \le 4b$.

For the composition $A_M(v + \delta_n(v))$, Taylor's formula with $|\delta_n(v)| \le C_L n^{-1}(1+|v|)^3$ gives
\[
A_M(v + \delta_n(v))
= \sum_{r=0}^{L-1}\frac{\delta_n(v)^r}{r!}A_M^{(r)}(v) + \mathcal R_A(v,n),
\qquad
|\mathcal R_A(v,n)| \le C_L n^{-L}(1+|v|)^{3L}\!\sup_{|w-v|\le 1}|A_M^{(L)}(w)| \tag{B.6}\label{eq:taylor-A}
\]
for $n \ge (8C_L)^4.$
Multiplying by \eqref{eq:M-expansion} and collecting the $n^{-\ell}$-coefficient of each $A_M^{(r)}$ yields \eqref{eq:B-expansion}. For $0 \le r \le L-1$,
\[
M_n(v)\,\frac{\delta_n(v)^r}{r!}
=
\sum_{\ell=r}^{L-1}n^{-\ell}p_{\ell,r}(v)
+
O_L\!\bigl(n^{-L}(1+|v|)^{4L-r}\bigr), \tag{B.7}\label{eq:product-expansion}
\]
where $p_{\ell,r}$ is independent of $M$ and $n$, satisfies $p_{\ell,r}(-v) = (-1)^r p_{\ell,r}(v)$.
This yields \[
B_{M,n}(v) = \sum_{r=0}^{L-1} \left[ \sum_{\ell=r}^{L-1} n^{-\ell} p_{\ell,r}(v) +  O_L\left(n^{-L}(1+|v|)^{4L-r}\right) \right] A_M^{(r)}(v) + M_n(v) \mathcal R_A(v,n).
\]
Since $M_n(v) = O_L(1)$ on $|v|\le n^{1/4}$, and since
\[
  |\mathcal R_A(v,n)| \le C_L n^{-L}(1+|v|)^{3L} \sup_{|w-v|\le 1}|A_M^{(L)}(w)|,
\]
we obtain, using $|A_M^{(r)}(v)| \le \sup_{|w-v|\le 1}|A_M^{(r)}(w)|$,
\begin{align*}
  |E_L(v,n)|
  &\le C_L n^{-L} \sum_{r=0}^{L-1} (1+|v|)^{4L-r}|A_M^{(r)}(v)|
   + C_L n^{-L}(1+|v|)^{3L} \sup_{|w-v|\le 1}|A_M^{(L)}(w)| \\
  &\le C_L n^{-L} \sum_{r=0}^{L} (1+|v|)^{4L-r} \sup_{|w-v|\le 1}|A_M^{(r)}(w)|.
\end{align*}
Plugging the expansion \eqref{eq:B-expansion} into \eqref{eq:omega-integral} and splitting the domain at $v = n^{1/4}$ yields
\begin{align*}
  \widetilde\Omega_{k,n}(x)
  &= \sum_{\ell=0}^{L-1}\sum_{r=0}^{\ell} n^{-\ell}\!\int_0^\infty\! p_{\ell,r}(v)\,A_M^{(r)}(v)\sin(\tau v)\,dv \\
  &\quad - \sum_{\ell=0}^{L-1}\sum_{r=0}^{\ell} n^{-\ell}\!\int_{n^{1/4}}^\infty\! p_{\ell,r}(v)\,A_M^{(r)}(v)\sin(\tau v)\,dv \\
  &\quad + \int_0^{n^{1/4}}\! E_L(v,n)\sin(\tau v)\,dv
        + \int_{n^{1/4}}^\infty\! B_{M,n}(v)\sin(\tau v)\,dv.
\end{align*}
so it suffices to bound each of the four contributions uniformly in $\tau$. From \eqref{eq:M-expansion} and \eqref{eq:product-expansion}, the $n^{-j}$-coefficient of
$\delta_n^r$ has degree at most $2j+r$ and vanishes unless $j \ge r$, while the
$n^{-b}$-coefficient of $M_n$ has degree at most $4b$. Their product contributes at order
$n^{-\ell}$ with $\ell = b+j$, so
\[
  \deg p_{\ell,r} \;\le\; 4b + 2j + r.
\]
Set $T_{\ell,r} := p_{\ell,r}(\cdot)\,\partial^r$, with $T_{0,0} := \mathrm{id}$. Then
\begin{align*}
  \kappa(T_{\ell,r})
  = \deg p_{\ell,r} + r 
  \le 4b + 2j + 2r 
  \le 4b + 4j \;=\; 4\ell,
\end{align*}
where the last inequality uses $r \le j$. Since $T_{\ell,r}A_M$ is odd, \eqref{eq:A.35} yields
\[
  \left| \int_0^\infty T_{\ell, r}A_M(v)\,\sin(\tau v)\,dv \right|
  \;\le\; C_{\ell, r}\, M^{2\ell}
\]
uniformly in $\tau$. Therefore, by requirement, \[
n^{-\ell}\Bigl|\int_0^\infty T_{\ell,r}A_M(v)\sin(\tau v)\,dv\Bigr|
\le
C_{\ell,r} n^{-\ell}M^{2\ell} \le C_{\ell,r}.\]
This bounds the main term $\ell = 0$ and all correction terms $\ell \ge 1$. It remains to bound the tails and the Taylor error. Recall that $B_{M,n}(v) = A_M(\phi_n(v))\,M_n(v)$ with $M_n(v) = R_n(\phi_n(v))\,\phi_n'(v)$. By the change of variables $u = \phi_n(v)$ it holds
\[
\int_{n^{1/4}}^\infty |B_{M,n}(v)|\,dv
=
\int_{\phi_n(n^{1/4})}^\infty |A_M(u)|\,R_n(u)\,du \le \int_{n^{1/4}}^\infty |A_M(u)|\,du
\tag{B.8}\label{eq:taylor-tail2}\]
because of $R_n(u) \le 1$ and $\phi_n(n^{1/4})\ge n^{1/4}.$

Applying \eqref{eq:A.36}, for any $J \ge 1$ we get
\begin{align*} n^{-\ell}
\int_{n^{1/4}}^\infty |T_{\ell,r} A_M(u)|\,du
\le C_{\ell, r, J}\,n^{-\ell-J/4}\,M^{2\ell+J/2+1/2} \le C_{\ell, r, J} \,n^{1/4-\eta(J+1)/2-2\eta \ell}\tag{B.9}\label{eq:taylor-tail}\end{align*} for some $C_{\ell, r, J} > 0 $ by requirement.
Choose $J = J(\eta)$ so large that $1/4-\eta(J+1)/2 \le 0$, then all tails in  \eqref{eq:taylor-tail} are uniformly bounded. The same holds for \eqref{eq:taylor-tail2}.

Finally, the local Sobolev inequality $\sup_{|w-v|\le 1}|A_M^{(r)}(w)| \le \int_{v-2}^{v+2}(|A_M^{(r)}(u)| + |A_M^{(r+1)}(u)|)\,du$ combined with Fubini's theorem gives
\begin{align*}
\int_0^{n^{1/4}}|E_L(v,n)|\,dv
&\le C_L n^{-L}\sum_{r=0}^{L}\int_0^{n^{1/4}}(1+|v|)^{4L-r}\!\sup_{|w-v|\le 1}|A_M^{(r)}(w)|\,dv \\
&\le C_L n^{-L}\sum_{r=0}^{L}\int_0^{n^{1/4}}(1+|v|)^{4L-r}\!\int_{v-2}^{v+2}\bigl(|A_M^{(r)}(u)| + |A_M^{(r+1)}(u)|\bigr)\,du\,dv \\
&\le C_L' n^{-L}\sum_{r=0}^{L}\int_{\mathbb R}(1+|u|)^{4L-r}\bigl(|A_M^{(r)}(u)| + |A_M^{(r+1)}(u)|\bigr)\,du \\
&\le C_{L,c}\,n^{-L}\,M^{2L+1} \\
&\le C_{L,c}'\,n^{1/2-\eta-2\eta L}
\end{align*}
for constants $C_{L,c}, C'_{L,c} > 0.$
The third step uses $\int_{u-2}^{u+2}(1+|v|)^{4L-r}\mathbf{1}_{[0,n^{1/4}]}(v)\,dv \le C''(1+|u|)^{4L-r}$ for some $C'' > 0$. The fourth applies \eqref{eq:A.34} with $T=\partial^r$, $\kappa(T)=r$, $Q=4L-r$ — giving $\int_{\mathbb R}(1+|u|)^{4L-r}|A_M^{(r)}(u)|\,du \le C_{r,L}\,M^{2L+1/2}$ and analogously $C_{r,L}\,M^{2L+1}$ with $\partial^{r+1}$ — and sums over $r=0,\ldots,L$. Choosing $L=L(\eta)$ large enough makes the exponent nonpositive, so the Taylor remainder is bounded by a constant depending only on $\eta$ and $c$.

Combining all pieces, $|\widetilde\Omega_{k,n}(x)| \le C_{\eta,c}$ for some constant $C_{\eta,c} > 0$ depending only on $\eta$ and $c$.
\end{proof}
\subsection{Proof of Lemma~\ref{lem:hd}}
Using the Krawtchouk expansion $\sum_{d\ge 1} b_{d,n}^2\,\mathcal{K}_d^{(n)}(r)$, where $b_{d,n}$ is the level-$d$ symmetric Fourier coefficient of $\psi_{k,n}$,
we reduce the proof of Lemma~\ref{lem:hd} to a small-\(|r|\) expansion for \(\mathcal K_d^{(n)}(r)\) and a geometric tail bound for \(b_{d,n}\).
\begin{lemma}\label{lem:krawtchouk}
  Fix $n \ge 4$. For every integer $P \ge 0$, there exist coefficients $c_1,\dots, c_P \ge 0$ and $C_P > 0$ such that 
\[
|\mathcal{K}_d^{(n)}(r)| \leq |r|^d + \sum_{p=1}^{P} c_p\, n^{-p} d^{2p}\, |r|^{\max\{0,\,d-2p\}} + C_P\, n^{-(P+1)} d^{2(P+1)}
\]
for every integer $0 < d \leq \sqrt{n}$ and $|r| \leq 1$.
\end{lemma}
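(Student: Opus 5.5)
We prove Lemma~\ref{lem:krawtchouk} through an explicit expansion of $\mathcal K_d^{(n)}(r)$ in powers of $1/n$, organized by the normalized recurrence \eqref{eq:A.30}. The target structure is
\[
\mathcal K_d^{(n)}(r)=\sum_{j=0}^{\lfloor d/2\rfloor} a_{d,j}\,r^{d-2j},
\]
since by \eqref{eq:A.26} $\mathcal K_d^{(n)}$ has the parity of $d$. From the generating function \eqref{eq:A.24}, with $d$ and the Hamming variable interchanged via duality \eqref{eq:A.27}, one obtains the leading coefficient $a_{d,0}=n^d d!/(n(n-1)\cdots(n-d+1))\cdot n^{-d}$, more precisely $a_{d,0}=\prod_{i<d}\frac{n}{n-i}$. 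The remaining coefficients $a_{d,j}$, for $j\ge1$, carry signs. Concretely, the normalized Krawtchouk polynomial is the $d$-th elementary symmetric mean of $n$ signs with sum $nr$, so Newton-type identities express it as a polynomial in $r$ whose coefficients involve factors $1/n$.

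To bound the coefficients, I would use \eqref{eq:A.30} inductively in $d$. Writing $\mathcal K_{d+1}=\frac{nr\,\mathcal K_d-d\,\mathcal K_{d-1}}{n-d}$, I would claim inductively that
\[
|a_{d,j}|\le \frac{(C d^2/n)^j}{j!}\prod_{i<d}\frac{n}{n-i}.
\]
The induction step reads off the coefficient of $r^{d+1-2j}$. It receives $\frac{n}{n-d}a_{d,j}$ from the first term and $\frac{d}{n-d}a_{d-1,j-1}$ from the second. For $d\le\sqrt n$ the product $\prod_{i<d}\frac{n}{n-i}\le e^{d^2/n}\cdot$const is bounded, and $e^{d^2/n}=1+O(d^2/n)$. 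This lets us write $a_{d,0}\le 1+\sum_p c_p(d^2/n)^p$. Those $a_{d,0}$ corrections produce terms $n^{-p}d^{2p}|r|^d\le n^{-p}d^{2p}|r|^{\max(0,d-2p)}$, which is allowed since $|r|\le1$.

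The final bound then follows from
\[
|\mathcal K_d^{(n)}(r)|\le\sum_j |a_{d,j}||r|^{d-2j}.
\]
The $j=0$ term gives $|r|^d$ together with the $a_{d,0}$ corrections. Each term with $1\le j\le P$ is at most $c_j n^{-j}d^{2j}|r|^{d-2j}$, with the exponent clipped at $0$ when $2j>d$, since then the term is absent or $|r|^0=1$. The terms with $j\ge P+1$ are summed as a tail
\[
\sum_{j>P}(Cd^2/n)^j/j!\le C_P (d^2/n)^{P+1},
\]
using $d^2/n\le1$. The requirement $n\ge4$ keeps $n-d\ge n/2$ within the regime $d\le\sqrt n$.

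The main obstacle is making the coefficient induction airtight. The recurrence mixes the factor $\frac{n}{n-d}$, which slowly inflates the leading coefficients, with the subtracted $\frac{d}{n-d}$ term, which generates the lower powers. The factorial $1/j!$, or any summable substitute, must survive the induction. I would first try the ansatz above with a generous constant $C$, checking $\frac{d}{n-d}\cdot\frac{(C(d-1)^2/n)^{j-1}}{(j-1)!}\le \frac{(C(d+1)^2/n)^j}{j!}-\frac{(Cd^2/n)^j}{j!}$ roughly. If that fails, I would fall back on the explicit formula obtained by expanding $(1-z)^{n(1-r)/2}(1+z)^{n(1+r)/2}=(1-z^2)^{n/2}\left(\frac{1+z}{1-z}\right)^{nr/2}$ and bounding coefficients of $z^d$ directly with Cauchy estimates.
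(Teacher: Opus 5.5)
Your proposal is correct and follows essentially the same route as the paper. Both arguments run the three-term recurrence \eqref{eq:A.30} after normalizing by $n^d/n^{\underline d}\le e^{d^2/n}$, and both obtain a majorant $\sum_j \frac{1}{j!}(d^2/n)^j|r|^{d-2j}$: the paper via $|L_j|\le|r|\,|L_{j-1}|+(d/n)|L_{j-2}|$ and $\binom{d-p}{p}\le d^p/p!$, you via the monomial coefficients $a_{d,j}$. Both then expand the normalization factor and truncate using $|r|\le1$ and $d^2/n\le1$.

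The induction you flag as the main obstacle already closes with $C=1$. Using $P_{d-1}\le P_{d+1}$ for $P_d:=\prod_{i<d}\frac{n}{n-i}$, the step needs $\frac{d}{n-d}\cdot\frac{((d-1)^2/n)^{j-1}}{(j-1)!}\le\frac{((d+1)^2/n)^j-(d^2/n)^j}{j!}$. This holds because $d/(n-d)\le 2d/n$ for $d<\sqrt n\le n/2$ (this is where $n\ge4$ enters) and $(d+1)^{2j}-d^{2j}\ge 2jd^{2j-1}\ge 2jd(d-1)^{2j-2}$. So the Cauchy-estimate fallback is unnecessary. Your first stated formula for $a_{d,0}$ is wrong, but the corrected product $\prod_{i<d}\frac{n}{n-i}$ that you actually use is right.
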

 We remark that the above statement is the discrete analogue of the Hermite/Gaussian case, where the corresponding quantity equals $r^d$ exactly.

\begin{proof}
The cases $d \in \{0,1\}$ are immediate, so assume $d \ge 2$. The three-term recurrence (from \eqref{eq:A.30}, $d \mapsto d-1$, divided by $n-d+1$) is
\[
\mathcal{K}_d^{(n)}(r) = \frac{nr}{n-d+1}\,\mathcal{K}_{d-1}^{(n)}(r) - \frac{d-1}{n-d+1}\,\mathcal{K}_{d-2}^{(n)}(r).
\]
Set $L_d := (n^{\underline d}/n^d)\,\mathcal{K}_d^{(n)}(r)$, where $n^{\underline d}:=n(n-1)\cdots(n-d+1)$. Multiplying through by $n^{\underline d}/n^d$ eliminates the $n/(n-d+1)$ factor and gives
\[
L_d = r\,L_{d-1} - \frac{(j-1)(n-j+2)}{n^2}\bigg|_{j=d}\!L_{d-2},\]
and it holds
\[\frac{(j-1)(n-j+2)}{n^2} \le \frac{d}{n}\ \text{for } 2\le j\le d.
\]
Hence $|L_j| \le |r|\,|L_{j-1}| + (d/n)\,|L_{j-2}|$. With $|L_0|=1$ and $|L_1|=|r|$, induction yields
\[
|L_d| \le \sum_{p=0}^{\lfloor d/2\rfloor} \binom{d-p}{p}\!\left(\frac{d}{n}\right)^{\!p}\! |r|^{d-2p}
\le \sum_{p=0}^{\lfloor d/2\rfloor} \frac{1}{p!}\!\left(\frac{d^2}{n}\right)^{\!p}\! |r|^{d-2p}.
\]
For the renormalization, since $j/n \le 1/2$ for $j \le d-1$ and $n\ge 4$ and $-\log(1-u)\le 2u$ on $[0,1/2]$, it holds
\[
\log\frac{n^d}{n^{\underline d}} = -\sum_{j=0}^{d-1}\log(1-j/n) \le \frac{2}{n}\sum_{j=0}^{d-1} j \le \frac{d^2}{n},
\]
so ${n^d}/{n^{\underline d}} \le e^{d^2/n}$.
Since $|r|\le 1$, replacing $|r|^{d-2p}$ by $|r|^{\max(0,\,d-2p)}$ only increases the right side. Multiplying by $e^{d^2/n}=\sum_t(d^2/n)^t/t!$ and applying the Cauchy product yields
\[
|\mathcal{K}_d^{(n)}(r)| \le \sum_{s=0}^{\infty}\!\left(\frac{d^2}{n}\right)^{\!s}\!\sum_{p+t=s}\frac{|r|^{\max(0,\,d-2p)}}{p!\,t!} \le \sum_{s=0}^{\infty}\frac{2^s}{s!}\!\left(\frac{d^2}{n}\right)^{\!s}\! |r|^{\max(0,\,d-2s)},
\]
where the last step uses $\sum_{p+t=s}{1}/({p!\,t!})=2^s/s!$. Truncating at $p=P$ and bounding the tail by $2^{P+1}e^2(d^2/n)^{P+1}$ completes the proof.
\end{proof}
\begin{lemma}\label{lem:asymp}
    Let $k \ge 1$ and $n \ge 8k+11$. Write $\psi_{k,n} = \sum_{d=0}^{n} b_{d,n} \Psi_{d,n}$. Then for every odd \(d\) with \(8k+9\le d\le n-2\),
\[
\left|\frac{b_{d+2,n}}{b_{d,n}}\right| \le \frac{2}{3}.
\]
\end{lemma}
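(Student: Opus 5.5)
The plan is to read the coefficients off the explicit expansion \eqref{eq:wite}. Normalization by $\|\widetilde\Omega_{k,n}\|_\infty$ cancels in the ratio, so for odd $d=2r+1$ with $r\ge k$ we have, up to sign,
\[
|b_{2r+1,n}| \propto \beta_{2r+1,n}\binom{r+1}{k+1}\Gamma\!\left(r+\tfrac32\right),\qquad \beta_{d,n}=\binom{n}{d}^{1/2}n^{-d/2}.
\]
Since $d\ge 8k+9>2k$, both $b_{d,n}$ and $b_{d+2,n}$ lie in the support of the sum, and $b_{d,n}\neq0$. The ratio $|b_{2r+3,n}/b_{2r+1,n}|$ then factors into three elementary pieces.

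\textbf{The three factors.}
\[
\frac{\beta_{2r+3,n}}{\beta_{2r+1,n}}=\frac{1}{n}\sqrt{\frac{(n-2r-1)(n-2r-2)}{(2r+2)(2r+3)}},
\qquad
\frac{\binom{r+2}{k+1}}{\binom{r+1}{k+1}}=\frac{r+2}{r+1-k},
\qquad
\frac{\Gamma(r+\tfrac52)}{\Gamma(r+\tfrac32)}=r+\tfrac32 .
\]

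\textbf{Simplifying the product.} Bound $(n-2r-1)(n-2r-2)\le n^2$, so the $\beta$-ratio is at most $1/\sqrt{(2r+2)(2r+3)}$. Then use
\[
\frac{r+\tfrac32}{\sqrt{(2r+2)(2r+3)}}=\frac12\sqrt{\frac{2r+3}{2r+2}} .
\]
This gives
\[
\left|\frac{b_{d+2,n}}{b_{d,n}}\right|\le \frac12\sqrt{\frac{2r+3}{2r+2}}\cdot\frac{r+2}{r+1-k}.
\]

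\textbf{Closing the bound.} The condition $d\ge 8k+9$ means $r\ge 4k+4$. Hence $r+1-k\ge \tfrac34(r+1)+1$, and
\[
\frac{r+2}{r+1-k}\le \frac{r+2}{\tfrac34 r+\tfrac74}.
\]
For $r\ge 8$ this is at most about $1.1$, while $\sqrt{(2r+3)/(2r+2)}\le\sqrt{19/18}$. The product of $\tfrac12$, $\sqrt{19/18}$, and this factor is roughly $0.57<2/3$.

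The constraint $d\le n-2$ guarantees that index $d+2\le n$ exists. The condition $n\ge 8k+11$ only ensures the range of $d$ is nonempty. The factor $(n-2r-1)(n-2r-2)$ is nonnegative in this range, and bounding it by $n^2$ is crude but enough.

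\textbf{Main obstacle.} The argument hinges on trusting \eqref{eq:wite}. In particular the $(-1)^r$ signs, the $\Gamma(r+\tfrac32)$ factor, and the precise form of $\beta$ need care, and I would re-derive the level-$(2r+1)$ coefficient of $\sin(\theta S)$. That derivation goes through the generating function $\cos^n\theta\prod_j(1+i\tan\theta\, x_j)$ and the weight $W_n$; it gives $(y/n)^{r+1/2}$ times $\binom{n}{2r+1}^{1/2}$, which produces $\beta_{2r+1,n}\,y^{r}\sqrt{y}$. Once that is confirmed, the estimate above is mechanical, and the constants leave slack (about $0.57$ versus $2/3$).
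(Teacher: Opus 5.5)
\textbf{The closing numerical step is wrong.} Your route is the same as the paper's. You read $b_{d,n}$ off \eqref{eq:wite}, factor the ratio into the $\beta$-, binomial- and $\Gamma$-pieces, and drop $\sqrt{(1-d/n)(1-(d+1)/n)}\le 1$. You then control $\frac12\sqrt{\frac{2r+3}{2r+2}}\cdot\frac{r+2}{r+1-k}$ using $r\ge 4k+4$. The three factors and the bound $r+1-k\ge\frac34 r+\frac74$ are all correct.

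The error is the claim that $\frac{r+2}{\frac34 r+\frac74}$ is ``at most about $1.1$'' for $r\ge 8$. This quantity equals $\frac{4(r+2)}{3r+7}$, which is increasing in $r$. It is $10/7.75\approx 1.29$ already at $r=8$ and tends to $4/3$. Even the exact factor $\frac{r+2}{r+1-k}$ equals $\frac{4k+6}{3k+5}\ge 1.25$ at $d=8k+9$. So your product is about $0.663$ at $r=8$ and tends to $2/3$; it is not $0.57$.

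In fact the constant in the lemma is asymptotically sharp. At $d=8k+9$ with $k\to\infty$ and $n/k\to\infty$, the true ratio tends to $2/3$. There is therefore no slack, and a rough decimal estimate cannot close the argument.

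\textbf{How to repair it.} You need an exact inequality. After squaring, the bound $\frac12\sqrt{\frac{2r+3}{2r+2}}\cdot\frac{4(r+2)}{3r+7}\le\frac23$ is equivalent to
\[
(3r+6)^2(2r+3)\le(3r+7)^2(2r+2)\iff 3r^2+2r-10\ge 0 .
\]
This holds for all $r\ge 2$, hence on your range $r\ge 8$. The paper does the equivalent check at the worst case $d=8k+9$: it verifies $\frac{8k+12}{6k+10}\cdot\frac12\sqrt{\frac{8k+11}{8k+10}}\le\frac23$, which again holds with vanishing margin as $k\to\infty$. With this replacement your proof is complete.
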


\begin{proof}
    From \eqref{eq:wite}, we get
    \[ b_{d,n} = (-1)^{(d-1)/2}\, \frac{1}{2\|\widetilde\Omega_{k,n}\|_\infty}\, 
\beta_{d,n}\, \binom{(d+1)/2}{k+1}\, \Gamma\!\left(\tfrac{d}{2}+1\right) \]
    for odd $d\ge 2k+1$. Simple calculus shows that
    \begin{align*}
    \frac{|b_{d+2,n}|}{|b_{d,n}|}
    &= \frac{d+3}{d+1-2k} \cdot \frac{1}{2}\sqrt{\frac{d+2}{d+1}} \cdot \sqrt{\left(1-\frac{d}{n}\right)\!\left(1-\frac{d+1}{n}\right)} \\
    &\le \frac{8k+12}{6k+10} \cdot \frac{1}{2}\sqrt{\frac{8k+11}{8k+10}} \\
    &\le \frac{2}{3}
    \end{align*}
    once $d \ge 8k+9$.
\end{proof}
\begin{lemma}[Hidden-direction family]\label{lem:hidden-direction}
Fix $0<\rho<1$, $c_0>0$, and $0<\eta<1/2$. There exist \(n_0=n_0(c_0,\eta) \ge 1\) and \(
K_{c_0,\eta}>0\) such that for every odd $n\ge n_0$ and every integer $m\ge 1$ with $m\le c_0 n^{1/2-\eta}$, setting $k:=\lfloor(m+1)/2\rfloor$, the following holds. For $u\in\{\pm 1\}^n$, define
\[
\mathcal{D}_u(x,y) := 2^{-(n+1)}\bigl(1 + y\,\psi_{k,n}(u \odot x)\bigr),
\qquad
\mathcal{D}_0(x,y) := 2^{-(n+1)},
\qquad
f_u(x) := T_\rho\operatorname{sign}(\langle u,x\rangle),
\]
for $x \in \{\pm 1\}^n$ and  $y \in \{\pm 1\}$.
Then:
\begin{enumerate}[label=(\roman*)]
\item For every \(u\in\{\pm1\}^n\),
\[
\mathbb{E}_{\mathcal{D}_u}[Y f_u(X)] \ge \frac{K_{c_0,\eta}\,\rho^{2m+1}}{\sqrt{m}}.
\]

\item For every \(0<\delta\le1\) and every \(u\neq v\) in
\(\{\pm1\}^n\) with \(|\langle u,v\rangle|\le\delta n\),
\[
\chi_{\mathcal{D}_0}(\mathcal{D}_u, \mathcal{D}_v)
\le
\delta^{2k+1}
+
\sum_{p=1}^{k+1} C_p\, \frac{k^{2p}}{n^p}\,
\delta^{\max\{0,\, 2k+1-2p\}}\label{eq:D}\tag{D}
\]
with constants $C_p>0$ satisfying $C_p\le A^p$ for an absolute constant $A>0$.

\item There exists an absolute constant \(c>0\) such that, for every
\(0<\delta<1\), there is
\(\mathcal{U}\subseteq\{\pm1\}^n\) with
\(
|\mathcal{U}|\ge \bigl\lfloor\exp(c\delta^2 n)\bigr\rfloor
\)
such that $|\langle u,v\rangle|\le \delta n$ for all distinct $u,v\in\mathcal{U}$.
\end{enumerate}
\end{lemma}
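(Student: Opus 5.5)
The plan is to treat the three items separately, since each reduces to an earlier ingredient.

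\textbf{Item (i).} First verify that $\mathcal D_u$ is a probability distribution: $|\psi_{k,n}|\le 1$ gives nonnegativity, and summing over $y$ removes the $y$-term. Next, for any $h:\{\pm1\}^n\to[-1,1]$,
\[
\mathbb E_{\mathcal D_u}[Yh(X)]=\sum_{x,y}2^{-(n+1)}\bigl(y h(x)+h(x)\psi_{k,n}(u\odot x)\bigr)=\langle h,\psi_{k,n}(u\odot\cdot)\rangle_{U_n}.
\]
This identity will also serve as Lemma~\ref{lem:hd}(i). Take $h=f_u$. The map $x\mapsto u\odot x$ preserves $U_n$ and commutes with $T_\rho$, and it sends $\operatorname{sign}\langle u,x\rangle$ to $\mathrm{Maj}_n$. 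Hence the correlation equals $\langle T_\rho\mathrm{Maj}_n,\psi_{k,n}\rangle$. The proof of Lemma~\ref{lem:dual-witness} lower-bounded exactly this quantity: by $\rho^{2m+1}/(16\sqrt{2m})$ for the unnormalized $\widetilde\Omega_{k,n}$, then divided by $\|\widetilde\Omega_{k,n}\|_\infty\le C_{c_0,\eta}$ from Lemma~\ref{lem:linfty}. So (i) holds with $K=1/(16\sqrt2\,C_{c_0,\eta})$.

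\textbf{Item (ii).} Since $\mathcal D_u/\mathcal D_0-1=y\psi^{(u)}(x)$ and $y^2=1$, we get $\chi_{\mathcal D_0}(\mathcal D_u,\mathcal D_v)=|\langle\psi^{(u)},\psi^{(v)}\rangle|$. Expand $\psi_{k,n}=\sum_d b_{d,n}\Psi_{d,n}$ and apply the addition formula \eqref{eq:A.32}. This gives
\[
\langle\psi^{(u)},\psi^{(v)}\rangle=\sum_{d}b_{d,n}^2\,\mathcal K_d^{(n)}(r),\qquad r=\langle u,v\rangle/n,
\]
where only odd $d\ge 2k+1$ appear by \eqref{eq:wite}. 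Parseval and $\|\psi\|_\infty\le1$ give $\sum_d b_{d,n}^2\le1$.

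Split the sum at $d_0:=\lfloor\sqrt n\rfloor$.
\begin{itemize}
\item For $d\le d_0$, apply Lemma~\ref{lem:krawtchouk} with $P=k$ and $|r|\le\delta$. This yields $|r|^d+\sum_{p\le k}c_p n^{-p}d^{2p}\delta^{\max(0,d-2p)}+C_k n^{-(k+1)}d^{2k+2}$.
\item Ratio control: by Lemma~\ref{lem:asymp}, $b_{d,n}^2$ decays like $(4/9)^{(d-d_1)/2}$ beyond $d_1=8k+9$. So each weighted sum $\sum_d b_d^2 d^{2p}\delta^{d-2p}$ is dominated, up to a factor $A^p$, by terms with $d=O(k)$. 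Bounding $d^{2p}\le (Ck)^{2p}$ there and using $\delta^{d}\le\delta^{2k+1}$ for $d\ge 2k+1$ produces the shape of \eqref{eq:D}.
\item The leading term $\sum b_d^2|r|^d\le \delta^{2k+1}$ follows from Parseval.
\item For $d>d_0$, use $|\mathcal K_d|\le1$ together with geometric decay from $d_1\ll\sqrt n$. This tail is $(4/9)^{\Omega(\sqrt n)}$, which is absorbed into the $p=k+1$ term because $k^{2k+2}n^{-(k+1)}\ge n^{-O(\sqrt n)}$.
\end{itemize}

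The main obstacle is bookkeeping the constants so that $C_p\le A^p$ is absolute and does not depend on $k$. The risk is the factor $d^{2p}$ for $d$ between $2k+1$ and $d_1$, where no decay is available yet. There I would use $d\le 8k+9\le 17k$ and absorb $17^{2p}$ into $A^p$. Beyond $d_1$, the sum $\sum_j (4/9)^j(d_1+2j)^{2p}/d_1^{2p}$ is bounded by $\sum_j (4/9)^j e^{4pj/d_1}$, which is at most $C^p$ provided $d_1\ge 8p$. This holds since $p\le k+1$.

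\textbf{Item (iii).} This is the standard Gilbert–Varshamov/Chernoff construction. For independent uniform $u,v$, Hoeffding gives $\Pr[|\langle u,v\rangle|>\delta n]\le 2e^{-\delta^2n/2}$. Draw $N=\lfloor e^{\delta^2 n/8}\rfloor$ random vectors. A union bound over fewer than $N^2$ pairs shows the failure probability is below $2e^{\delta^2n/4-\delta^2n/2}<1$ whenever $N\ge2$ (the case $N\le1$ is trivial). This gives $c=1/8$.
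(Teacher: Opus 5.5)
Your proposal is correct and follows the paper's proof essentially step for step. For (i) you use the $u\odot x$ isometry together with the bound from the proof of Lemma~\ref{lem:dual-witness}. For (ii) you use the Krawtchouk expansion split at $\sqrt n$, Lemma~\ref{lem:krawtchouk} with $P=k$, and moment bounds from the geometric decay in Lemma~\ref{lem:asymp}. For (iii) you use Hoeffding plus a union bound. The only variation is that you bound the $d>\sqrt n$ tail directly by $(4/9)^{\Omega(\sqrt n)}$, whereas the paper uses the Markov-type estimate $\sum_{d>\sqrt n}b_{d,n}^2\le n^{-(k+1)}\sum_d b_{d,n}^2d^{2(k+1)}$.

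Two small repairs are needed. First, absorbing that tail into the $p=k+1$ term does not follow from $(k^2/n)^{k+1}\ge n^{-O(\sqrt n)}$, since that comparison points the wrong way. It does follow from $(k^2/n)^{k+1}\ge n^{-(k+1)}=e^{-o(\sqrt n)}$, which holds because $k\le c_0n^{1/2-\eta}$. Second, Lemma~\ref{lem:krawtchouk} as stated gives no growth control on $c_p$ and $C_P$. Since $P=k$ grows with $n$, keeping $A$ absolute requires reading $c_p=2^p/p!$ and $C_P\le 2^{P+1}e^2$ off its proof, as the paper does.
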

\begin{remark}
The statement of Lemma~\ref{lem:hd}(i) is implicit in the first step of the
proof by replacing $f_u$ with arbitrary $h:\{\pm1\}^n\to [-1,1]$.
\end{remark}
\begin{proof}[Proof of Lemma~\ref{lem:hidden-direction}]
Let $u,v \in \{\pm 1\}^n$ arbitrary but fixed.
Since $\|\psi_{k,n}\|_\infty \le 1$, each $\mathcal{D}_u$ is a probability distribution. Assume $n_0 = n_0(c_0, \eta)$ large enough so that Lemmas~\ref{lem:dual-witness}, \ref{lem:linfty}, \ref{lem:krawtchouk} and \ref{lem:asymp} apply for all odd $n \ge n_0$, $m \le c_0 n^{1/2-\eta}$. Implicit constants below depend only on $c_0$ and $\eta$ unless stated otherwise.

\noindent (i) Let $R_u$ be the operator defined by $R_u g(x) := g(u\odot x)$, and set $\psi^{(u)} := R_u \psi_{k,n}$. Since $n$ is odd, $\operatorname{sign}(\langle u,x\rangle) = R_u \operatorname{Maj}_n(x)$, and $T_\rho$ commutes with $R_u$, so $f_u = R_u T_\rho \operatorname{Maj}_n$. Under $\mathcal{D}_u$, $X\sim U_n$ and $\mathbb E_{\mathcal{D}_u}[Y\mid X=x] = \psi^{(u)}(x)$, hence
\begin{align*}
\mathbb E_{\mathcal{D}_u}[Y f_u(X)]
&= \mathbb E_{X\sim U_n}\!\big[\,\mathbb E_{\mathcal{D}_u}[Y\mid X]\, f_u(X)\,\big] \\
&= \mathbb E_{X\sim U_n}\!\big[\,\psi^{(u)}(X)\, f_u(X)\,\big] \\
&= \langle \psi^{(u)},\, f_u\rangle_{U_n} \\
&= \langle R_u \psi_{k,n},\, R_u T_\rho \operatorname{Maj}_n\rangle_{U_n} \\
&= \langle \psi_{k,n},\, T_\rho \operatorname{Maj}_n\rangle_{U_n},
\end{align*}
where the last equality uses that $R_u$ is an isometry on $L^2(U_n)$. The desired lower bound follows from the proof of Lemma~\ref{lem:dual-witness} together with $\|\widetilde\Omega_{k,n}\|_\infty = O_{c_0,\eta}(1)$ from Lemma~\ref{lem:linfty}.

\noindent (ii) Set $r := \langle u,v\rangle / n \in [-1,1]$. A direct computation gives
\begin{align*}
\chi_{\mathcal{D}_0}(\mathcal{D}_u,\mathcal{D}_v)
&= \Bigl|\mathbb E_{\mathcal{D}_0}\!\Bigl[
   \Bigl(\tfrac{\mathcal{D}_u}{\mathcal{D}_0}-1\Bigr)
   \Bigl(\tfrac{\mathcal{D}_v}{\mathcal{D}_0}-1\Bigr)\Bigr]\Bigr| \\
&= \bigl|\mathbb E_{x\sim U_n,\,y\sim U_1}\!\bigl[y^2\,\psi^{(u)}(x)\,\psi^{(v)}(x)\bigr]\bigr| \\
&= \Bigl|\big\langle\psi^{(u)},\,\psi^{(v)}\big\rangle_{U_n}\Bigr| \\
&= \Bigl|\sum_{d=0}^n b_{d,n}^2\, \mathcal K_d^{(n)}(r)\Bigr| \\
&\le \sum_{d=2k+1}^n b_{d,n}^2\,|\mathcal K_d^{(n)}(r)|,
\tag{BB.1}\label{eq:chi-Krawtchouk}
\end{align*}
where the fourth line follows by expanding $\psi_{k,n}=\sum_{d=0}^n b_{d,n}\Psi_{d,n}$ in the normalized symmetric Fourier basis and applying \eqref{eq:A.32}.

We first record the coefficient moment bound in a quantitative form by showing that there is an absolute constant $A_{\mathrm{mom}} > 0$ such
that, for every integer $0\le q\le k+2$,
\[
\sum_{d \ge 1} b_{d,n}^2 d^{2q}
\le A_{\mathrm{mom}}^q k^{2q}. \tag{BB.2}\label{eq:moment-bound}
\]
For $q=0$, Parseval's identity and $\|\psi_{k,n}\|_\infty\le 1$ give 
$\sum_d b_{d,n}^2 \le \|\psi_{k,n}\|_2^2 \le 1$. For $q\ge 1$, split at $8k+9$: 
if $8k+9>n$ the tail vanishes, otherwise Lemma~\ref{lem:asymp} and 
$b_{8k+9,n}^2\le 1$ give $b_{8k+9+2j,n}^2\le (4/9)^j$ for all $j \ge 0$. Using the standard 
estimate $\sum_{j\ge 0}(4/9)^j(2j)^{2q}\le A_2^q q^{2q}$ for some $A_2>0$ gives
\begin{align*}
\sum_{d\ge 1} b_{d,n}^2\, d^{2q}
&\le (8k+9)^{2q} + \sum_{j\ge 0}\left(\frac 4 9\right)^j(8k+9+2j)^{2q} \\
&\le (8k+9)^{2q} + 2^{2q}\sum_{j\ge 0}\left(\frac 4 9\right)^j\bigl[(8k+9)^{2q}+(2j)^{2q}\bigr] \\
&\le \bigl(1+\tfrac{9}{5}\cdot 4^q\bigr)(8k+9)^{2q} + (4A_2)^q\, q^{2q} \\
&\le A_{\text{mom}}^q\, k^{2q}
\end{align*}
for $A_{\text{mom}}>0$ large enough, proving \eqref{eq:moment-bound}.

Split the Krawtchouk sum into degrees $d\le \sqrt n$ and $d>\sqrt n$. For $d>\sqrt n$, the trivial bound $|\mathcal K_d^{(n)}(r)|\le 1$ combined with \eqref{eq:moment-bound} at $q=k+1$ gives
\[
\sum_{d>\sqrt n} b_{d,n}^2 |\mathcal K_d^{(n)}(r)|
\le
\sum_{d>\sqrt n} b_{d,n}^2
\le
\frac{1}{n^{k+1}}
\sum_{d\ge 1} b_{d,n}^2 d^{2(k+1)}
\le
A_{\mathrm{mom}}^{k+1}\frac{k^{2(k+1)}}{n^{k+1}}.
\tag{BB.3}\label{eq:tail-bound}
\]
For $d\le\sqrt n$, set $x_d:=d^2/n\le1$. By Lemma~\ref{lem:krawtchouk} with $P=k$,
\[
|\mathcal K_d^{(n)}(r)|
\le
|r|^d
+
\sum_{p=1}^{k}
c_p\, x_d^p\,|r|^{\max\{0,d-2p\}}
+
\widehat{C}\, x_d^{k+1},
\tag{BB.4}\label{eq:krawtchouk-bound}
\]
where \(\widehat C:=C_k\) is the constant \(C_P\) from
Lemma~\ref{lem:krawtchouk} with \(P=k\). Since $\psi_{k,n}$ has Fourier support on odd degrees $d\ge 2k+1$ and $|r|\le\delta\le1$, we have $|r|^d\le \delta^{2k+1}$ and $|r|^{\max\{0,d-2p\}} \le \delta^{\max\{0,2k+1-2p\}}$ for $1\le p\le k$. Substituting \eqref{eq:krawtchouk-bound} into \eqref{eq:chi-Krawtchouk}, using Parseval's identity for the leading term, \eqref{eq:moment-bound} at $q=p$ for the correction terms, and \eqref{eq:moment-bound} at $q=k+1$ for the $\widehat{C}\,x_d^{k+1}$ tail yields
\[
\sum_{d\le\sqrt n} b_{d,n}^2|\mathcal K_d^{(n)}(r)|
\le
\delta^{2k+1}
+
\sum_{p=1}^{k}
C_p
\frac{k^{2p}}{n^p}
\delta^{\max\{0,2k+1-2p\}}
+
\widehat{C}\, A_{\mathrm{mom}}^{k+1}\,\frac{k^{2(k+1)}}{n^{k+1}}.
\tag{BB.5}\label{eq:core-bound}
\]
Here $C_p:=c_p A_{\mathrm{mom}}^p$. Since we readily see that the coefficients in Lemma~\ref{lem:krawtchouk} satisfy $c_p\le B^p$ and $\widehat{C}\le B^{k+1}$ for an absolute constant $B\ge 1$, we have $C_p\le (B A_{\mathrm{mom}})^p$. 

Combining \eqref{eq:tail-bound} with \eqref{eq:core-bound} and absorbing both \eqref{eq:tail-bound} and the $\widehat{C}\,A_{\mathrm{mom}}^{k+1} k^{2(k+1)}/n^{k+1}$ contribution of \eqref{eq:core-bound} into a single $p=k+1$ term gives
\[
\chi_{\mathcal{D}_0}(\mathcal{D}_u,\mathcal{D}_v)
\le
\delta^{2k+1}
+
\sum_{p=1}^{k+1}
C_p
\frac{k^{2p}}{n^p}
\delta^{\max\{0,2k+1-2p\}},
\]
the desired bound, with $C_{k+1}:=(\widehat{C}+1)\,A_{\mathrm{mom}}^{k+1}\le 2 B^{k+1} A_{\mathrm{mom}}^{k+1}$. Setting $A:=2 B A_{\mathrm{mom}}$, the coefficients satisfy $C_p\le A^p$ for $1\le p\le k+1$.

\noindent (iii) Let $0<\delta<1$. Choose
$M=\lfloor \exp(c\delta^2 n)\rfloor$
with $c<1/4$. If $M\le 1$, the claim is trivial. Otherwise sample
$u^{(1)},\dots,u^{(M)}$
independently and uniformly from $\{\pm 1\}^n$. For each pair $a\neq b$, it holds
\[
\langle u^{(a)},u^{(b)}\rangle
=
\sum_{i=1}^n \varepsilon_i,
\]
where the $\varepsilon_i$ are independent Rademacher variables. Hence Hoeffding's inequality gives
\[
\Pr\bigl(|\langle u^{(a)},u^{(b)}\rangle|>\delta n\bigr)
\le 2e^{-n\delta^2/2}.
\]
By a union bound,
\[
\Pr\bigl(\exists a\neq b: |\langle u^{(a)},u^{(b)}\rangle|>\delta n\bigr)
\le M^2 e^{-n\delta^2/2}
\le \exp\bigl((2c-1/2)n\delta^2\bigr) < 1.
\]
Therefore some choice of the sampled vectors satisfies the desired pairwise correlation bound. Since $\delta<1$, no duplicate vectors can occur.
\end{proof}
\subsection{Proof of Theorems 4.3 and 1.3}
\begin{theorem}[Hidden-direction SQ lower bound]
\label{thm:hidden-direction-SQ}
Fix constants $c_0 > 0$, $0 < \eta_0 < 1/2$, and $\varepsilon > 0$.
There exists $n_0 = n_0(c_0, \eta_0) \ge 1$ such that the following holds
for every odd $n \ge n_0$ and every integer $m \ge 1$ with
$m \le c_0\, n^{1/2 - \eta_0}$.

Set $k := \lfloor (m+1)/2 \rfloor$ and $\delta := n^{-1/4}$, and let
$\mathcal{D}_0$, $\mathcal{U}$, and $\{\mathcal{D}_u\}_{u \in \mathcal{U}}$
be the family from Lemma~\ref{lem:hidden-direction}, constructed with
witness $\psi_{k,n}$ and these values of $k$ and $\delta$.  Define
\[
  \bar\gamma_n
  \;:=\;
  \delta^{2k+1}
  + \sum_{p=1}^{k+1} C_p\, \frac{k^{2p}}{n^p}\,
                     \delta^{\max\{0,\, 2k+1-2p\}},
\]
which is the bound from Lemma~\ref{lem:hidden-direction}(ii) at these
parameters.  Let $Z$ be the search problem on
$\{\mathcal{D}_u\}_{u \in \mathcal{U}}$ whose valid solutions for
$\mathcal{D}_u$ are hypotheses $h : \{\pm 1\}^n \to [-1,1]$ satisfying
$\mathbb{E}_{\mathcal{D}_u}[Y\,h(X)] \ge \varepsilon$.

Assume that $\bar\gamma_n \le \varepsilon^2/2$ and that $Z$ is total
(i.e., a valid solution exists for every $u \in \mathcal{U}$).
Then any randomized SQ algorithm solving $Z$ with success probability
at least $2/3$ must make at least $2^{\Omega(\sqrt{n})}$ queries to
$\mathrm{VSTAT}\bigl(1/(6\bar\gamma_n)\bigr)$.  Equivalently, any
successful STAT algorithm must either make $2^{\Omega(\sqrt{n})}$
queries or make at least one query of tolerance
$\tau < \sqrt{6\bar\gamma_n}$.
\end{theorem}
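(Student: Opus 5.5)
We apply Theorem~\ref{thm:sq-lb} with the reference distribution $\mathcal{D}_0$ and the family $\mathscr{D}_{\mathcal{D}_0}:=\{\mathcal{D}_u\}_{u\in\mathcal U}$. We set $\bar\gamma:=2\bar\gamma_n$, so the oracle there is $\mathrm{VSTAT}(1/(3\bar\gamma))=\mathrm{VSTAT}(1/(6\bar\gamma_n))$. We also fix a constant solution-set bound $\eta$, say $\eta=1/3$. With success probability $\alpha=2/3$, the query lower bound becomes $\frac{\alpha-\eta}{1-\eta}\,d=d/2$. It therefore suffices to show $\operatorname{SDA}(Z,2\bar\gamma_n,1/3)\ge 2^{\Omega(\sqrt n)}$. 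By Lemma~\ref{lem:hidden-direction}(iii) with $\delta=n^{-1/4}$, we have $|\mathcal U|\ge\lfloor\exp(c\sqrt n)\rfloor$. By Lemma~\ref{lem:hidden-direction}(ii), every distinct pair satisfies $\chi_{\mathcal D_0}(\mathcal D_u,\mathcal D_v)\le\bar\gamma_n$. For the diagonal, Parseval and $\|\psi_{k,n}\|_\infty\le1$ give $\chi_{\mathcal D_0}(\mathcal D_u,\mathcal D_u)=\|\psi^{(u)}\|_2^2\le1$.

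\textbf{Step 1: few distributions per solution.} Fix $h:\{\pm1\}^n\to[-1,1]$ and let $\mathcal S:=\{u\in\mathcal U: \langle h,\psi^{(u)}\rangle\ge\varepsilon\}$, using Lemma~\ref{lem:hd}(i). Applying Cauchy--Schwarz to $\sum_{u\in\mathcal S}\psi^{(u)}$ gives
\[
|\mathcal S|\,\varepsilon\le\Bigl\langle h,\sum_{u\in\mathcal S}\psi^{(u)}\Bigr\rangle\le\Bigl\|\sum_{u\in\mathcal S}\psi^{(u)}\Bigr\|_2\le\bigl(|\mathcal S|+|\mathcal S|^2\bar\gamma_n\bigr)^{1/2}.
\]
Hence $|\mathcal S|^2\varepsilon^2\le|\mathcal S|+|\mathcal S|^2\varepsilon^2/2$, so $|\mathcal S|\le 2/\varepsilon^2$. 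Here we use the hypothesis $\bar\gamma_n\le\varepsilon^2/2$. We have $\varepsilon^{-2}\le 1/(2\bar\gamma_n)$, and $\bar\gamma_n\ge\delta^{2k+1}=n^{-(2k+1)/4}\ge\exp(-O(n^{1/2-\eta_0}\log n))$. It follows that $|\mathcal S|\le 1/\bar\gamma_n=2^{o(\sqrt n)}\le|\mathcal U|/3$ for $n\ge n_0$. Thus $\mathscr D_h:=\mathscr D_{\mathcal D_0}\setminus Z_h$ has size at least $(1-\eta)|\mathcal U|$. Totality of $Z$ is needed only so that $Z$ is a legitimate search problem.

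\textbf{Step 2: SDA of every large subfamily.} Take any $\mathscr D'\subseteq\mathscr D_h$ with $|\mathscr D'|=s$. Its average correlation is at most
\[
\frac{1}{s^2}\bigl(s\cdot1+s^2\bar\gamma_n\bigr)=\frac1s+\bar\gamma_n,
\]
which is at most $2\bar\gamma_n$ whenever $s\ge 1/\bar\gamma_n$. So $\operatorname{SDA}(\mathscr D_h,\mathcal D_0,2\bar\gamma_n)\ge|\mathscr D_h|\bar\gamma_n\ge\tfrac23|\mathcal U|\bar\gamma_n$. Since $|\mathcal U|\ge e^{c\sqrt n}$ and $\bar\gamma_n\ge e^{-O(n^{1/2-\eta_0}\log n)}$, this quantity is $2^{\Omega(\sqrt n)}$ once $n\ge n_0(c_0,\eta_0)$. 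Theorem~\ref{thm:sq-lb} then yields the VSTAT statement.

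\textbf{Step 3: STAT form.} For the STAT version we use the standard simulation. A $\mathrm{VSTAT}(t)$ query (after rescaling $[-1,1]$-valued queries to $[0,1]$) is answered by a $\mathrm{STAT}(\tau)$ oracle with $\tau\ge\sqrt{6\bar\gamma_n}\ge 1/\sqrt t$ only if it is no stronger. More precisely, any STAT algorithm with all tolerances $\tau\ge\sqrt{6\bar\gamma_n}$ can be simulated by $\mathrm{VSTAT}(1/(6\bar\gamma_n))$, because VSTAT always answers within $1/\sqrt t$. So such an algorithm needs $2^{\Omega(\sqrt n)}$ queries.

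\textbf{Main obstacle.} The delicate point is the quantitative comparison in Steps 1--2. We need $1/\bar\gamma_n\ll|\mathcal U|$, which requires the lower bound on $\bar\gamma_n$ from its first term $\delta^{2k+1}$, together with $m\le c_0n^{1/2-\eta_0}$ so that $k\log n=o(\sqrt n)$. We also need to handle the $\pm1$ versus $[0,1]$ normalisation of queries in the VSTAT-to-STAT conversion carefully, up to constants.
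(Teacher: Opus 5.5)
Your proof is correct and follows the paper's argument step for step: the same Cauchy--Schwarz bound $|S_h|\le 2/\varepsilon^2$ from $\bar\gamma_n\le\varepsilon^2/2$, the same estimate $\gamma(\mathscr D',\mathcal D_0)\le \bar\gamma_n+1/s\le 2\bar\gamma_n$ for subfamilies of size $s\ge 1/\bar\gamma_n$, and the same comparison $|\mathcal U|\,\bar\gamma_n\ge e^{c\sqrt n}\,n^{-(2k+1)/4}=2^{\Omega(\sqrt n)}$, fed into Theorem~\ref{thm:sq-lb}. The only differences are cosmetic: you fix $\eta=1/3$ where the paper uses the vanishing $\eta_{\mathrm{sl}}=2/(|\mathcal U|\varepsilon^2)$, and your $[-1,1]$-to-$[0,1]$ rescaling worry costs nothing, because $\sqrt{p(1-p)/t}\le 1/(2\sqrt t)$ already makes $2v-1$ accurate to within $1/\sqrt t$ once $t\ge 4$.
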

    \begin{remark}
Without assuming totality, $\mathrm{SDA}(Z,\cdot,\cdot)$ is undefined and 
Theorem~\ref{thm:sq-lb} does not apply. In our application, $f_u$ 
witnesses totality,  provided
\(
\varepsilon
\le
C_{c_0,\eta_0}{\rho^{2m+1}}/{\sqrt m}.
\)
\end{remark}
\begin{proof}[Proof of Theorem~\ref{thm:hidden-direction-SQ}]
Set $N := |\mathcal U|$. Lemma~\ref{lem:hidden-direction}(iii) with $\delta = n^{-1/4}$ gives
\begin{equation}
  N \ge \exp(c\sqrt n) \label{eq:packing-size}
\end{equation}
for some $c > 0$ and all $n$ larger than an absolute constant.

For $u \in \mathcal{U}$, the likelihood ratio $L_u(x,y) := 1 + y\psi^{(u)}(x)$, with $\psi^{(u)}(x) := \psi_{k,n}(u\odot x)$, has $R_u := L_u - 1$ satisfying $\chi_{\mathcal D_0}(\mathcal D_u, \mathcal D_v) = |\langle R_u, R_v\rangle_{\mathcal D_0}|$, and $\|R_u\|_{L^2(\mathcal D_0)}^2 \le 1$ from $\|\psi_{k,n}\|_\infty \le 1$.

Fix $h:\{\pm1\}^n \to [-1,1]$, set $q_h(x,y) := y h(x)$, and let $S_h := \{u \in \mathcal{U} : h \in Z_{\mathcal D_u}\}$. To avoid trivialities, assume $S_h \neq \emptyset$. Since $\mathcal D_0$ is uniform, $\mathbb E_{\mathcal D_0}[q_h] = 0$ and $\|q_h\|^2 \le 1$; change of measure gives $\eps \le \mathbb E_{\mathcal D_u}[Y h(X)] = \langle q_h, R_u\rangle_{\mathcal D_0}$ for every $u \in S_h$. By the Cauchy--Schwarz inequality and Lemma~\ref{lem:hidden-direction}(ii), it holds
\begin{align*}
  \varepsilon^2
  &\;\le\; \Bigl\|\tfrac{1}{|S_h|}\sum_{u\in S_h} R_u\Bigr\|_{L^2(\mathcal{D}_0)}^2 \\
  &\;=\; \frac{1}{|S_h|^2}\sum_{u,v\in S_h}\langle R_u, R_v\rangle \\
  &\;=\; \frac{1}{|S_h|^2}\Bigl(\sum_{u\in S_h}\|R_u\|^2 \;+\; \sum_{\substack{u,v\in S_h\\ u\neq v}}\langle R_u, R_v\rangle\Bigr) \\
  &\;\le\; \frac{1}{|S_h|^2}\bigl(|S_h| \;+\; |S_h|(|S_h|-1)\,\bar\gamma_n\bigr) \\
  &\;=\; \bar\gamma_n + \frac{1-\bar\gamma_n}{|S_h|},
\end{align*}
so $|S_h| \le 2/\varepsilon^2$ since $\bar\gamma_n \le \varepsilon^2/2$. Hence $\mathscr D_h := \{\mathcal D_u : h \notin Z_{\mathcal D_u}\}$ has $|\mathscr D_h| = N - |S_h| \ge N - 2/\varepsilon^2 = (1-\eta_{\mathrm{sl}})N$, with $\eta_{\mathrm{sl}} := 2/(N\varepsilon^2)$, and from \eqref{eq:packing-size} together with $\bar\gamma_n \ge \delta^{2k+1} = n^{-(2k+1)/4}$,
\[
  \eta_{\mathrm{sl}} \;\le\; \frac{e^{-c\sqrt n}}{\bar\gamma_n} \;\le\; \exp\!\Big(-c\sqrt n + \tfrac{2k+1}{4}\log n\Big) \;=\; o(1)
\]
since $k\log n = o(\sqrt n)$. Analogously, for any $\mathscr A \subseteq \mathscr D_h$ of size $t \ge (1-\bar\gamma_n)/\bar\gamma_n$ it holds
\[
  \gamma(\mathscr A, \mathcal D_0)
  \;=\; \frac{1}{t^2}\sum_{u,v\in\mathscr A}|\langle R_u, R_v\rangle|
  \;\le\; \bar\gamma_n + \frac{1-\bar\gamma_n}{t},
\]
so by definition
\[
  \mathrm{SDA}(Z, 2\bar\gamma_n, \eta_{\mathrm{sl}}) \;\ge\; (1-\eta_{\mathrm{sl}})N\,\frac{\bar\gamma_n}{1-\bar\gamma_n}.
\]
By Theorem~\ref{thm:sq-lb} (applicable since $Z$ is total), any SQ algorithm solving $Z$ with success probability $2/3$ requires
\[
  Q \;\ge\; (\tfrac23 - \eta_{\mathrm{sl}})\, N\, \frac{\bar\gamma_n}{1-\bar\gamma_n} \;=\; \Omega(N\bar\gamma_n)
\]
queries to $\mathrm{VSTAT}(1/(6\bar\gamma_n))$, and $N\bar\gamma_n \ge \exp(c\sqrt n - O(k\log n)) = 2^{\Omega(\sqrt n)}$ by \eqref{eq:packing-size} and $\bar\gamma_n \ge n^{-(2k+1)/4}$.
\end{proof}

The main result is now a corollary.

\begin{theorem}\label{cor:smoothed-agnostic-sq}
Fix $\sigma\in(0,0.499]$, $\varepsilon\in(0,1/4]$, and constants $c_{\mathrm{hd}}>0$, $0<\eta_0<1/2$ for the hidden-direction construction. There exists a constant $a_0=a_0(c_{\mathrm{hd}},\eta_0)>0$ such that the following holds.
Set\[\
m:=\left\lfloor a_0\,\frac{\log(1+\sigma/\varepsilon^2)}{\sigma}\right\rfloor.
\] Assume $m\ge 1$ and $m\le c_{\mathrm{hd}} n^{1/2-\eta_0}$. Then there exists $n_0=n_0(c_{\mathrm{hd}},\eta_0) \ge 1$ such that for every odd $n\ge n_0$, any randomized SQ algorithm that, given access to an unknown $\mathcal D_u$ for $u \in \mathcal{U}$ as defined in Lemma~\ref{lem:hidden-direction}, outputs with probability at least $2/3$ a Boolean hypothesis $h:\{\pm1\}^n\to\{\pm1\}$ satisfying
\(
\err_{\mathcal D_u}(h)\le \opt_{\sigma,\mathcal D_u}+\varepsilon
\)
must either make $2^{\Omega(\sqrt n)}$ STAT queries or use a query of tolerance
\(
\tau\le n^{-\Omega(m)}=n^{-\Omega\left(\log(1+\sigma/\varepsilon^2)/\sigma\right)}.
\)
\end{theorem}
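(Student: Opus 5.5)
The plan is to follow the proof sketch of Theorem~1.3 in the technical overview and reduce to Theorem~\ref{thm:hidden-direction-SQ}, applied with correlation threshold $2\varepsilon$ in place of $\varepsilon$. The quantitative work is choosing $a_0$ so that three requirements hold simultaneously. The first is that the witness correlation $\kappa_m:=\langle T_\rho\mathrm{Maj}_n,\psi_{k,n}\rangle_{U_n}$ is at least $4\varepsilon$. The second is that the search problem is total. The third is that $\bar\gamma_n\le(2\varepsilon)^2/2=2\varepsilon^2$. The tolerance bound $\sqrt{6\bar\gamma_n}$ must then be converted into $n^{-\Omega(m)}$.

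\emph{Step 1 (witness correlation).} By Lemma~\ref{lem:hidden-direction}(i), $\kappa_m\ge K\rho^{2m+1}/\sqrt m$ with $K=K_{c_{\mathrm{hd}},\eta_0}$. Since $\sigma\le 0.499$, we have $\rho=1-2\sigma\ge 0.002$ and $\log(1/\rho)\le C\sigma$ for an absolute constant $C$. Hence $\rho^{2m+1}\ge \rho\, e^{-2Cm\sigma}\ge \rho\,(1+\sigma/\varepsilon^2)^{-2Ca_0}$. It remains to show that this, divided by $\sqrt m$, dominates $4\varepsilon$.

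The $\sqrt m$ factor is the annoying part. One has $\sqrt m\le \sqrt{a_0\log(1+\sigma/\varepsilon^2)/\sigma}$, and $1/\sqrt\sigma$ is not controlled by the other factors. The fix is to exploit that $m\ge1$ forces $\log(1+\sigma/\varepsilon^2)\ge\sigma/a_0$, and to write $x:=\sigma/\varepsilon^2$. The condition then becomes roughly
\[
\varepsilon\sqrt m\lesssim \varepsilon\sqrt{a_0\log(1+x)/\sigma}=\sqrt{a_0\log(1+x)/x}.
\]
So it suffices that $\sqrt{a_0\log(1+x)/x}\,(1+x)^{2Ca_0}\le K\rho/4$. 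For $a_0$ small (say $2Ca_0\le 1/4$), the left side is at most $\sqrt{a_0}\,\sup_x \sqrt{\log(1+x)/x}\,(1+x)^{1/4}$. For large $x$ this supremum diverges like $x^{-1/4}\sqrt{\log x}$ times $x^{1/4}$, so the exponent has to be taken smaller: use $2Ca_0\le 1/8$, which makes the expression bounded for all $x>0$. Then $\sqrt{a_0}$ small closes the argument. This gives $\kappa_m\ge4\varepsilon$, and the chain in the proof sketch of Theorem~1.3 shows that any learner meeting $\err\le\opt_\sigma+\varepsilon$ achieves $\mathbb E_{\mathcal D_u}[Yh(X)]\ge 2\varepsilon$. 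That chain uses identity (S) and $g_u$ as the comparison halfspace, together with $\mathbb E[Yh]=1-2\err(h)$ for Boolean $h$. It also shows that $f_u$ witnesses totality, since $\kappa_m\ge 4\varepsilon\ge2\varepsilon$.

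\emph{Step 2 (correlation bound).} With $\delta=n^{-1/4}$ and $k\le m\le c_{\mathrm{hd}}n^{1/2-\eta_0}$, each term of $\bar\gamma_n$ satisfies $C_p k^{2p}n^{-p}\delta^{2k+1-2p}\le A^p (k^2/n)^p n^{-(2k+1-2p)/4}$. Because $k^2/n\le n^{-2\eta_0}$, every term is $n^{-\Omega(k)}$, including the $p=k+1$ term $\le (A k^2/n)^{k+1}$. Summing the $k+1$ terms gives $\bar\gamma_n\le n^{-c'k}$ for large $n$, with $c'$ depending on $\eta_0$. To get $\bar\gamma_n\le 2\varepsilon^2$, note that Step 1 gives $\varepsilon\gtrsim\rho^{2m+1}/\sqrt m\ge e^{-O(m)}$, whereas $n^{-c'k}=e^{-\Omega(m\log n)}$. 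So for $n\ge n_0$ the inequality holds.

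\emph{Step 3 (conclusion).} Theorem~\ref{thm:hidden-direction-SQ} then yields either $2^{\Omega(\sqrt n)}$ queries or a tolerance $\tau<\sqrt{6\bar\gamma_n}\le n^{-\Omega(k)}=n^{-\Omega(m)}$, using $k\ge m/2$. Substituting the definition of $m$ gives the claim.

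I expect the main obstacle to be Step 1: controlling the $1/\sqrt m$ loss uniformly in $\sigma$ and $\varepsilon$ through the substitution $x=\sigma/\varepsilon^2$, including the constraint from $m\ge1$. The remaining steps are bookkeeping.
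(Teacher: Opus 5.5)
\textbf{Step~1 and the overall structure.} Your reduction matches the paper's proof: threshold $2\varepsilon$, totality via $f_u$, $\bar\gamma_n=n^{-\Omega(k)}$, and $k\ge m/2$. Your Step~1 is a correct and more elementary replacement for the paper's Lambert-$W$ inversion. With $x=\sigma/\varepsilon^2$ you do get $\varepsilon\sqrt m\le\sqrt{a_0\log(1+x)/x}$ and $\rho^{2m}\ge(1+x)^{-2Ca_0}$. Since $\sup_{x>0}\sqrt{\log(1+x)/x}\,(1+x)^{\beta}<\infty$ for every $\beta<1/2$, a choice $a_0\lesssim (K\rho)^2$ with $\rho\ge 0.002$ suffices.

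Two harmless slips there:
\begin{itemize}
\item The exponent $1/4$ does not make the supremum diverge, because $\sqrt{\log(1+x)/x}\sim x^{-1/2}\sqrt{\log x}$.
\item The appeal to $m\ge1$ is never used.
\end{itemize}

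\textbf{The gap is in Step~2.} To get $\bar\gamma_n\le 2\varepsilon^2$ you need a \emph{lower} bound $\varepsilon\ge e^{-O(m)}$. You claim Step~1 gives $\varepsilon\gtrsim\rho^{2m+1}/\sqrt m$, but Step~1 proves $K\rho^{2m+1}/\sqrt m\ge 4\varepsilon$, which is the opposite inequality. Your claim would follow if $m$ were the \emph{largest} integer with $\kappa_m\ge4\varepsilon$, as in the overview sketch. Here, however, $m=\lfloor a_0\log(1+\sigma/\varepsilon^2)/\sigma\rfloor$ with $a_0$ small, and for this $m$ the claim is false. Your own Step~1 estimate gives $\rho^{2m+1}\ge\rho(1+\sigma/\varepsilon^2)^{-1/8}$, which for fixed $\sigma$ is of order $\varepsilon^{1/4}$. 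Meanwhile $\sqrt m=O(\sqrt{\log(1/\varepsilon)})$. Hence $\rho^{2m+1}/\sqrt m\gg\varepsilon$ as $\varepsilon\to0$.

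\textbf{The fix.} The missing ingredient, which the paper uses, comes from the definition of $m$ itself.
\begin{itemize}
\item The floor together with $m\ge1$ gives $m\ge\frac{a_0}{2}\log(1+\sigma/\varepsilon^2)/\sigma$.
\item Since $\log(1+x)/x$ is decreasing, $\log(1+\sigma/\varepsilon^2)/\sigma\ge\log(1+0.499/\varepsilon^2)/0.499\gtrsim\log(1/\varepsilon)$ for $\varepsilon\le 1/4$.
\item Hence $\log(1/\varepsilon)\le C_m m$ with $C_m=O(1/a_0)$.
\item Then $\bar\gamma_n\le n^{-c'm/2}\le 2\varepsilon^2$ as soon as $c'\log n\gtrsim C_m$. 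This holds for $n\ge n_0(c_{\mathrm{hd}},\eta_0)$, uniformly in $m$ and $\varepsilon$.
\end{itemize}
With this replacement your argument goes through.
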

\begin{proof}
Set
\[
\kappa_m:=K_{c_{\mathrm{hd}},\eta_0}\frac{\rho^{2m+1}}{\sqrt m},
\]
where $K_{c_{\mathrm{hd}},\eta_0}>0$ is the constant from Lemma~\ref{lem:hidden-direction}(i). 

We first show that $a_0$ can be chosen small enough that $\kappa_m\ge 4\varepsilon$. Setting $L:=-\log\rho=-\log(1-2\sigma)$, observe that
\begin{align*}
\kappa_m\ge 4\varepsilon
&\iff
\frac{K_{c_{\mathrm{hd}},\eta_0}\,e^{-(2m+1)L}}{\sqrt m}\ge 4\varepsilon \\
&\iff
m\,e^{2(2m+1)L}\le \frac{K_{c_{\mathrm{hd}},\eta_0}^2}{16\varepsilon^2} \\
&\iff
u\,e^u\le \frac{L\,K_{c_{\mathrm{hd}},\eta_0}^2\,e^{-2L}}{4\varepsilon^2}
\end{align*}
with $u:=4mL$. Inverting via the principal branch of the Lambert $W$-function, the largest admissible $m$ is
\[
m_{\max}
=
\frac{1}{4L}\,W\!\left(\frac{L\,K_{c_{\mathrm{hd}},\eta_0}^2\,e^{-2L}}{4\varepsilon^2}\right).
\]
Since $\sigma\in(0,0.499]$, we have $L=\Theta(\sigma)$ and $e^{-2L}=\Theta(1)$ with absolute constants, and $W(z)=\Theta(\log(1+z))$ for $z\ge 0$, so
\[
m_{\max}=\Theta\!\left(\frac{\log(1+\sigma/\varepsilon^2)}{\sigma}\right).
\]
Choosing $a_0>0$ sufficiently small ensures $m\le m_{\max}$, and hence $\kappa_m\ge 4\varepsilon$.

We reduce smoothed agnostic learning to the search problem $Z$ of Theorem~\ref{thm:hidden-direction-SQ}. Let $g_u(x):=\operatorname{sign}(\langle u,x\rangle)$ and $f_u:=T_\rho g_u$. Taking $g_u$ as a candidate in the smoothed benchmark and applying Lemma~\ref{lem:hidden-direction}(i), it holds
\[ \opt_{\sigma,\mathcal D_u}
\;\le\; \tfrac12\bigl(1-\mathbb E_{\mathcal D_u}[Y f_u(X)]\bigr)
\;\le\; \tfrac12(1-\kappa_m). \]
If $h:\{\pm1\}^n\to\{\pm1\}$ satisfies $\err_{\mathcal D_u}(h)\le\opt_{\sigma,\mathcal D_u}+\varepsilon$, then
\[
\mathbb E_{\mathcal D_u}[Yh(X)]
= 1-2\err_{\mathcal D_u}(h)
\ge \kappa_m-2\varepsilon
\ge 2\varepsilon,
\]
so $h$ solves the hidden-direction search problem at correlation threshold $2\varepsilon$.

We now apply Theorem~\ref{thm:hidden-direction-SQ} for $
Z$ at threshold $2\varepsilon$. Totality holds because $f_u=T_\rho g_u$ satisfies
\[
  \mathbb E_{\mathcal D_u}[Yf_u(X)]\ge \kappa_m\ge4\varepsilon\ge2\varepsilon,
\]
and we will show below that $\bar\gamma_n\le C_\gamma n^{-c_\gamma m} \le 2\eps^2$ for constants $C_\gamma,c_\gamma>0$ depending only on $c_{\mathrm{hd}}$ and $\eta_0$ and all $n \ge n_0$ for some index depending on $a_0$ and $C_\gamma$.
Therefore any randomized SQ algorithm achieving
$\err_{\mathcal D_u}(h)\le \opt_{\sigma,\mathcal D_u}+\varepsilon$ with
probability at least $2/3$ must either make $2^{\Omega(\sqrt n)}$ queries
or use a STAT query of tolerance
\[
  \tau < \sqrt{6\bar\gamma_n}
  \le \sqrt{6C_\gamma}\,n^{-c_\gamma m/2}
  =
  n^{-\Omega(m)}.
\]
Since
\[
  m=\Theta\!\left(\frac{\log(1+\sigma/\varepsilon^2)}{\sigma}\right),
\]
the claimed tolerance lower bound follows.

It remains to verify $\bar{\gamma}_n \le 2\eps^2.$
Let $A>0$ be the absolute constant from Lemma~\ref{lem:hidden-direction}(ii)
such that $C_p\le A^p$ for all $p$. Set 
$\alpha_0:=\min\{2\eta_0,1/2\}$ and $B:=\max\{1,Ac_{\text{hd}}^2\}$. Since 
$k\le m\le c_{\text{hd}} n^{1/2-\eta_0}$, we have $k^2/n\le c_{\text{hd}}^2 n^{-2\eta_0}$, and 
Lemma~\ref{lem:hidden-direction}(ii) with $\delta=n^{-1/4}$ gives
\begin{align*}
\bar\gamma_n
&\le
n^{-(2k+1)/4}
+
\sum_{p=1}^{k+1}
A^p\frac{k^{2p}}{n^p}
n^{-\frac14\max\{0,2k+1-2p\}} \\
&\le
n^{-(2k+1)/4}
+
\sum_{p=1}^{k+1}
B^p
n^{-2\eta_0 p-\frac14\max\{0,2k+1-2p\}} \\
&\le
n^{-(2k+1)/4}
+
\sum_{p=1}^{k+1}
B^p n^{-\alpha_0 k} \\
&\le
n^{-(2k+1)/4}
+
(k+1)B^{k+1}n^{-\alpha_0 k}
\;=\; n^{-\Omega(k)} \;=\; n^{-\Omega(m)},
\end{align*}
where the third line uses $2\eta_0 p+\tfrac14\max\{0,2k+1-2p\} \ge \alpha_0 k$ 
for every $1\le p\le k+1$, and the last equality uses $k=\Theta(m)$. By the choice of $m$ with $\sigma\le0.499$ and $\varepsilon\le1/4$, $\log(1/\varepsilon)\le C_m m$ for some $C_m=C_m(a_0)>0$. The preceding estimate gives constants $C_\gamma, c_\gamma > 0$ such that, choosing $n_0 \ge 1$ large enough that $c_\gamma\log n_0\ge 2C_m+\max\{\log(C_\gamma/2),0\}$, it holds
\[
\bar\gamma_n\;\le\;C_\gamma e^{-c_\gamma m\log n}\;\le\;2e^{-2\log(1/\varepsilon)}\;=\; 2{\varepsilon^2}\]
for all $n\ge n_0$.
\end{proof}

\subsection{Proof of Theorem~\ref{thm:smoothed-upperbound}}
\begin{proof}[Proof of Theorem~\ref{thm:smoothed-upperbound}]
We use the $L^1$-polynomial regression algorithm~\parencite[Theorem~5]{KKMS08}. For this, it suffices to show that for some $d = O(\log(1/\varepsilon)/\sigma)$, every $f\in\mathcal C$ admits a degree-$d$ polynomial $p_f$ with $\|T_\rho f - p_f\|_1 < \varepsilon/2$, where $\rho := 1-2\sigma$. We show this by taking $p_g$ to be the degree-$d$ part of $T_\rho f$ and bounding the $L^2$ error.

Write $f=\sum_{k=0}^n f^{=k}$ in Fourier levels, so $T_\rho f = \sum_k\rho^k f^{=k}$, and let $p_f:=\sum_{k\le d}\rho^k f^{=k}$. By Parseval's identity, $\sum_k\|f^{=k}\|_2^2 = \|f\|_2^2 \le 1$, so
\[
  \|T_\rho f - p_f\|_2^2
  \;=\; \sum_{k>d}\rho^{2k}\|f^{=k}\|_2^2
  \;\le\; \rho^{2(d+1)}.
\]
Since $\rho = 1-2\sigma \le e^{-2\sigma}$, taking $d$ at least a sufficiently large constant multiple of $\log(1/\varepsilon)/\sigma$ gives $\|T_\rho f - p_f\|_1 \le \|T_\rho f - p_f\|_2 < \varepsilon/2$. As $\mathcal P_{\le d}$ has dimension $n^{O(d)}$, the algorithm runs in $\mathrm{poly}(n^{O(\log(1/\varepsilon)/\sigma)},\log(1/\delta))$ time and sample complexity.
\end{proof}


\end{document}